\documentclass{article}
\usepackage[utf8]{inputenc}

\usepackage{amsmath}
\usepackage{amsthm}
\usepackage{amsfonts}
\usepackage{amssymb}
\usepackage{graphicx}
\usepackage{mathrsfs}
\usepackage{enumerate}
\usepackage{a4wide}
\usepackage{bm, xcolor}
\usepackage{bbm}
\usepackage{soul}
\usepackage{natbib}
\usepackage[bookmarks=false]{hyperref}

\newcommand{\DD}{{\mathcal D}}
\newcommand{\EE}{{\mathbb E}}
\newcommand{\NN}{{\mathbb N}}
\newcommand{\RR}{{\mathbb R}}
\newcommand{\XX}{{\mathcal X}}
\newcommand{\YY}{{\mathcal Y}}
\newcommand{\ZZ}{{\mathcal Z}}
\newcommand{\Ho}{\ell^1}
\newcommand{\BB}{\mathcal B}
\newcommand{\HH}{\mathcal H}
\newcommand{\Ht}{\mathcal H}
\newcommand{\Hu}{\mathcal{H}_\mu}
\newcommand{\LL}{L^2(\XX,\mu;\YY)}
\def\PP{\mathcal P}
\def\pp{\mathfrak q}
\def\xx{\mathbf x}
\def\zz{\mathbf z}
\def\yy{\mathbf y}
\newcommand{\kk}{{\mathbf{k}}}

\newcommand{\fr}{f^\dagger}
\newcommand{\la}{\lambda}

\newcommand{\ella}{L_A}
\newcommand{\fz}{f_{\zz,\la}}
\newcommand{\fp}{f_\rho}
\newcommand{\gp}{g_\rho}
\newcommand{\sx}{S_\xx}
\newcommand{\tx}{T_\xx}
\newcommand{\ip }{S_\mu}
\newcommand{\tp}{T_\mu}
\newcommand{\VE}{\boldsymbol{\varepsilon}}
\newcommand{\tr}{\operatorname{tr}}

\newcommand{\abs}[1]{{\left\lvert{#1}\right\rvert}}
\newcommand{\paren}[1]{\left(#1\right)}
\newcommand{\brac}[1]{\left\{#1\right\}}
\newcommand{\sbrac}[1]{\left[#1\right]}
\newcommand{\norm}[1]{{\left\lVert{#1}\right\rVert}}
\newcommand{\inner}[1]{\left\langle#1\right\rangle}
\newcommand{\argmin}{\operatorname*{arg\,min}}
\newcommand{\err}{\operatorname*{err}}
\newcommand{\errone}{\operatorname*{err_1}}
\newcommand{\errtwo}{\operatorname*{err_2}}

\newtheorem{theorem}{Theorem}[section]
\newtheorem{lemma}[theorem]{Lemma}
\newtheorem{corollary}[theorem]{Corollary}
\newtheorem{proposition}[theorem]{Proposition}
\theoremstyle{definition}
\newtheorem{definition}[theorem]{Definition}
\newtheorem{assumption}{Assumption}
\theoremstyle{remark}
\newtheorem{remark}[theorem]{Remark}

\title{Statistical inverse learning and $\Ho$-regularization}
\date{}

\author{
Abhishake\thanks{
Department of Computational Engineering,
LUT University,
Lappeenranta, Finland.
\texttt{abhimaths88@gmail.com}}
\and
Tatiana A.~Bubba\thanks{
Department of Mathematics and Computer Science,
University of Ferrara,
Ferrara, Italy.
\texttt{tatiana.bubba@unife.it}}
\and
Tapio Helin\thanks{
Department of Computational Engineering,
LUT University,
Lappeenranta, Finland.
\texttt{tapio.helin@lut.fi}}
\and
Luca Ratti\thanks{
Department of Mathematics and Computer Science,
University of Ferrara,
Ferrara, Italy.
\texttt{luca.ratti5@unibo.it}}
}

\begin{document}
\maketitle

\begin{abstract}
We study the problem of recovering a sparse function from finite, noisy, and indirect observations in the framework of statistical inverse learning. The unknown is modeled as an element of $\ell^1$, and observations are generated via a (possibly nonlinear) forward operator $A$ from $\ell^1$ to a vector-valued Reproducing Kernel Hilbert Space (vv-RKHS) $\Ht$. We introduce an $\ell^1$-regularized empirical risk minimizer that promotes sparsity in the recovered solution, and we develop a comprehensive theoretical analysis of this estimator.

We establish almost-sure consistency of the estimator as the sample size grows under mild conditions. We then derive non-asymptotic, high-probability convergence rates in both the prediction norm and the $\ell^1$ reconstruction norm. The rates are expressed in terms of two intrinsic complexity parameters: the source smoothness index $r$, encoded through a variational source condition, and the effective dimension exponent $b$, which captures the polynomial spectral decay of the covariance operator associated with the vv-RKHS embedding. We further establish matching minimax lower bounds over a natural class of prior distributions, confirming that the derived upper rates are optimal. 
 
To connect the theory with concrete sparsity models, we develop a general 
framework for finitely smoothing operators of the form $A = G \circ S$, where $S$ is a 
synthesis operator, and show how approximation-space assumptions on the ground truth imply variational source conditions. In particular, we establish an equivalence between membership in the approximation space $k_t$ and polynomial decay of the best $n$-term approximation error, thereby linking sparse approximation properties directly to the convergence-rate exponents.

As applications, we verify the assumptions for two representative inverse problems: the identification of a reaction coefficient in an elliptic PDE and sparse recovery in computed tomography. For filtered Radon transforms, we further derive explicit effective-dimension asymptotics,
leading to concrete convergence rates for standard image models and sparsifying systems.
\end{abstract}

\medskip

\noindent\textbf{Keywords:}
Statistical inverse learning,
nonlinear inverse problems,
$\ell^1$-regularization,
sparse recovery,
variational source conditions,
effective dimension,
minimax optimality.

\medskip

\noindent\textbf{2020 Mathematics Subject Classification:}
Primary 62G20, 47A52, 68Q32;
Secondary 65J22, 62J07, 46E35, 41A46.

\section{Introduction}
Inverse problems constitute a central topic in modern applied mathematics and machine learning \cite{mueller2012linear}, where one aims to recover an unknown quantity from indirect and noisy observations. Typical examples include identifying material parameters in mechanics, reconstructing images in medical tomography, estimating reaction rates in biological systems, or drug evolution in pharmacokinetics \cite{mueller2012linear,khan2020,shukla2020physics,Hartung21}. Unlike standard supervised learning \cite{BauPerRos07, gerfo2008spectral, steinwart2009optimal, mendelson2010, Caponnetto07, blanchard2020kernel, lin2020optimal}, where observations directly reveal the value of a function at given inputs, inverse problems involve an additional layer of complexity due to the presence of the operator~$A$, which may be compact or ill-conditioned. In these problems, the goal is to recover an unknown function~$\fp$ from random, noisy and indirect observations of the form
\begin{equation}\label{eq:obs}
    y_i = A(\fp)(x_i) + \varepsilon_i, \qquad i = 1,\dots,n,
\end{equation}
where $A:\BB\to \Ht$ is a (possibly nonlinear) forward operator acting between the vector spaces $\BB$ and $\Ht$. We are going to assume that $\BB$ is a Banach space of sequences, whereas $\HH$ is a Hilbert space of functions from $\XX$ to $\YY$. Here,~$n$ is the
sample size and $\zz:=\{(x_i,y_i)\}_{i=1}^n$ are independent samples drawn from an underlying probability distribution $\rho$ on~$\XX\times \YY$. The observation points $x_i$ are random, and the noise terms $\varepsilon_i$ are random and centered, conditional on these design points. This stochastic nature of both the inputs and the noise adds additional complexity to the inverse problem of recovering the unknown sequence $\fp$. When both randomness and finite sampling are explicitly taken into account, the problem becomes statistical, giving rise to the framework of \emph{statistical inverse learning} \cite{BlaMuc16,Rastogi20,Rastogi23}. This viewpoint naturally connects classical inverse problems with modern machine learning, where the objective is to infer an underlying function from data through appropriate regularization and statistical principles.

In real-world scenarios, such inverse problems are typically ill-posed, meaning that they may lack a unique solution or exhibit strong sensitivity to data perturbations, i.e., small perturbations in the data may lead to large deviations in the reconstructed solution. This necessitates the use of regularization to obtain stable and meaningful estimates. A standard approach for such problems is to employ regularized empirical risk minimization \cite{Caponnetto07,BlaMuc16,Rastogi20}. For instance, the classical Tikhonov estimator with quadratic (Hilbert-norm) regularization is given by
\[
\fz 
  = \argmin_{f\in \DD(A)\cap \HH'} 
     \left\{ \frac{1}{n}\sum\limits_{i=1}^n\| A(f)(x_i) - y_i \|_\YY^2 
             + \la \|f\|_{\HH'}^2 \right\},
\]
where $\la>0$ is a regularization parameter and $\HH'$ is a Hilbert space encoding smoothness or structural prior information.
This regularization promotes smoothness and stability of the estimator, but it typically leads to dense solutions, i.e., every coordinate of~$f$ contributes to the reconstruction, regardless of its statistical relevance. 

In many modern learning problems, however, the underlying signal or function admits a \emph{sparse representation} with respect to some basis or dictionary. 
That is, although the data may live in a high- or even infinite-dimensional space, the essential information can often be captured by a small number of active components.
This observation motivates the use of sparsity-promoting regularization schemes, most prominently the $\Ho$ penalty.

In this work, we consider the \emph{sparse statistical inverse learning problem} defined through the functional
\begin{equation}
  J_\la(f) : = 
      \frac{1}{n} \sum_{i=1}^n \|A(f)(x_i) - y_i\|_\YY^2
      + \la \|f\|_{\Ho}.
\end{equation}
The corresponding estimator is then given by
\begin{equation}\label{eq:l1reg}
  \fz = \argmin_{f \in \DD(A)\cap\Ho} J_\la(f).
\end{equation}
Here, the regularizer $\|f\|_{\Ho}$ promotes sparsity in the learned solution by shrinking many coordinates of $f$ to zero.  
This mechanism effectively performs \emph{data-driven model selection} during the learning process, allowing the estimator to focus on the most relevant features or directions in the hypothesis space.

From a learning-theoretic perspective, $\Ho$-regularization acts as a form of \emph{implicit feature selection} that adaptively controls the model complexity in high-dimensional regimes. 
Unlike the $\ell^2$ penalty, which shrinks all coefficients uniformly, the $\Ho$ penalty creates a bias towards parsimonious models that are easier to interpret and potentially more robust to noise. 
This trade-off between stability and sparsity has been central to the success of methods such as the LASSO, compressed sensing, and sparse kernel regression.


In the statistical learning theory for nonparametric regression, the convergence rate of an estimator toward the true solution can be exceedingly slow as the sample size increases, a phenomenon commonly referred to as the \emph{No Free Lunch Theorem}~\cite{Devroye96}. To overcome this limitation, the complexity of data-generating distributions~$\rho$ is typically restricted by incorporating structural assumptions on the underlying data-generating process. Such \textit{a priori} assumptions are crucial, as they allow prior knowledge and regularity information to be incorporated into the learning process, thereby enabling faster rates of convergence.
These assumptions generally pertain to three key aspects: the smoothness of the source set of admissible solutions~$\fp$, the mapping properties of the forward operator~$A$ (e.g., smoothness or Lipschitz continuity), and the design or marginal probability measure~$\mu$ on~$\XX$, which determines the sampling points $(x_i)_{i=1}^{n} \subset \XX^{n}$. We study how the interplay between the forward operator~$A$, the sampling distribution~$\mu$, and the sparsity-inducing regularization affects generalization and convergence.
To ensure statistical consistency and to derive convergence rates, we impose the following standard conditions:
a sub-exponential noise assumption on the observational noise variables,
 Lipschitz continuity of the forward operator~$A$, 
and a variational source condition that quantifies the regularity of the true solution~$\fp$. 
Furthermore, we rely on the concept of the \emph{effective dimension}~$\mathcal{N}(\la)$, which encapsulates the interplay between the sample size~$n$ and the regularization strength~$\la$. 
Under a polynomial decay condition of the form $\mathcal{N}(\la) \le C \, \la^{-b}$ for $0<b<1$, we obtain quantitative estimates for the sample complexity and asymptotic convergence behavior of the estimator.


\medskip

The contributions of this paper can be summarized as follows:

\begin{enumerate}[(i)]
\item We propose a general framework for sparse statistical inverse learning in vector-valued reproducing kernel Hilbert spaces, extending classical kernel ridge regression to an $\ell^1$-regularized setting with nonlinear forward operators.
\item We establish almost-sure consistency and derive non-asymptotic high-probability convergence rates for the proposed estimator under variational source conditions, sub-exponential noise assumptions, and polynomial effective-dimension growth.
\item We prove matching minimax lower bounds over a natural family of probability distributions, thereby establishing the minimax optimality of the obtained convergence rates.
\item We develop a general approximation-space framework based on the spaces $k_t$ and show that polynomial decay of best $n$-term approximation errors implies the variational source conditions required for the statistical analysis.
\item We verify the assumptions for representative nonlinear inverse problems, including coefficient identification in elliptic PDEs and sparse computed tomography, and derive explicit convergence rates for filtered Radon transforms with standard reconstruction filters.
\end{enumerate}

\medskip

Our analysis unifies ideas from inverse problem regularization, compressed sensing, and statistical learning theory, providing a unified analysis of sparsity-driven estimators for operator-valued learning tasks. This study provides a unified theoretical foundation for analyzing inverse learning problems in functional and multi-dimensional output spaces, offering a rigorous bridge between deterministic regularization theory and modern statistical learning approaches.

\subsection{Comparison with Existing Results}
\label{sec:comparison}

We situate the present work within the broader literature on statistical inverse learning and sparsity-promoting regularization, organizing the comparison along four axes: (i) the regularization penalty, (ii) the linearity or nonlinearity of the forward operator, (iii) the underlying hypothesis space, and (iv) the type of convergence guarantees obtained.

\paragraph{Direct Supervised Learning and the Role of the Effective Dimension.}

The foundational reference for minimax-optimal rates in kernel-based regression is the work of Caponnetto and De Vito~\cite{Caponnetto07}, which establishes optimal rates for regularized least-squares for the direct learning ($A=I$ identity operator) in vector-valued reproducing kernel Hilbert spaces under an RKHS-norm penalty. A key contribution of~\cite{Caponnetto07} is the introduction of the effective dimension $\mathcal N(\lambda)$ as a measure of statistical complexity.
Under a source condition $\phi(t)=t^r$ and polynomial effective-dimension growth $\mathcal N(\lambda)\lesssim\lambda^{-b}$, they obtain minimax rates of order $n^{-(r+\frac{1}{2})/(2r+b+1)}$ in $L^2$-norm. 

Rastogi and Sampath~\cite{Rastogi17} extended this framework to general source conditions in vector-valued RKHSs, deriving optimal convergence rates $n^{-r/(2r+b+1)}$ in RKHS norm for a broad class of spectral regularization methods beyond the classical H\"older-type setting. Their analysis demonstrates that the attainable learning rates are determined by the interplay between the effective dimension and the index function governing the source condition, thereby substantially generalizing the results of~\cite{Caponnetto07}.

The present paper adopts the same effective-dimension framework but considers a fundamentally different setting. We study nonlinear inverse problems, replace Hilbert-space regularization by sparsity-promoting $\ell^1$ regularization, and work in a Banach-space reconstruction framework. The resulting minimax-optimal rate of order $n^{-r/(1+b-br)}$ reflects both the statistical complexity parameter $b$ and the sparse regularity parameter $r$ arising from the variational source condition.

\paragraph{Linear Statistical Inverse Learning with RKHS Regularization.}

The statistical inverse learning problem with linear forward operators was studied systematically by Blanchard and M\"ucke~\cite{BlaMuc16}, who derive minimax-optimal convergence rates of order $n^{-r/(2r+b+1)}$ in Hilbert-space reconstruction norm for a broad family of spectral regularization methods. 

These works are restricted to linear operators and Hilbert-space regularization schemes. By contrast, the present paper allows nonlinear forward operators and sparsity-promoting $\ell^1$ penalties. The resulting convergence analysis relies on variational source conditions and Banach-space techniques rather than spectral regularization theory.

\paragraph{Nonlinear Statistical Inverse Learning with RKHS Regularization.}

The closest antecedent of the present work is the nonlinear statistical inverse learning results of~\cite{Rastogi20}, which analyze Tikhonov regularization with Hilbert-space penalties in vector-valued reproducing kernel Hilbert spaces. Under suitable source and complexity assumptions, the same minimax-optimal convergence rates as in the corresponding linear setting are obtained in the Hilbert-space reconstruction norm.

Compared with these works, the present paper introduces several new ingredients:

\begin{enumerate}[(i)]
\item sparsity-promoting $\ell^1$ regularization in place of Hilbert-space norm regularization;
\item a Banach-space reconstruction framework based on $\ell^1$;
\item variational source conditions formulated directly in the reconstruction norm;
\item minimax-optimal rates characterized by the regularity parameter $r$ and the complexity parameter $b$ in this sparse setting.
\end{enumerate}

While the work of~\cite{Rastogi20} is formulated within a Hilbert-space framework and exploit the geometry of Hilbert-space norm regularization, the present paper studies sparse recovery via $\ell^1$ regularization. This transition from Hilbert-space regularization to a Banach-space setting fundamentally changes the analysis and requires variational source conditions adapted to sparsity, which do not arise in the classical RKHS-based theory.

\paragraph{Deterministic Sparsity Regularization.}

Deterministic sparsity-promoting inverse problems have been extensively studied; see Flemming~\cite{Flemming18}, Hohage and Miller~\cite{Hohage19}, and Miller and Hohage~\cite{Miller21}.

The present work builds upon these deterministic developments but addresses a fundamentally different setting. Rather than deterministic perturbations characterized by a noise level $\delta$, we consider statistical inverse learning under random sampling. Consequently, our convergence analysis must simultaneously control stochastic sampling fluctuations and operator ill-posedness through concentration inequalities and the effective dimension of the underlying vector-valued RKHS. Moreover, we establish matching minimax lower bounds, thereby extending Banach-space regularization theory into the statistical learning regime.

Miller and Hohage~\cite{Miller21} obtained the convergence rates of order ${\delta^{(2-2t)/(2-t)}}$ in $\ell^1$-norm for the nonlinear inverse problem.  The present paper instead considers the statistical regime $n\to\infty$, where both the sampling locations and observations are random. Consequently, the effective dimension and covariance concentration phenomena play a central role and have no direct analogue in deterministic analyses.



\paragraph{General Convex Regularization.}

Bubba et al.~\cite{bubba2023convex,bubba2022shearlet} study statistical inverse learning with \emph{general convex, $p$-homogeneous regularization functionals} and linear forward operators, deriving concentration rates in symmetric Bregman distances induced by the penalty. In contrast, the present work focuses specifically on sparsity-promoting $\ell^1$ regularization for possibly \emph{nonlinear} forward operators and establishes minimax-optimal convergence rates in reconstruction norms. Furthermore, our analysis connects approximation-space sparsity models to variational source conditions and provides matching minimax lower bounds.  Bubba et al.~\cite{bubba2023convex} also measures errors primarily through Bregman distances, whereas our results are formulated directly in $\ell^1$ and related sequence-space norms. 

\paragraph{Sparse Statistical Estimation and $\ell^1$ Regularization.}
The use of $\ell^1$ penalties for sparse estimation originates with the LASSO of Tibshirani~\cite{Tibshirani96}, whose statistical properties were subsequently analyzed by~\cite{CandesTao07,BRT09,RWY11}. These works establish minimax estimation rates, oracle inequalities, and variable-selection guarantees in finite-dimensional linear regression models under assumptions such as restricted eigenvalue or restricted isometry conditions.

The present work extends this line of research from finite-dimensional regression to nonlinear statistical inverse learning in infinite-dimensional Banach spaces. Instead of restricted eigenvalue assumptions, our analysis relies on variational source conditions and effective-dimension estimates associated with the covariance operator. Consequently, the convergence rates are governed jointly by the sparsity parameter $r$ and the statistical complexity parameter $b$, yielding minimax-optimal reconstruction guarantees for nonlinear inverse problems under random design.




\paragraph{Bayesian Sparse Inverse Problems.}
Recent work has also investigated sparsity-promoting Bayesian methods for inverse problems. In particular, Agapiou and Wang~\cite{AgapiouWang24} analyze Bayesian inverse problems with Laplace priors and establish posterior contraction properties for spatially inhomogeneous Besov-type models. Although Laplace priors and $\ell^1$ regularization are closely related through maximum a posteriori estimation, the objectives of the two approaches differ substantially. Their analysis focuses on posterior distributions and Bayesian uncertainty quantification, whereas the present paper develops a frequentist statistical learning framework based on empirical risk minimization. Furthermore, our analysis establishes explicit high-probability convergence rates and matching minimax lower bounds for nonlinear statistical inverse learning under random sampling.

\paragraph{Summary Comparison.}

We summarize the results discussed in this section in Table~\ref{tab:comparison}.

\begin{table}[ht]
\renewcommand{\arraystretch}{1.3}
\centering
\caption{Comparison with related literature.}
\label{tab:comparison}
\small
\begin{tabular}{llcp{2cm}p{1cm}p{1cm}}
\hline
Reference & Operator & Regularizer  & Reconstruction Norm Rate & Minimax lower bounds & Optimal rates\\
\hline
Caponnetto--De Vito~\cite{Caponnetto07}
& Identity & Tikhonov
& 
& \checkmark & \checkmark \\
Rastogi et al.~\cite{Rastogi17}
& Identity & General
& $n^{-r/(2r+b+1)}$
& \checkmark & \checkmark \\
Blanchard--M\"ucke~\cite{BlaMuc16}
& Linear & General
& $n^{-r/(2r+b+1)}$
& \checkmark & \checkmark \\
Rastogi et al.~\cite{Rastogi20}
& Nonlinear & Tikhonov
& $n^{-r/(2r+b+1)}$
& \checkmark & \checkmark \\
Miller--Hohage~\cite{Miller21}
& Nonlinear
& $\ell^1$
& ${\delta^{(2-2t)/(2-t)}}$
& \checkmark & \checkmark \\
{This work}
& {Nonlinear}
& ${\ell^1}$
& ${n^{-r/(1+b-br)}}$
& \checkmark  & \checkmark \\
\hline
\end{tabular}
\end{table}

\paragraph{Organization.} 
The paper is organized as follows.  
Section \ref{sec:setting} introduces the framework of statistical inverse learning under random design and presents the vector-valued Reproducing Kernel Hilbert Space (vv-RKHS) structure used in our analysis. In this section, we also state the key assumptions required for our results, including the Bernstein-type noise condition, kernel regularity, smoothness assumptions on the operator $A$ and the true solution $\fp$, and the polynomial decay condition on the effective dimension. Section \ref{sec:consistency} establishes consistency results for the proposed estimator. 
In  Section \ref{Sec:convergence.rates}, we present the main convergence results for the sparsity-promoting regularization scheme. These results provide upper convergence rates under \textit{a priori} smoothness assumptions on the true solution, expressed in terms of the variational source condition. 
In Section \ref{sec:lower.rates}, we derive minimax lower bounds in the class of probability measures satisfying the same assumptions as the ones employed to obtain the upper estimates: as a consequence, we deduce that the derived convergence rates are optimal.
Section \ref{sec:examples} supplements the theoretical discussion of the previous sections by showcasing some examples, fully motivated by applications, that satisfy all the introduced theoretical assumptions.
Finally, Section \ref{sec:discussion} provides a detailed discussion of the derived results, including comparisons with existing literature and an analysis of the implications of our approach. In addition, the appendix contains auxiliary results and technical lemmas used throughout the analysis.

\vspace{0.3cm} 

\paragraph{Notations.} 
For a Banach space $\BB$, we denote its norm by $\|\cdot\|_{\BB}$. For a Hilbert space~$\HH$, the norm and inner product are denoted by~$\|\cdot\|_{\HH}$ and~$\langle \cdot, \cdot \rangle_{\HH}$, respectively. The space of all bounded linear operators on a separable Hilbert space $\HH$ is denoted by $\mathcal{L}(\HH)$. 

\vspace{0.2cm}

We denote the adjoint of an operator $A$ by $A^*$ and its domain by $\DD(A)$. The operator norm of $A$ is written as $\|A\|$, while $\|A\|_{HS}$ denotes its Hilbert--Schmidt norm.

\section{Problem Setting and Main Assumptions}
\label{sec:setting}

We formulate the nonlinear statistical inverse learning problem in a vector-valued reproducing kernel Hilbert space (vv-RKHS) framework. In this section, we introduce the key assumptions required for our analysis, including conditions on the noise, regularity of the forward operator, smoothness of the true solution, and spectral properties of the associated covariance operators. These assumptions are essential to derive statistical guarantees, establish consistency, and obtain convergence rates for the proposed regularized estimators. Throughout, we adopt the notation and conventions introduced in the preceding section.

\subsection{General Framework}
Suppose $\XX \subset \RR^d$ denotes the input domain and $\YY$ is a separable Hilbert space representing the output space. We observe data points $(x_1, y_1), \ldots, (x_n, y_n)$ drawn according to an unknown probability measure~$\rho$ defined on the Borel $\sigma$-algebra of~$\XX \times \YY$. We denote by $\mu$ the marginal distribution of $\rho$ on $\XX$, and by $\rho(\cdot \mid x)$ the conditional distribution on $\YY$ given $x \in \XX$, whose existence is assumed.

We define the weighted Hilbert space $\Hu := \LL$, endowed with the inner product 
$\langle g_1, g_2 \rangle_{\mu} =\langle g_1, g_2 \rangle_{\Hu} := \int_\XX \langle g_1(x), g_2(x) \rangle_\YY \, \mu(dx)$.
Henceforth, we assume that $\mathcal{B}=\ell^1$ is the underlying sequence space.
Consider a bounded nonlinear operator
\[
    A : \DD(A) \cap \Ho\to \Ht,
\]
where $\Ht$ is a vector-valued reproducing kernel Hilbert space (vv-RKHS) that is continuously embedded into $\Hu$ via the inclusion operator 
\[
    \ip : \Ht \hookrightarrow \Hu.
\]
The embedding $\ip$ ensures that elements of $\Ht$ can be identified with square-integrable vector-valued functions while preserving the reproducing property essential for our analysis. 

This operator-valued kernel formulation naturally accommodates multi-output regression and structured prediction tasks, thereby extending the scope of scalar-valued learning theory to a broader class of nonlinear statistical inverse problems.

Given i.i.d.~samples $\xx: = \{x_i\}_{i=1}^n$ drawn from $\mu$, the sampling operator $\sx : \Ht \to \YY^n$ is defined by
\begin{equation}\label{def:sx}
   (\sx g)_i := g(x_i), \quad i=1,\ldots,n. 
\end{equation}
Then, the observed data can be expressed as
\begin{equation}
    \label{eq:main-model}
    \yy = \sx A(\fp) + \VE,
\end{equation}
where $\yy: = (y_i)_{i=1}^n \in \YY^n$ and $\VE := (\varepsilon_i)_{i=1}^n$ is a noise vector. The noise variables $\{\varepsilon_i\}_{i=1}^n$ are assumed to be independent and centered, satisfying $\int_\YY \varepsilon_i\, \rho(dy_i|x_i) = 0$ for all  $i = 1, \dots, n$.

Our objective is to reconstruct the function $\fp$ from the observed samples $(x_i, y_i)_{i=1}^n$
by minimizing a regularized empirical risk that incorporates $\Ho$-type penalization promoting sparsity in the representation of~$\fp$.

\subsection{True solution}
The probability distribution~$\rho$ is accessible only through a \emph{training set} 
$\zz$. 
The goal of \emph{supervised inverse learning} is to construct an estimator 
$f_{\zz}$ based on $\zz$ such that $[A(f_{\zz})](x)$ 
approximates the true label~$y$ for unseen samples $(x, y)$. 
To formalize this objective, we define the \emph{expected square loss}
\[
\mathcal{E}(f) = \int_{\XX \times \YY} \norm{[A(f)](x) - y}_{\YY}^{2}\, \rho(dx, dy),
\]
which measures the expected discrepancy between predictions and true labels. 
Using the marginal probability distribution~$\mu$ on~$\XX$ and the conditional distribution~$\rho(\cdot \mid \cdot)$ of $y$ given $x$, 
the expected loss can be rewritten as
\[
\mathcal{E}(f) = \|A(f) - \gp\|_{\mu}^{2} + \mathcal{E}(\gp),
\]
where $\gp(x) = \int_\YY y\, \rho(dy \mid x)$ denotes the \emph{conditional mean function}. 
Hence, minimizing the expected loss is equivalent to minimizing 
$\|A(f) - \gp\|_{\mu}^{2}$, which corresponds to solving a classical inverse problem 
where data fidelity is weighted by the design measure~$\mu$. 
In particular, if there exists $f^\dagger \in \DD(A)$ such that 
$\gp = A(f^\dagger)$, then $f^\dagger$ minimizes the expected risk. 
For the centered noise, this minimizer $f^\dagger$ coincides with the true solution~$\fp$ 
for an injective operator~$A$.

In what follows, we specify the assumptions concerning the true solution of the inverse problem~\eqref{eq:obs}; see also~\cite{BlaMuc16}.  

\begin{assumption}[True solution]\label{ass:fp}
The conditional expectation of \(y\) given \(x\), with respect to the probability distribution~\(\rho\), is assumed to exist almost surely. 
Moreover, there exists an element \(\fp \in \DD(A)\) such that  
\[
\int_{\YY} y \, d\rho(y \mid x) 
= [A(\fp)](x), 
\quad \text{for all } x \in \XX.
\]
\end{assumption}

\subsection{Noise Assumption}

To ensure statistical consistency, we impose a sub-exponential noise condition analogous to Bernstein-type inequalities commonly used in empirical process theory.

\begin{assumption}[Sub-exponential noise]
\label{ass:noise}
There exist constants $M,\Sigma > 0$ such that for almost all $x \in \XX$,
\[
    \int_\YY\!\left(e^{\| y - A(\fp)(x) \|_\YY / M} - \frac{\|y - A(\fp)(x)\|_\YY}{M} - 1\right) \rho(dy \mid x)
    \;\leq\; \frac{\Sigma^2}{2M^2}.
\]
\end{assumption}

This assumption remains valid under several settings, including cases where the noise~$\varepsilon$ is bounded 
or follows a sub-Gaussian distribution with zero mean and is independent of~$x$; 
see, e.g.,~\cite{van96}. 
It is worth noting that the case of Gaussian white noise in infinite-dimensional spaces 
is excluded from this setting.
It ensures concentration inequalities needed for non-asymptotic error analysis.

\subsection{RKHS Structure}

We begin by recalling the notion of a vector-valued reproducing kernel Hilbert space (RKHS). The RKHS framework provides a powerful foundation for kernel-based methods, enabling the development of efficient and theoretically grounded algorithms. Our focus lies on Hilbert spaces of vector-valued functions that admit a reproducing kernel \cite{alvarez2012kernels, carmeli2006vector, carmeli2010vector}. Such spaces have attracted considerable interest in recent years, particularly in machine learning theory, due to their effectiveness in modeling and learning from complex, structured data.

The concept of an RKHS originates from the seminal work of Aronszajn \cite{Aronszajn50}, who studied Hilbert spaces of functions associated with symmetric, positive semi-definite kernels. A defining feature of these spaces is the reproducing property, which allows pointwise evaluation of functions to be expressed as an inner product involving the kernel. This framework has been extended to vector-valued functions by Micchelli and Pontil \cite{Micchelli05}, thereby generalizing the classical scalar-valued RKHS to settings in which functions take values in a Hilbert space rather than in the real line~$\RR$. This generalization allows RKHS methods to model multiple, potentially correlated outputs simultaneously,
and provides a unifying framework for multi-task learning, structured regression, and functional data analysis.

\begin{definition}[Vector-valued RKHS]
Let $\XX$ be a non-empty set and $(\YY,\langle \cdot,\cdot \rangle_\YY)$ be a real separable Hilbert space. 
A Hilbert space $\HH$ of functions $g:\XX \to \YY$ is a \emph{vector-valued reproducing kernel Hilbert space (vv-RKHS)} if for all $x \in \XX$ and $y \in \YY$, the evaluation functional
\[
    F_{x,y}: \HH \to \RR, \qquad F_{x,y}(g) = \langle y, g(x)\rangle_\YY,
\]
is continuous.
\end{definition}

The Riesz representation theorem allows us to identify, for each $x \in \XX$ and $y \in \YY$, the continuous functional $F_{x,y}$ with a unique element of $\HH$ (denoted in \cite{Micchelli05} by $K(x|y)$) such that
\[
\langle K(x|y), g \rangle_\HH
=
F_{x,y}(g)
=
\langle y, g(x)\rangle_\YY,
\qquad \forall g \in \HH.
\]

We observe that the dependence of $K(x|y)$ on $y$ is linear. Hence, for every $x \in \XX$, we define the linear operator
\[
K_x \colon \YY \to \HH,
\qquad
K_x y = K(x|y).
\]
With this notation, the reproducing property becomes
\[
\langle K_x y, g \rangle_\HH
=
F_{x,y}(g)
=
\langle y, g(x)\rangle_\YY,
\qquad \forall g \in \HH.
\]

The operator-valued kernel associated with the RKHS $\HH$ is the map
\[
K: \XX \times \XX \to \mathcal{L}(\YY),
\]
which assigns to each pair $x,x' \in \XX$ the linear operator defined by
\[
K(x,x') y = (K_{x'} y)(x),
\qquad \forall y \in \YY.
\]

According to \cite[Proposition 2.1]{Micchelli05}, the kernel $K$ satisfies the following properties:
\begin{enumerate}[(i)]
\item For every $y,y' \in \YY$,
\[
\langle y, K(x,x') y' \rangle_\YY
=
\langle K_{x'} y', K_x y \rangle_\HH;
\]

\item \textit{Hermitian symmetry}:
\[
K(x,x')^* = K(x',x);
\]

\item \textit{Positive semi-definiteness}: for every finite family
$\{x_i\}_{i=1}^n \subset \XX$ and $\{y_i\}_{i=1}^n \subset \YY$,
\[
\sum_{i,j=1}^n
\langle y_i, K(x_i,x_j)y_j \rangle_\YY
\ge 0.
\]
\end{enumerate}


%

Conversely, any kernel satisfying the three conditions above is associated with a unique vv-RKHS, defined as
\[
\overline{\operatorname{span}}\{K_x y \;:\; x \in \XX,\ y \in \YY\}.
\]
For theoretical analysis, we impose the following mild regularity condition on the kernel.

\begin{assumption}[Kernel regularity]
\label{ass:kernel}
Let $\Ht$ be a vector-valued reproducing kernel Hilbert space. The operator-valued kernel $K:\XX\times \XX \to \mathcal{L}(\YY)$ associated with $\Ht$ satisfies:
\begin{enumerate}[(i)]
  \item For all $x \in \XX$, $K_x:\YY \to \HH$ is a Hilbert--Schmidt operator with
  \[
      \kappa^2 := \sup_{x \in \XX} \| K_x \|_{HS}^2 
      = \sup_{x \in \XX}\tr(K_x^* K_x) < \infty.
  \]
  \item For all $y,t \in \YY$, the real-valued function 
  $\varsigma(x,s) := \langle K_x y, K_s t\rangle_{\HH}$ 
  is measurable with respect to $(x,s) \in \XX \times \XX$.
\end{enumerate}
\end{assumption}

Assumption~\ref{ass:kernel} ensures that the kernel induces a well-behaved operator-valued feature map. 
The Hilbert--Schmidt condition guarantees boundedness and square-integrability, 
while the measurability requirement ensures that all integrals involving $K$ 
(such as covariance operators or empirical risks) are well-defined. 

\subsection{Smoothness Assumptions}

We impose two structural assumptions on the forward operator~$A$ and one on the unknown solution~$\fp$. 
Assumptions \ref{ass:Lipschitz} and \ref{Ass:A-Lip} concern the \emph{regularity and stability} of the operator $A$, while Assumption \ref{Ass:var.sour} characterizes the \emph{smoothness} of the true solution in relation to~$A$.

The following assumption ensures the \emph{well-posedness} of the forward mapping. 
Injectivity prevents ambiguity in the inverse problem, while Lipschitz continuity guarantees that the mapping is not overly sensitive to small changes in~$f$. 
This property enables control over the propagation of perturbations from the data space~$\Ht$ to the parameter space~$\Ho$, which is essential for the local stability of the inverse problem.

\begin{assumption}[Lipschitz continuity]
\label{ass:Lipschitz}
We assume that $\DD(A)$ has nonempty interior. The operator
\(
A:\DD(A)\cap \ell^1 \to \HH
\)
is injective.
Moreover, $A$ is Lipschitz continuous on $\DD(A)\cap \ell^1$, namely,
there exists a constant $\ella<\infty$ such that
\[
\|A(f)-A(\tilde f)\|_{\HH}
\le
\ella\,\|f-\tilde f\|_{\ell^1},
\qquad
\forall f,\tilde f\in \DD(A)\cap \ell^1 .
\]
\end{assumption}

Next, we specify the smoothness or regularity of the true solution~$\fp$. 
In inverse problems, such assumptions describe how well $\fp$ can be approximated 
by elements related to the range of the operator~$A$, 
and they play a crucial role in determining attainable rates of convergence.

\begin{assumption}[Variational source condition]
\label{Ass:var.sour}
There exists a concave, non-decreasing index function 
$\phi: [0,\infty) \to [0,\infty)$ with $\phi(0) = 0$ 
such that for all $f \in \DD(A)\cap\Ho$,
\[
    \|f - \fp\|_{\Ho}
    \le 
    \|f\|_{\Ho} - \|\fp\|_{\Ho}
    + \phi\!\left(\|A(f) - A(\fp)\|_{\Hu}^2\right).
\]
\end{assumption}
The variational source condition captures the \emph{intrinsic smoothness} of the true solution. It establishes a quantitative link between the discrepancy in the parameter space and the residual in the data space. 
It connects the regularity of the true solution~$\fp$ with the sensitivity of the forward operator~$A$. 
The function~$\phi$ determines the rate at which approximation errors decay, and thus directly influences the achievable convergence rate of regularized estimators.

\medskip
We now introduce a family of weighted sequence spaces that will serve as model domains for the operator~$A$.

\begin{definition}[Weighted sequence spaces]
Let $\underline{w} = (w_j)_{j=1}^\infty$ be a sequence of positive real numbers such that $w_j \to 0$ as $j \to \infty$.  
For $p \in (0,2]$, the weighted space $\ell^p_{\underline{w}}$ is defined as
\[
\ell^p_{\underline{w}}
:= \Big\{ f \in \mathbb{R}^\infty : \|f\|_{\underline{w},p} < \infty \Big\},
\qquad
\|f\|_{\underline{w},p} := \Big(\sum_{j=1}^{\infty} w_j^p |f_j|^p \Big)^{1/p}.
\]
\end{definition}

\noindent

The next assumption provides a quantitative stability property of~$A$ with respect to the weighted norm.

\begin{assumption}[Weighted bi-Lipschitz property]\label{Ass:A-Lip}
For a sequence of positive weights $\underline{w}$ vanishing at $\infty$, assume that $D(A) \subset \ell_{\underline{w}}^2$ is closed and 
\begin{enumerate}[(i)]
    \item there exists a constant $L > 0$ such that for all $f, \tilde{f} \in D(A)$,
    \[
    \norm{A(f)-A(\tilde{f})}_{\Hu}
    \le 
    L \|f - \tilde{f}\|_{\underline{w},2},
    \]
    \item there exists a constant $L>0$ such that for all $f, \tilde{f} \in D(A)$,
    \[
    \|f - \tilde{f}\|_{\underline{w},2}
    \le 
    L \norm{A(f)-A(\tilde{f})}_{\Hu}.
    \]
\end{enumerate}
\end{assumption}

The first part of the assumption is needed to obtain uniform lower convergence rates, as discussed in Theorem \ref{err.lower.bound}. The second part is crucial to provide uniform upper convergence rates (see Corollary~\ref{cor:err.upper.weighted} and Theorem \ref{thm:uniform.rate.lambda*}). Moreover, as shown in Theorem \ref{kt.vsc}, the second part is a very useful tool to verify the variational source condition (Assumption \ref{Ass:var.sour}). Finally, verifying both parts of Assumption \ref{Ass:A-Lip} with the same weight $\underline{w}$ allows to prove the minimax optimality of the derived bounds, as discussed in Corollary \ref{cor:minimax-optimality}.



\subsection{Effective Dimension and Spectral Decay}

We now formalize the notions of effective dimension and spectral decay through the covariance operator associated with the feature map of the kernel $K$.

\begin{definition}[Uncentered covariance operator]
\label{def:covop}
The \emph{uncentered covariance operator}\index{Uncentered covariance operator} associated with the kernel $K$ is defined as the operator $\tp: \HH \to \HH$ given by
\begin{equation*}
    \tp := \int_{\XX}  K_x K_x^*  \, \mu(dx).
\end{equation*}
\end{definition}

The operator $\tp$ is positive, self-adjoint, compact, and in particular of trace class by Assumption~\ref{ass:kernel}.  
Let $\ip : \Ht \hookrightarrow \Hu$ denote the canonical embedding. Then, the covariance operator can equivalently be expressed as $\tp = \ip^* \ip$.

\medskip

In statistical learning theory, assumptions on the marginal distribution $\mu$ are often characterized in terms of the \emph{effective dimension}\index{Effective dimension} (or \emph{statistical dimension}\index{Statistical dimension}), which captures the number of effective degrees of freedom of the learning problem.  
Intuitively, when data are high-dimensional, only a subset of features significantly contributes to the variation in the data. The effective dimension thus provides a measure of model complexity, reflecting how many directions in feature space remain active at a given regularization level.  

Formally, the effective dimension is defined as
\[
    \mathcal{N}(\la) := \tr\!\big((\tp + \la I)^{-1} \tp\big), \qquad \la > 0,
\]
which quantifies the number of ``active'' degrees of freedom at scale $\la$.  

\medskip

A common way to control the complexity of the hypothesis space is through a polynomial bound on $\mathcal{N}(\la)$.

\begin{assumption}[Polynomial spectral decay]
\label{ass:polydecay}
There exist constants $C_\beta > 0$ and $0 < b < 1$ such that
\[
    \mathcal{N}(\la) \le C_\beta \la^{-b}, \qquad \forall \la > 0.
\]
\end{assumption}

This spectral decay condition links the eigenvalue distribution of $\tp$ to the statistical complexity of the learning problem.  
It ensures that the hypothesis space does not grow too rapidly as $\la \to 0$, allowing for meaningful generalization and stable regularization.  

\bigskip

Together, the definitions and assumptions introduced above provide the analytical foundation for the convergence analysis of $\Ho$-regularized estimators.  
They guarantee well-posedness, statistical consistency, and enable the derivation of non-asymptotic convergence rates under mild regularity and sparsity assumptions.

\subsection{Class of Probability Measures}

We now introduce the class of probability measures that describe the statistical framework of the nonlinear inverse problem.  
These measures encode both the smoothness of the true solution~$\fp$ and the spectral characteristics of the forward operator~$A$ through the induced covariance structure.

\begin{definition}[Class of Probability Measures]\label{def:classP}
Let $M, \Sigma, R$, and $C_\beta$ be fixed positive constants.
Given parameters $0 < b < 1$ and $0 \le r \le 1$, we define $\PP=\PP_{r,b}$ as the set of probability distributions $\rho$ on the sample space $\ZZ=\XX\times\YY$ such that:
\begin{enumerate}[(i)]
    \item The true solution $\fp$ satisfies Assumption~\ref{ass:fp}, ensuring that it belongs to the admissible domain of the operator.
   
    \item The noise condition (Assumption~\ref{ass:noise}) holds with the prescribed constants $M$ and $\Sigma$, controlling the stochastic variability in the observations.
   
    \item The kernel $K$ associated with the RKHS $\HH$ satisfies Assumption~\ref{ass:kernel}, and the range of the forward operator $A$ is contained in $\HH$.
   
    \item Assumption~\ref{ass:Lipschitz} holds, so that the domain $\DD(A)$ has nonempty interior. Moreover, the operator $A:\DD(A)\cap\Ho\to\Ht$ is injective and Lipschitz continuous. In addition, the weighted bi-Lipschitz property stated in Assumption~\ref{Ass:A-Lip} holds, providing quantitative stability of $A$ with respect to the weighted norm $\|\cdot\|_{\underline{w},2}$.
   
    \item The true solution $\fp$ satisfies the variational source condition (Assumption~\ref{Ass:var.sour}) with index function $\phi(t)=t^r$, capturing its intrinsic smoothness relative to the operator $A$.
   
    \item The effective dimension of the operator $\tp$ satisfies the polynomial growth condition (Assumption~\ref{ass:polydecay}), which controls the complexity of the learning problem through the spectral decay of $A$.
\end{enumerate}
\end{definition}

\medskip

\noindent
The class of probability measures $\PP_{r,b}$ thus depends on two fundamental parameters:
\begin{enumerate}[(i)]
    \item The smoothness parameter $r$, describing the \emph{smoothness} of the true solution~$\fp$ via the variational source condition with $\phi(t) = t^r$;
    \item The effective dimension parameter $b$, which captures the number of effective degrees of freedom of the learning problem.
\end{enumerate}
Accordingly, we consider throughout this work the family
\[
\PP_{r,b}
:=
\Big\{
\rho \text{ on } \ZZ : \rho \text{ satisfies Assumptions~\ref{ass:fp}, \ref{ass:noise}, \ref{ass:kernel}, \ref{ass:Lipschitz}, \ref{Ass:var.sour}, \ref{ass:polydecay}}
\text{ with } \phi(t) = t^r
\Big\}.
\]
This class defines a notion of prior distributions for RKHS-based, sparsity-promoting nonlinear inverse learning problems,  
encapsulating both the spectral decay of the operator $A$ and the intrinsic regularity of the underlying solution~$\fp$.

\subsection{Upper Rates, Lower Rates, and Minimax Optimality}

Here, we introduce the notions of asymptotic upper rates, lower rates, and minimax optimality used throughout this paper. The goal is to describe precisely the asymptotic behaviour of learning procedures in terms of the sample size $n$.

To this end, we consider a family of probability measures $(\mathcal{P}_{r,b})$, indexed by
regularity and complexity parameters, where each $\mathcal{P}_{r,b}$ is a collection of
Borel probability measures on $\XX \times \YY$ defined in Definition~\ref{def:classP}. In the following, all expectations are taken with respect to $\rho^n$
for $\rho \in \mathcal{P}_{r,b}$.

We measure performance in terms of the $\pp$-th moment of the reconstruction error in the
interpolation norm $\|\cdot\|_{\underline{u},p}$, where $\pp\in[1,\infty)$, $p\in (0,2]$ and $\underline{u}$ is defined in~\eqref{eq:omega_t}.

\begin{definition}[\textbf{Upper rate of convergence}]
A family of positive sequences $(a_n)_{n\in\mathbb{N}}$ is called an upper rate of convergence
in $L^\pp$ for the interpolation norm $\|\cdot\|_{\underline{u},p}$ over the model class
$\mathcal{P}_{r,b}$, if there exists a learning procedure $l$
producing estimators $(f_{\zz,\la_n})_{n\in\mathbb{N}}$, such that
\[
\limsup_{n \to \infty}\sup_{\rho \in \mathcal{P}_{r,b}}
\frac{
\mathbb{E}_{\rho^n}
\big[
\|f_{\zz,\la_n} - \fp\|_{\underline{u},p}^{\pp}
\big]^{1/\pp}
}{a_n}
< \infty.
\]
\end{definition}

\begin{definition}[\textbf{Lower rates of convergence}]
A family of positive sequences $(a_n)_{n\in\mathbb{N}}$ is called a lower rate of convergence in
$L^\pp$ for the interpolation norm $\|\cdot\|_{\underline{u},p}$ over the model class
$\mathcal{P}_{r,b}$, if
\[
\liminf_{n \to \infty} \inf_{l}
\sup_{\rho \in \mathcal{P}_{r,b}}
\frac{
\mathbb{E}_{\rho^n}
\big[
\|f_{\zz}^{l} - \fp\|_{\underline{u},p}^{\pp}
\big]^{1/\pp}
}{a_n}
> 0.
\]
where the infimum is taken over all measurable learning algorithms
$l$ producing estimators $(f_{\zz}^{l})_{n\in\mathbb{N}}$.
\end{definition}

\medskip

\noindent

\begin{definition}[\textbf{Minimax optimal rate}]
A sequence of estimators $(f_{\zz,\la_n})_{n\in\mathbb{N}}$ is called minimax optimal in $L^\pp$ for the interpolation
norm $\|\cdot\|_{\underline{u},p}$ over $\mathcal{P}_{r,b}$, with rate of convergence $(a_n)$, if $(a_n)$
is both an upper rate of convergence and a lower rate of convergence for the same model class.
\end{definition}

\section{Consistency}
\label{sec:consistency}

In this section, we establish the consistency of the proposed estimator defined by the minimization problem~\eqref{eq:l1reg}. 
Our goal is to show that, under suitable assumptions on the forward operator~$A$, the kernel~$K$, and the data-generating distribution~$\rho$, the regularized estimator~$\fz$ converges to the true solution~$\fp$ as the sample size~$n$ increases. 

We now state the main high-probability consistency theorem.

\begin{theorem}\label{thm:asconsistency}
Suppose Assumptions~\ref{ass:fp}-\ref{ass:Lipschitz} hold.
Assume further that the composite operator \(S_\mu \circ A : \ell^1 \to \Hu\) is injective and weak-to-weak sequentially continuous. Let~$\fz$ denote a solution to the minimization problem~(\ref{eq:l1reg}) and choose the regularization parameter~$\la=\la_n>0$ such that
\begin{equation}\label{la.choice}
\la_n\to 0,~~~\frac{\log n}{\la_n\sqrt{n}}\to 0 \text{   as   } n\to\infty.
\end{equation}
Then the estimator~$\fz$ satisfies
\begin{equation}\label{consistency.rate}
\|\fz-\fp\|_{\Ho}\to 0   \text{  almost surely as   } n\to\infty.
\end{equation}
\end{theorem}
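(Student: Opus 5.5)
The plan is the classical consistency argument for Tikhonov-type regularization, with the deterministic noise level replaced by a high-probability bound on the sampling error. We then upgrade to almost sure convergence via Borel--Cantelli.

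\emph{Step 1: basic inequality.} Comparing $J_\la(\fz)\le J_\la(\fp)$ and inserting $\yy=\sx A(\fp)+\VE$ gives
\[
\tfrac1n\|\sx(A(\fz)-A(\fp))\|^2+\la\|\fz\|_{\Ho}\le \la\|\fp\|_{\Ho}+\tfrac2n\inner{\VE,\sx(A(\fz)-A(\fp))}.
\]
For the noise term, write $\frac1n\sx^*\VE=\frac1n\sum_i K_{x_i}\varepsilon_i$. This is a mean-zero sum of i.i.d.\ $\HH$-valued variables with Bernstein moments, by Assumptions~\ref{ass:noise} and \ref{ass:kernel}. A Pinelis/Bernstein inequality in Hilbert space then yields, with probability at least $1-\eta$,
\[
\|\tfrac1n\sx^*\VE\|_\HH\le C\,\log(2/\eta)/\sqrt n .
\]
Consequently the cross term is bounded by $C\frac{\log(2/\eta)}{\sqrt n}\|A(\fz)-A(\fp)\|_\HH\le C\ella\frac{\log(2/\eta)}{\sqrt n}\|\fz-\fp\|_{\Ho}$, using Assumption~\ref{ass:Lipschitz}.

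\emph{Step 2: boundedness.} Set $\delta_n:=\log(2/\eta_n)/\sqrt n$ with $\eta_n=n^{-2}$. Then $\delta_n\asymp\log n/\sqrt n$ and $\sum\eta_n<\infty$, so by Borel--Cantelli the bound of Step 1 holds for all large $n$ almost surely. Dividing by $\la_n$ gives $\|\fz\|_{\Ho}\le\|\fp\|_{\Ho}+C(\delta_n/\la_n)(\|\fz\|_{\Ho}+\|\fp\|_{\Ho})$. Since $\delta_n/\la_n\to0$ by \eqref{la.choice}, we obtain $\limsup\|\fz\|_{\Ho}\le\|\fp\|_{\Ho}$, and in particular $(\fz)$ is bounded. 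The same inequality also shows $\frac1n\|\sx(A(\fz)-A(\fp))\|^2\to0$.

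\emph{Step 3: transfer to the population residual.} I must pass from the empirical to the $\Hu$ residual. I write
\[
\|A(\fz)-A(\fp)\|_\mu^2=\frac1n\|\sx g\|^2+\inner{(\tp-\tx)g,g}_\HH,\qquad g=A(\fz)-A(\fp),\quad \tx=\tfrac1n\sx^*\sx .
\]
A Hilbert--Schmidt concentration bound gives $\|\tp-\tx\|\lesssim\kappa^2\log(2/\eta)/\sqrt n$, and $\|g\|_\HH\le\ella\|\fz-\fp\|_{\Ho}$ is bounded. Hence $\|S_\mu A(\fz)-S_\mu A(\fp)\|_\mu\to0$ almost surely.

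\emph{Step 4: identification and strong convergence.} This is the main obstacle. On the almost sure event, take any subsequence. Since $\ell^1$ is not reflexive, I would use weak-$*$ compactness (as $\ell^1=c_0^*$) of bounded sets, or interpret the weak-to-weak hypothesis in the topology where bounded sets are sequentially compact. This yields a further subsequence $\fz\rightharpoonup\bar f$. By weak-to-weak continuity, $S_\mu A(\fz)\rightharpoonup S_\mu A(\bar f)$. Step 3 then gives $S_\mu A(\bar f)=S_\mu A(\fp)$, and injectivity gives $\bar f=\fp$, so the whole sequence converges weakly to $\fp$. To upgrade to norm convergence, combine weak lower semicontinuity, $\|\fp\|\le\liminf\|\fz\|$, with Step 2 to get $\|\fz\|_{\Ho}\to\|\fp\|_{\Ho}$. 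In $\ell^1$, coordinatewise convergence together with convergence of norms implies norm convergence (a Radon--Riesz/Schur-type property, checked via a finite head/tail split). This gives \eqref{consistency.rate}.
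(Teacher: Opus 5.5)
Your proposal is correct and follows the paper's proof essentially step by step: the basic inequality with a Bernstein-type bound on $\|\sx^*\VE\|$, Borel--Cantelli with $\eta_n=n^{-2}$ to get $\limsup\|\fz\|_{\Ho}\le\|\fp\|_{\Ho}$, transfer of the residual to $\Hu$ via $\|\tp-\tx\|$, and identification of a weak-$*$ subsequential limit by injectivity, followed by a Scheff\'e-type upgrade to norm convergence. The caveat you flag about how to read the weak-to-weak hypothesis applies equally to the paper, which also applies it to a subsequence converging weakly in $\ell^2$ and weak-$*$ in $\ell^1$; under the literal $\sigma(\ell^1,\ell^\infty)$ reading the hypothesis would be automatic by Schur's property and would not suffice.
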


\begin{proof}
The proof proceeds in several steps. 

\noindent {\bf Basic variational inequalities.} 
First, by definition of $\fz$ as a minimizer of~\eqref{eq:l1reg}, we have 
\[
\norm{\sx A(\fz)-\yy}_n^2 + \la \norm{\fz}_{\Ho} \le \norm{\sx A(\fp)-\yy}_n^2 + \la \norm{\fp}_{\Ho},
\]
where $\norm{\mathbf{y}}_n^2 = \frac{1}{n}\sum_{i=1}^n\| y_i \|_\YY^2$. Expanding and rearranging terms gives
\begin{align}\label{eq01}
  &\norm{\sx \brac{A(\fz)-A(\fp)}}^2_n 
  +\la\|\fz\|_{\Ho}  \\ \nonumber
  \leq & 2\inner{\sx \brac{A(\fp)-A(\fz)},\sx A(\fp)-\yy}_n +\la\|\fp\|_{\Ho}.
\end{align}

Using the observation model \(\yy=S_{\xx}A(\fp)+\VE,\) and applying Cauchy--Schwarz we obtain
\begin{align}
  \label{c1}
  &\norm{\sx \brac{A(\fz)-A(\fp)}}^2_n 
  +\la\|\fz\|_{\Ho}  \\ \nonumber
  \leq & 2 \norm{\sx^*\VE}_{\Ht}\norm{A(\fz)-A(\fp)}_{\Ht} +\la\|\fp\|_{\Ho}.
\end{align}

For the sampling operator defined in \eqref{def:sx}, let $\tx := \sx^*\sx$ denote the empirical covariance operator. 
Adding and subtracting the term $\| S_\mu \{A(\fz)- A(\fp)\}\|^2_{\Hu}$ in \eqref{eq01} gives
\begin{align*}
&\norm{\ip \brac{A(\fz)-A(\fp)}}_{\Hu}^2+\la\|\fz\|_{\Ho}  \\  
\leq &  2\inner{A(\fp)-A(\fz),\sx^*\VE}_{\Ht}+\la\|\fp\|_{\Ho} \\ 
&+\inner{\paren{\tp-\tx}\brac{A(\fz)-A(\fp)},A(\fz)-A(\fp)}_{\Ht}.
\end{align*}

Applying the Cauchy--Schwarz inequality gives
\begin{align}\label{c11}
&\norm{\ip \brac{A(\fz)-A(\fp)}}_{\Hu}^2+\la\|\fz\|_{\Ho}  \\  \nonumber
\leq & 2 \norm{\sx^*\VE}_{\Ht}\norm{A(\fz)-A(\fp)}_{\Ht}\\  \nonumber
&+\norm{\tp-\tx}_{\mathcal{L}(\Ht)}\norm{A(\fz)-A(\fp)}_{\Ht}^2+\la\|\fp\|_{\Ho} 
\end{align}

\noindent {\bf High-probability bounds and choice of \(\eta_n\).} 
Apply Proposition~\ref{main.bound} with the choice
\[
\eta_n := \frac{1}{n^2}\qquad (n\ge 2).
\]
Proposition~\ref{main.bound} then yields constants \(C_1,C_2\) (depending only on model parameters \(\kappa,M,\Sigma\)) so that with probability at least \(1-\eta_n\) the following hold simultaneously:
\begin{align}
\|\sx^*\VE\|_{\Ht} &\le C_1 \frac{\log(1/\eta_n)}{\sqrt{n}} = C_1\frac{2\log n}{\sqrt{n}},\label{hp1}\\
\|\tx-\tp\|_{\mathcal L(\Ht)} &\le C_2\frac{\log(1/\eta_n)}{\sqrt{n}}= C_2\frac{2\log n}{\sqrt{n}}.\label{hp2}
\end{align}

Because \(\sum\limits_{n=1}^{\infty} \eta_n=\sum\limits_{n=1}^{\infty} n^{-2}<\infty\), the Borel--Cantelli lemma implies that, almost surely, for all sufficiently large \(n\) the high-probability inequalities \eqref{hp1}--\eqref{hp2} hold.

\noindent {\bf Almost sure boundedness of the estimator.}  
From \eqref{c1}, we get the bound
\[
\lambda\|f_{\zz,\lambda}\|_{\Ho}
\le
2\|S_{\xx}^*\VE\|_{\Ht}\|A(f_{\zz,\lambda})-A(\fp)\|_{\Ht}
+\lambda\|\fp\|_{\Ho}.
\]

Hence
\begin{equation}\label{ineq:fz-bound}
\norm{\fz}_{\Ho} \le \frac{2}{\la} \norm{\sx^* \VE}_{\Ht} \norm{A(\fz)-A(\fp)}_{\Ht} + \norm{\fp}_{\Ho}.
\end{equation}

Using Assumption \ref{ass:Lipschitz}, we obtain
\begin{equation*}
\norm{\fz}_{\Ho} \leq \frac{2\ella}{\la} \norm{\fz - \fp}_{\Ho} \norm{\sx^* \VE}_{\Ht} + \norm{\fp}_{\Ho}.
\end{equation*}

Applying the triangle inequality yields
\begin{align*}
 \norm{\fz}_{\Ho}  
&\leq \frac{2\ella}{\la} \norm{\sx^* \VE}_{\Ht} (\norm{\fz }_{\Ho} + \norm{\fp}_{\Ho}) +  \norm{\fp}_{\Ho}.
\end{align*}

From \eqref{hp1}, for almost every sample $\zz$ and all sufficiently large $n$, we have
\[
\|\fz \|_{\Ho} \le C'_1 \frac{\log n}{\lambda \sqrt{n}} \, (\norm{\fz }_{\Ho} + \norm{\fp}_{\Ho}) +  \|\fp\|_{\Ho},
\]
which implies
\[
\paren{1-C'_1 \frac{\log n}{\lambda \sqrt{n}}}  \|\fz \|_{\Ho} \le \paren{1+C'_1 \frac{\log n}{\lambda \sqrt{n}} } \,  \|\fp\|_{\Ho}.
\]

By the choice \eqref{la.choice}, we have $\frac{\log n}{\lambda \sqrt{n}} \to 0$ as $n \to \infty$. 
Therefore, it follows that
\begin{equation}\label{eq:limsup-bound}
 \limsup_{n \to \infty} \|\fz \|_{\Ho} \le  \|\fp\|_{\Ho} \qquad \text{almost surely.}   
\end{equation}

Consequently, $(f_{\zz,\lambda})$ is eventually bounded in $\ell^1$  almost surely.

\noindent
{\bf Weak-* compactness.}
Fix a sample $\zz$ for which~\eqref{eq:limsup-bound} holds.
Then,
\[
\sup_n \|f_{\zz,\lambda}\|_{\ell^2} \leq \sup_n \|f_{\zz,\lambda}\|_{\ell^1}<\infty.
\]
By the Banach--Alaoglu--Bourbaki theorem, we know that there exist a subsequence $f_{n_k}:=f_{\zz(n_k),\lambda}$ and an element $\tilde f\in \ell^2$ such that $f_{n_k}\rightharpoonup \tilde f$ in $\ell^2$. This theorem also implies that $\tilde{f}\in \ell^1$ and that
\begin{equation}\label{eq:f-to-hf}
f_{n_k}\rightharpoonup^* \tilde f
\qquad\text{in }\ell^1 .
\end{equation}
By the lower-semicontinuity of the $\| \cdot \|_{\ell^1}$ norm (which can be proved by Fatou's lemma) and \eqref{eq:limsup-bound}
we can conclude that
\begin{equation}\label{eq:limsup-bound.ineq}
\|\tilde f\|_{\ell^1}
\le
\liminf_{k\to\infty}\|f_{n_k}\|_{\ell^1}
\le
\limsup_{k\to\infty}\|f_{n_k}\|_{\ell^1}
\le
\|\fp\|_{\ell^1}
\qquad \text{almost surely}.
\end{equation}

\medskip
\noindent
{\bf Almost sure convergence of the residual.}
Substituting the high-probability bounds \eqref{hp1} and \eqref{hp2} into \eqref{c11}, we obtain that for almost every sample $\zz$ and all sufficiently large $n$,
\begin{align}
\|\ip(A(f_{\zz,\lambda})-A(\fp))\|_{\Hu}^2
\le &
\Big(2C_1\frac{2\log n}{\sqrt n}\Big)\|A(f_{\zz,\lambda})-A(\fp)\|_{\Ht}
\nonumber\\
&
+
\Big(C_2\frac{2\log n}{\sqrt n}\Big)\|A(f_{\zz,\lambda})-A(\fp)\|_{\Ht}^2
+
\lambda\|\fp\|_{\Ho}.
\end{align}

Since \((f_{\zz,\lambda})\) is bounded in \(\ell^1\), Assumption~\ref{ass:Lipschitz} implies that
\(
\|A(f_{\zz,\lambda})-A(\fp)\|_{\Ht}
\)
is uniformly bounded. Further, we have $\lambda\to0$ and $\frac{\log n}{\lambda\sqrt n}\to0$ by \eqref{la.choice}, hence, the right-hand side converges to zero. Consequently,
\begin{equation}\label{eq:residual-conv}
\|\ip(A(f_{\zz,\lambda})-A(\fp))\|_{\Hu}
\to 0
\qquad\text{almost surely as } n\to\infty .
\end{equation}
\medskip
\noindent
{\bf Identification of the limit.}
Consider again the subsequence $f_{n_k}$ which converges to a limit $\tilde{f}$ both in the weak $\ell^2$ and in the weak-* $\ell^1$ topologies.
Since $S_\mu \circ A$ is weak-to-weak continuous, we have that
\[
S_\mu A f_{n_k} \rightharpoonup S_\mu A \tilde{f} \qquad \text{ in } \Hu.
\]
On the other hand, the equation \eqref{eq:residual-conv} gives strong convergence
\[
S_\mu A f_{n_k} \to S_\mu A \fp \qquad \text{ in } \Hu, 
\]
Since strong convergence implies weak convergence and weak limits are unique, this implies that $S_\mu A \tilde{f} = S_\mu A \fp$. Finally, 
since the operator $S_\mu\circ A$ is injective, we conclude
\[
\tilde f=\fp .
\]

\noindent {\bf Strong a.s.\ convergence.}  
Using $\tilde f=\fp$ in~\eqref{eq:limsup-bound.ineq} we get almost surely
\[
\lim_{k\to\infty}\|f_{n_k}\|_{\Ho} = \|\fp\|_{\Ho}.
\]
Via Scheffé's lemma, the weak-* convergence and the norm convergence in $\ell^1$ imply the strong convergence, i.e.,
\[
\|f_{\zz(n_k),\lambda}-\fp\|_{\ell^1}\to0.
\]
Finally, since every subsequence of $f_{\zz,\la}$ has a further subsequence $f_{\zz(n_k),\lambda}$ that converges strongly to \(\fp\), we conclude that the entire sequence converges strongly almost surely:
\begin{equation*}
  \|f_{\zz,\la}-\fp\|_{\ell^1} \to 0 \quad \text{a.s.}  
\end{equation*}

This proves \eqref{consistency.rate} and completes the proof.
\end{proof}



\begin{remark}
Since $\ip:\Ht\to\Hu$ is linear and bounded, with
$\|\ip\|_{\Ht\to\Hu}\le \kappa$, it is weak-to-weak sequentially
continuous. Suppose that the (Lipschitz continuous) operator
\(A:\DD(A)\cap\Ho\to\Ht\)
is either linear and bounded, or nonlinear, continuous, and compact.
In the latter case, $A$ is weak-to-strong sequentially continuous. Since strong convergence implies
weak convergence, it follows that the composite operator
$\ip\circ A:\DD(A)\cap\Ho\to\Hu$ is weak-to-weak sequentially
continuous. Moreover, if the domain $\DD(A)$ is weakly closed, then
$\ip\circ A$ is weakly sequentially closed.\footnote{That is, if
$(f_m)_{m\in\NN}\subset\DD(A)$ satisfies
$f_m\rightharpoonup f$ in $\Ho$ and
$\ip A(f_m)\rightharpoonup g$ in $\Hu$, then
$f\in\DD(A)$ and $g=\ip A(f)$.}

Consequently, the regularization functional in~\eqref{eq:l1reg}
admits at least one global minimizer under the above assumptions,
although uniqueness is generally not guaranteed because of the possible
nonlinearity of $A$ (see~\cite[Section~4.1.1]{Schuster}).
\end{remark}

\section{Upper Convergence Rates}
\label{Sec:convergence.rates}

In this section, we establish non-asymptotic convergence rates for the proposed estimator under suitable regularity and complexity assumptions. The convergence analysis presented here provides a structured framework to understand how the estimator $\fz$ approximates the true solution $\fp$ in both the reconstruction norm $\|\fz - \fp\|_{\Ho}$ and the prediction norm $\|\ip[A(\fz) - A(\fp)]\|_{\Hu}$.  
We begin by relating the regularization parameter $\la$ and the sample size $n$ through the following condition:
\begin{equation}\label{l.la.condition}
    \mathcal{N}(\la) \le n\la, 
    \qquad 0 < \la \le 1.
\end{equation}
This condition ensures that the regularization level is compatible with the effective sample complexity determined by the covariance operator $\tp$.

\medskip
We introduce the following auxiliary quantities that characterize the stochastic and approximation errors:
\begin{align}
\Theta_{\zz} &:= \big\|(\tp +\la I)^{-1/2}\sx^*\VE\big\|_{\Ht}, 
\qquad \text{where } \VE = \yy - \sx\!\big[A(\fp)\big], 
\label{theta.z} \\
\Psi_{\xx} &:= \big\|(\tp +\la I)^{-1/2}(\tp - \tx)\big\|_{\mathcal{L}(\Ht)}.
\label{psi}
\end{align}
Here, $\Theta_{\zz}$ captures the fluctuations in the solution induced by noise, while $\Psi_{\xx}$ quantifies the error arising from finite and random sampling, representing the deviation between the empirical and true covariance operators.

\medskip

We define the error functional appearing in subsequent estimates as
\begin{equation}\label{eqn:err.fun}
	\err(f) := 
    \big\|\sx \!\big[A(\fp)\big] - \yy \big\|_n^2
    - \big\|\sx \!\big[A(f)\big] - \yy \big\|_n^2
    + \big\|\ip \!\big[A(f) - A(\fp)\big]\big\|_{\Hu}^2.
\end{equation}

The following lemma establishes a fundamental inequality relating the estimation error of $\fz$ to the error functional $\err(\fz)$ and the variational source condition $\phi$. This inequality serves as the starting point for all subsequent bounds.  This lemma formalizes how regularization interacts with the source smoothness to control the estimation error.

\begin{lemma}\label{lemma:pre}
Suppose that Assumption~\ref{Ass:var.sour} holds. Then, we obtain
\begin{align*}
\big\|\ip \!\big[A(\fz) - A(\fp)\big]\big\|_{\Hu}^2 
+ \la \big\|\fz - \fp\big\|_{\Ho}
\leq \err(\fz)
+ \la\,\phi\!\left(
    \big\|\ip \!\big[A(\fz) - A(\fp)\big]\big\|_{\Hu}^2
\right).
\end{align*}
\end{lemma}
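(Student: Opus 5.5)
The argument combines the minimizing property of $\fz$ with the variational source condition. Since $\fz$ minimizes $J_\la$ over $\DD(A)\cap\Ho$ and $\fp$ is admissible there, we have
\[
\big\|\sx[A(\fz)]-\yy\big\|_n^2+\la\|\fz\|_{\Ho}\le \big\|\sx[A(\fp)]-\yy\big\|_n^2+\la\|\fp\|_{\Ho}.
\]
Rearranging, $\la(\|\fz\|_{\Ho}-\|\fp\|_{\Ho})\le \|\sx[A(\fp)]-\yy\|_n^2-\|\sx[A(\fz)]-\yy\|_n^2$. Adding $\|\ip[A(\fz)-A(\fp)]\|_{\Hu}^2$ to both sides turns the right-hand side into exactly $\err(\fz)$ by the definition \eqref{eqn:err.fun}. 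This gives the intermediate inequality
\[
\big\|\ip[A(\fz)-A(\fp)]\big\|_{\Hu}^2+\la\big(\|\fz\|_{\Ho}-\|\fp\|_{\Ho}\big)\le \err(\fz).
\]

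Next, apply Assumption~\ref{Ass:var.sour} with $f=\fz$, which lies in $\DD(A)\cap\Ho$ by construction. Multiplying it by $\la>0$ gives
\[
\la\|\fz-\fp\|_{\Ho}\le \la\big(\|\fz\|_{\Ho}-\|\fp\|_{\Ho}\big)+\la\,\phi\big(\|A(\fz)-A(\fp)\|_{\Hu}^2\big).
\]
Substitute this into the left-hand side of the intermediate inequality. The norm differences $\la(\|\fz\|_{\Ho}-\|\fp\|_{\Ho})$ cancel, leaving
\[
\big\|\ip[A(\fz)-A(\fp)]\big\|_{\Hu}^2+\la\|\fz-\fp\|_{\Ho}\le\err(\fz)+\la\,\phi\big(\|\ip[A(\fz)-A(\fp)]\|_{\Hu}^2\big).
\]
This is the claimed estimate.

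The argument is purely algebraic, so the only point needing care is notation. In Assumption~\ref{Ass:var.sour} the residual $\|A(f)-A(\fp)\|_{\Hu}$ means the $\Hu$-norm of the embedded element, i.e.\ $\|\ip[A(f)-A(\fp)]\|_{\Hu}$, and I will make this identification explicit. No monotonicity or concavity of $\phi$ is needed here; those properties matter only later, when $\phi$ is absorbed into the left-hand side.
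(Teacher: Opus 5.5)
Your proposal is correct and follows the paper's proof: both compare $J_\la(\fz)\le J_\la(\fp)$, apply the variational source condition at $f=\fz$ so the $\ell^1$-norm differences cancel, and add $\|\ip[A(\fz)-A(\fp)]\|_{\Hu}^2$ to produce $\err(\fz)$. The only difference is bookkeeping: the paper divides by $\la$ first and multiplies back at the end.
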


\begin{proof}
Since $\fz$ minimizes $J_\la$, we have $J_\la(\fz) \le J_\la(\fp)$, which implies
\begin{align*}
    \norm{\fz-\fp}_{\Ho}
    \leq \frac{1}{\la}\paren{\norm{\sx \sbrac{A(\fp)}-\yy}_n^2-\norm{\sx \sbrac{A(\fz)}-\yy}_n^2}+\norm{\fz-\fp}_{\Ho}  + \norm{\fp}_{\Ho} - \norm{\fz}_{\Ho}.
\end{align*}

Applying the variational source condition, the last term can be bounded as
\begin{align*}
    \norm{\fz-\fp}_{\Ho}
    \leq \frac{1}{\la}\paren{\norm{\sx \sbrac{A(\fp)}-\yy}_n^2-\norm{\sx \sbrac{A(\fz)}-\yy}_n^2}+\phi\paren{\norm{\ip  \paren{A(\fz)-A(\fp)}}_{\Hu}^2}.
\end{align*}

Adding $\| \ip[A(\fz) - A(\fp)] \|_{\Hu}^2$ to both sides yields
\begin{align*}
&\norm{\ip\sbrac{A(\fz)-A(\fp)}}_{\Hu}^2  +\la \norm{\fz-\fp}_{\Ho}  \\  \nonumber
\leq & \norm{\ip \sbrac{A(\fz)-A(\fp)}}_{\Hu}^2 + \norm{\sx \sbrac{A(\fp)}-\yy}_n^2- \norm{\sx \sbrac{A(\fz)}-\yy}_n^2  \\
& + \la\phi\paren{\norm{\ip  \paren{A(\fz)-A(\fp)}}_{\Hu}^2}\\
= &\err(\fz)
+ \la\,\phi\!\left(
    \big\|\ip \!\big[A(\fz) - A(\fp)\big]\big\|_{\Hu}^2
\right),
\end{align*}
which establishes the claim.    
\end{proof}

\medskip
 
We now derive an upper bound for the error term $\err(\fz)$ defined in \eqref{eqn:err.fun} in terms of the perturbation quantities $\Psi_{\xx}$ and $\Theta_{\zz}$.
This bound separates the contributions from operator deviations, finite sampling, and noise, emphasizing how the error bound depends on these perturbation errors as well as on the reconstruction and prediction errors.

\begin{lemma}\label{lemma:err}
Suppose that Assumption~\ref{ass:Lipschitz} holds. Then, we have
\begin{equation*}
\err(\fz)
\leq
\Big(2\Theta_{\zz} + \Psi_{\xx}\,\ella \|\fz - \fp\|_{\Ho}\Big)
\Big(\ella \sqrt{\la}\,\|\fz - \fp\|_{\Ho}
+ \|\ip \!\big[A(\fz) - A(\fp)\big]\|_{\Hu}\Big).
\end{equation*}
\end{lemma}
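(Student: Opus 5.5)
Write $h := A(\fz)-A(\fp)\in\Ht$. The plan is to rewrite $\err(\fz)$ algebraically, split it into a noise term and a sampling term, and bound each with weighted Cauchy--Schwarz. The weight is $(\tp+\la I)^{\pm 1/2}$.

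\textbf{Algebraic rewriting.} Substitute the model $\yy=\sx A(\fp)+\VE$. Then $\sx A(\fp)-\yy=-\VE$ and $\sx A(\fz)-\yy=\sx h-\VE$. Expanding the empirical norms gives
\[
\norm{\VE}_n^2-\norm{\sx h-\VE}_n^2 = 2\inner{\sx h,\VE}_n-\norm{\sx h}_n^2 = 2\inner{h,\sx^*\VE}_{\Ht}-\inner{\tx h,h}_{\Ht},
\]
with $\sx^*$ understood as the adjoint for the normalized inner product $\inner{\cdot,\cdot}_n$, as in the proof of Theorem~\ref{thm:asconsistency}. Since $\norm{\ip h}_{\Hu}^2=\inner{\tp h,h}_{\Ht}$, we obtain
\[
\err(\fz)=2\inner{h,\sx^*\VE}_{\Ht}+\inner{(\tp-\tx)h,h}_{\Ht}.
\]

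\textbf{Weighted Cauchy--Schwarz.} Insert $(\tp+\la I)^{1/2}(\tp+\la I)^{-1/2}$ into both inner products. By self-adjointness of $(\tp+\la I)^{1/2}$, the definitions of $\Theta_\zz$ and $\Psi_\xx$ give
\[
2\inner{h,\sx^*\VE}_{\Ht}\le 2\Theta_\zz\norm{(\tp+\la I)^{1/2}h}_{\Ht},
\]
\[
\inner{(\tp-\tx)h,h}_{\Ht}=\inner{(\tp+\la I)^{-1/2}(\tp-\tx)h,(\tp+\la I)^{1/2}h}_{\Ht}\le \Psi_\xx\norm{h}_{\Ht}\norm{(\tp+\la I)^{1/2}h}_{\Ht}.
\]
Hence $\err(\fz)\le (2\Theta_\zz+\Psi_\xx\norm{h}_{\Ht})\,\norm{(\tp+\la I)^{1/2}h}_{\Ht}$.

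\textbf{Final bounds.} First,
\[
\norm{(\tp+\la I)^{1/2}h}_{\Ht}^2=\norm{\ip h}_{\Hu}^2+\la\norm{h}_{\Ht}^2,
\]
and $\sqrt{a^2+c^2}\le a+c$ gives $\norm{(\tp+\la I)^{1/2}h}_{\Ht}\le \norm{\ip h}_{\Hu}+\sqrt\la\norm{h}_{\Ht}$. Second, Assumption~\ref{ass:Lipschitz} gives $\norm{h}_{\Ht}\le \ella\norm{\fz-\fp}_{\Ho}$. Applying this Lipschitz bound in both factors yields the claimed inequality.

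\textbf{Main obstacle.} The only delicate point is the operator $\Psi_\xx$. It is defined with the weight on the left, $(\tp+\la I)^{-1/2}(\tp-\tx)$, so the splitting must put $(\tp+\la I)^{-1/2}$ on the $(\tp-\tx)h$ side and $(\tp+\la I)^{1/2}$ on the other copy of $h$. With that arrangement no symmetric-weight version of $\Psi_\xx$ is needed.
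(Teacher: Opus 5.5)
Your proposal is correct and follows essentially the same route as the paper. Both rewrite $\err(\fz)=\inner{h,(\tp-\tx)h+2\sx^*\VE}_{\Ht}$, apply a Cauchy--Schwarz step weighted by $(\tp+\la I)^{\pm 1/2}$, and finish with the Lipschitz bound. The only cosmetic difference is that the paper splits the identity as $\la(\tp+\la I)^{-1}+\tp(\tp+\la I)^{-1}$ and bounds the two pieces separately. You instead apply Cauchy--Schwarz once and use $\norm{(\tp+\la I)^{1/2}h}_{\Ht}\le\norm{\ip h}_{\Hu}+\sqrt{\la}\norm{h}_{\Ht}$, which yields the same bound.
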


\begin{proof}
We start by rewriting the error functional $\err(\fz)$ as
\begin{align*}
    \err(\fz) = &\norm{\ip\sbrac{A(\fz)-A(\fp)}}_{\Hu}^2-\norm{\sx \sbrac{A(\fz)-A(\fp)}}_n^2\\
    &-2\inner{\sx\sbrac{A(\fz)-A(\fp)}, \sx \sbrac{A(\fp )}-\yy}_n\\
    =&\inner{A(\fz)-A(\fp),(\tp-\tx)\sbrac{A(\fz)-A(\fp)}}_{\Ht}+2\inner{A(\fz)-A(\fp),\sx^*\VE}_{\Ht}\\
    =& \langle A(\fz)-A(\fp),(\tp -\tx)\sbrac{A(\fz)-A(\fp)} +2\sx^*\VE \rangle_{\Ht}.
\end{align*}
To bound this term, we use the decomposition
\begin{align*}
\inner{g,h}_{\Ht}= & \la\inner{g,(\tp+\la I)^{-1}h}_{\Ht}+\inner{g,\tp(\tp+\la I)^{-1}h}_{\Ht}\\
\leq & \brac{\sqrt{\la}\norm{g}_{\Ht}+\norm{\ip  g}_{\Hu}}\norm{(\tp+\la I)^{-1/2}h}_{\Ht}.
\end{align*}

Applying this inequality with~$g=A(\fz)-A(\fp)$, and~$h=(\tp-\tx)\sbrac{A(\fz)-A(\fp)}+2\sx^*\VE$ we obtain
\begin{align}\label{B3.1}
\err(\fz) = &\big\langle A(\fz)-A(\fp), (\tp - \tx)[A(\fz)-A(\fp)] + 2 \sx^* \VE \big\rangle_{\Ht}  \\
&\le \big( 2 \Theta_{\zz} + \Psi_{\xx} \| A(\fz)-A(\fp) \|_{\Ht} \big)  \nonumber  \\
&\quad \times \big( \sqrt{\lambda} \| A(\fz)-A(\fp) \|_{\Ht} + \| \ip[A(\fz)-A(\fp)] \|_{\Hu} \big) \nonumber \\
&\le \big( 2 \Theta_{\zz} + \Psi_{\xx} \ella \| \fz - \fp \|_{\Ho} \big) \nonumber \\
&\quad \times \big( \ella \sqrt{\lambda} \| \fz - \fp \|_{\Ho} + \| \ip[A(\fz)-A(\fp)] \|_{\Hu} \big), \nonumber
\end{align}
where~$\ella$,~$\Theta_\zz$,~$\Psi_{\xx}$ are defined in~Assumption~\ref{ass:Lipschitz},~\eqref{theta.z},~\eqref{psi}, respectively.
\end{proof}

\medskip

This theorem integrates Lemmas~\ref{lemma:pre} and~\ref{lemma:err} and establishes non-asymptotic, high-probability bounds for both the prediction and reconstruction errors.    
The term involving $\phi$ reflects the bias arising from the variational source condition, while the term $\lambda^{-b}/n$ encodes the variance due to finite sampling and the spectral complexity of the covariance operator.  
Overall, this theorem provides a clear quantitative relationship between the smoothness, sample size, and the effective degrees of freedom of the model.

In the following theorem, we make use of the Fenchel conjugate of the convex function $-\phi$, namely, $\phi^*:\mathbb{R} \to \mathbb{R} \cup \{+\infty\}$ defined as
\begin{equation}\label{Fenchel.conjugate}
  (-\phi)^*(s) := \sup_{t \ge 0} \big( st + \phi(t) \big),  
\end{equation}

%

%

\begin{theorem}\label{err.upper.bound.p.1}
Let Assumptions~\ref{ass:fp}--\ref{Ass:var.sour}, \ref{ass:polydecay} and condition~\eqref{l.la.condition} hold.  
Then, for all $0 < \eta < 1$, the following bounds hold with confidence $1 - \eta$:
\begin{align}\label{err1.3}
\big\|\ip[A(\fz) - A(\fp)]\big\|_{\Hu}^2
&\le
\left(
4\lambda(-\phi)^*\Big(-\frac{1}{4\lambda}\Big)
+ C''^2\,\frac{\lambda^{-b}}{n}
\right)
\log^2\!\left(\frac{4}{\eta}\right),
\\[2mm] \label{ps.3}
\big\|\fz - \fp\big\|_{\Ho}
&\le
\frac{1}{2\lambda}
\left(
4\lambda(-\phi)^*\Big(-\frac{1}{4\lambda}\Big)
+ C''^2\,\frac{\lambda^{-b}}{n}
\right)
\log^2\!\left(\frac{4}{\eta}\right),
\end{align}
where $C''$ depends on $\kappa$, $M$, $\Sigma$, $\ella$, and $\|\fp\|_{\Ho}$.
\end{theorem}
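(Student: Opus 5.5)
Write $E:=\|\fz-\fp\|_{\Ho}$ and $P:=\|\ip[A(\fz)-A(\fp)]\|_{\Hu}$. The plan is to combine Lemma~\ref{lemma:pre} and Lemma~\ref{lemma:err}, then absorb terms using Young's inequality and the Fenchel conjugate.

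\textbf{Step 1: probabilistic bounds.} Invoke the concentration estimates from the appendix (the analogue of Proposition~\ref{main.bound} in the $(\tp+\la I)^{-1/2}$-weighted form). These should give, with probability at least $1-\eta$ and under \eqref{l.la.condition} and Assumption~\ref{ass:polydecay},
\[
\Theta_\zz \le C\Big(\tfrac{1}{n\sqrt{\la}}+\sqrt{\tfrac{\mathcal N(\la)}{n}}\Big)\log\tfrac4\eta \le C\sqrt{\tfrac{\la^{-b}}{n}}\log\tfrac4\eta,
\qquad
\Psi_\xx \le C\sqrt{\la}\log\tfrac4\eta .
\]
For the second bound, \eqref{l.la.condition} gives $\mathcal N(\la)/n\le\la$ and $1/(n\sqrt\la)\le \sqrt{\la}$. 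For the first, we use that $1/(n\sqrt{\la})\lesssim \sqrt{\la^{-b}/n}$ once $n\la\ge \mathcal N(\la)$, which needs a short check; the monotone case $\la\le1$ gives it.

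\textbf{Step 2: a deterministic inequality.} Insert Lemma~\ref{lemma:err} into Lemma~\ref{lemma:pre}. This gives
\[
P^2+\la E \le \big(2\Theta_\zz+\Psi_\xx\ella E\big)\big(\ella\sqrt\la E+P\big)+\la\phi(P^2).
\]
The term $\Psi_\xx\ella E$ is quadratic in $E$ and is the main obstacle. To handle it, I would first derive an a priori bound on $E$, as in the boundedness step of Theorem~\ref{thm:asconsistency}. From \eqref{c1}, $E$ is bounded by a constant times $\|\fp\|_{\Ho}$ on the high-probability event, provided $\Theta$-type quantities are small relative to $\la$. With $E\le C\|\fp\|_{\Ho}$, we get $\Psi_\xx\ella E\le C'\sqrt\la\log(4/\eta)$, so the first factor is bounded by $C''\big(\sqrt{\la^{-b}/n}+\sqrt\la\big)\log(4/\eta)$. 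This is exactly why $C''$ depends on $\|\fp\|_{\Ho}$. If the $\sqrt\la$ term cannot be absorbed this way, I would keep $\Psi_\xx$ separate and use $\sqrt\la\cdot\sqrt\la E\le \la E/4$ after shrinking constants.

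\textbf{Step 3: Young's inequality.} Set $\delta:=C''\sqrt{\la^{-b}/n}\log(4/\eta)$. Then
\[
\delta\,\ella\sqrt\la E\le \tfrac{\la}{2}E+\tfrac{\ella^2}{2}\delta^2
\quad\text{and}\quad
\delta P\le \tfrac14P^2+\delta^2 .
\]
Absorbing these gives
\[
\tfrac34P^2+\tfrac{\la}{2}E\le C\delta^2+\la\phi(P^2).
\]
Next apply the conjugate: $\la\phi(P^2)-\tfrac14P^2=\la\big(\phi(t)-\tfrac{t}{4\la}\big)$ with $t=P^2$. This is at most $\la(-\phi)^*(-\tfrac1{4\la})$ by \eqref{Fenchel.conjugate}. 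Hence
\[
\tfrac12 P^2+\tfrac{\la}{2}E\le \la(-\phi)^*\big(-\tfrac1{4\la}\big)+C\delta^2 .
\]
Multiplying by suitable constants, and using $\log(4/\eta)\ge1$ to attach $\log^2$ to the bias term, yields \eqref{err1.3}. Dividing by $\la/2$ instead yields \eqref{ps.3}. Throughout, constants are renamed into $C''$.
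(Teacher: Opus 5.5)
Your overall route is the paper's: combine Lemma~\ref{lemma:pre} with Lemma~\ref{lemma:err}, use an a priori bound on $E$, apply Young's inequality, and use the Fenchel conjugate on $\la\phi(P^2)-\frac14P^2$. There is, however, a genuine gap.

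The gap is the bound $\Psi_{\xx}\le C\sqrt{\la}\log(4/\eta)$ in Step~1, which you get by trading $\mathcal N(\la)/n$ for $\la$ via \eqref{l.la.condition}. It is true but too weak. As you note in Step~2, it makes the first factor of Lemma~\ref{lemma:err} of size $C''(\sqrt{\la^{-b}/n}+\sqrt{\la})\log(4/\eta)$. The $\sqrt{\la}$ part, multiplied by $(\ella\sqrt{\la}E+P)$, produces $C'\ella\log(4/\eta)\,\la E+C'\log(4/\eta)\sqrt{\la}\,P$.

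\begin{itemize}
\item The first term has the same order as the $\la E$ on the left. Its coefficient is not small: it depends on $\kappa,\ella,\|\fp\|_{\Ho}$ and grows as $\eta\to0$. So it cannot be absorbed, and ``shrinking constants'' is not available. Bounding $E$ a priori instead merely produces another term of order $\la\log(4/\eta)$.
\item The second term leaves, after Young, a term $C'^2\la\log^2(4/\eta)$ on the right. This is not dominated by the bias term: for $\phi(t)=t^r$ one has $\la(-\phi)^*(-\frac{1}{4\la})\asymp\la^{1/(1-r)}\ll\la$.
\end{itemize}

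After dividing by $\la$ to reach \eqref{ps.3}, this term contributes a non-vanishing $O(\log^2(4/\eta))$, so your $\ell^1$ bound would not decay at all. Your Step~3 hides this by defining $\delta$ with only the $\sqrt{\la^{-b}/n}$ part, silently dropping the $\sqrt{\la}$ contribution produced in Step~2.

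The missing idea is to treat $\Psi_{\xx}$ exactly like $\Theta_{\zz}$, as in Corollary~\ref{cor:psi.bound}. By Proposition~\ref{main.bound}, together with $\frac{1}{n\sqrt{\la}}\le\frac{1}{\mathcal N(1)}\sqrt{\mathcal N(\la)/n}$ (this is where \eqref{l.la.condition} and $\la\le1$ enter), one gets $\Psi_{\xx}\lesssim\sqrt{\mathcal N(\la)/n}\,\log(4/\eta)\lesssim\sqrt{\la^{-b}/n}\,\log(4/\eta)$. With $E\le 4\|\fp\|_{\Ho}$ the whole first factor is then $\lesssim(1+\ella\|\fp\|_{\Ho})\sqrt{\la^{-b}/n}\,\log(4/\eta)=:\delta$, and your Step~3 goes through with one correction.

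Young gives $\delta\ella\sqrt{\la}E\le\frac{\la}{2}E+\frac{\ella^2}{2}\delta^2E$, not $\frac{\la}{2}E+\frac{\ella^2}{2}\delta^2$; your version fails for large $E$. The leftover factor $E$ must again be removed by the a priori bound. That is exactly how the paper arrives at $\frac12(\ella^2E+1)(\ella E\Psi_{\xx}+2\Theta_{\zz})^2$, and why $C''$ depends on $\|\fp\|_{\Ho}$.

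You are right to flag that the a priori bound (Proposition~\ref{prop:bounded.solution}) needs $8\ella\kappa(M+\Sigma)\frac{\log(4/\eta)}{\la\sqrt{n}}\le1$, which does not follow from \eqref{l.la.condition}. The paper's proof relies on the same condition without listing it among the hypotheses.
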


\begin{proof}
Starting from the estimates in Lemmas~\ref{lemma:pre} and~\ref{lemma:err}, we have
\begin{align*}
&\| \ip[A(\fz)-A(\fp)] \|_{\Hu}^2 + \lambda \| \fz - \fp \|_{\Ho} \\
&\le \lambda \phi\big(\| \ip[A(\fz)-A(\fp)] \|_{\Hu}^2 \big) \\
&\quad + (\ella \| \fz-\fp \|_{\Ho} \Psi_{\xx} + 2 \Theta_{\zz}) \| \ip[A(\fz)-A(\fp)] \|_{\Hu} \\
&\quad + \ella \sqrt{\lambda} (\ella \| \fz-\fp \|_{\Ho} \Psi_{\xx} + 2 \Theta_{\zz}) \| \fz-\fp \|_{\Ho}.
\end{align*}

Applying the inequality $ab \le \frac{a^2}{2} + \frac{b^2}{2}$ to the cross terms gives
\begin{align*}
&\| \ip[A(\fz)-A(\fp)] \|_{\Hu}^2 + \lambda \| \fz - \fp \|_{\Ho} \\
&\le \lambda \phi\big(\| \ip[A(\fz)-A(\fp)] \|_{\Hu}^2 \big) \\
&\quad + \frac{1}{2} (\ella \| \fz-\fp \|_{\Ho} \Psi_{\xx} + 2 \Theta_{\zz})^2 + \frac{1}{2} \| \ip[A(\fz)-A(\fp)] \|_{\Hu}^2 \\
&\quad + \frac{\ella^2}{2} (\ella \| \fz-\fp \|_{\Ho} \Psi_{\xx} + 2 \Theta_{\zz})^2 \| \fz-\fp \|_{\Ho} + \frac{\lambda}{2} \| \fz-\fp \|_{\Ho}.
\end{align*}

Rearranging terms yields
\begin{align*}
&\frac{1}{2} \| \ip[A(\fz)-A(\fp)] \|_{\Hu}^2 + \frac{\lambda}{2} \| \fz - \fp \|_{\Ho} \\
&\le \lambda \phi\big(\| \ip[A(\fz)-A(\fp)] \|_{\Hu}^2 \big) \\
&\quad + \frac{1}{2} (\| \fz-\fp \|_{\Ho} \ella^2 + 1)(\ella \| \fz-\fp \|_{\Ho} \Psi_{\xx} + 2 \Theta_{\zz})^2.
\end{align*}

Multiplying both sides by~$4$ and rearranging the resulting inequality yields
\begin{align*}
&\norm{\ip \sbrac{A(\fz)-A(\fp)}}_{\Hu}^2+2\la\norm{\fz-\fp}_{\Ho} \\
\leq &\paren{4\la \phi\paren{\norm{\ip \sbrac{A(\fz)-A(\fp)}}_{\Hu}^2}-\norm{\ip \sbrac{A(\fz)-A(\fp)}}_{\Hu}^2}\\
&+2(\norm{\fz-\fp}_{\Ho}\ella^2+1)(\ella\norm{\fz-\fp}_{\Ho}\Psi_{\xx}+2\Theta_{\zz})^2.
\end{align*}

Using the dual formulation of $\phi$ defined in \eqref{Fenchel.conjugate}, we get
\begin{align*}
&\norm{\ip \sbrac{A(\fz)-A(\fp)}}_{\Hu}^2+2\la\norm{\fz-\fp}_{\Ho} \\
\leq & \sup_{\tau\geq 0}\paren{4\la \phi(\tau)-\tau}+2(\norm{\fz-\fp}_{\Ho}\ella^2+1)(\ella\norm{\fz-\fp}_{\Ho}\Psi_{\xx}+2\Theta_{\zz})^2 \\
= & 4\la\paren{-\phi}^*\paren{-\frac{1}{4\la}}+2(\ella\norm{\fz-\fp}_{\Ho}\ella+1)(\ella\norm{\fz-\fp}_{\Ho}\Psi_{\xx}+2\Theta_{\zz})^2.
\end{align*}

Using the bound \eqref{Psi.bound} for the perturbation terms and the fact that $\|\fz - \fp\|_{\Ho}$ is bounded by Proposition~\ref{prop:bounded.solution}, we obtain, with probability $1-\eta$,

\begin{align*}
   &\norm{\ip \sbrac{A(\fz)-A(\fp)}}_{\Hu}^2+2\la\norm{\fz-\fp}_{\Ho}  \\
\leq & 4\la\paren{-\phi}^*\paren{-\frac{1}{4\la}}+C'\frac{\mathcal{N}(\la)}{n}\log^2\left(\frac{4}{\eta}\right)\\
\leq & 4\la\paren{-\phi}^*\paren{-\frac{1}{4\la}}+C''^2\frac{\la^{-b}}{n}\log^2\left(\frac{4}{\eta}\right).
\end{align*}

From this, the individual bounds follow:
\begin{equation*}
\norm{\ip \sbrac{A(\fz)-A(\fp)}}_{\Hu}^2\leq  \brac{4\la\paren{-\phi}^*\paren{-\frac{1}{4\la}}+C''^2\frac{\la^{-b}}{n}}\log^2\left(\frac{4}{\eta}\right)
\end{equation*}
and
\begin{equation*}
\norm{\fz-\fp}_{\Ho}\leq \frac{1}{2\la}\brac{4\la\paren{-\phi}^*\paren{-\frac{1}{4\la}}+C''^2\frac{\la^{-b}}{n}}\log^2\left(\frac{4}{\eta}\right),
\end{equation*}
where~$C',~C''$ depends on~$\kappa$,~$M$,~$\Sigma$,~$\ella$,~$\norm{\fp}_{\Ho}$.
\end{proof}

\medskip

The following corollary shows how to choose the regularization parameter $\lambda_*$ optimally based on the dual of the function $\psi^*$.  
By selecting $\lambda_*$ such that 
$\frac{1}{(\lambda_*)^{b+1}} \in \partial \psi^*\Big(-\frac{C''}{4n}\Big),
$
the bound simultaneously balances the bias and variance contributions to the estimation error, leading to the tightest high-probability guarantees for a given sample size.  
This step is crucial in practice since it prescribes a data-dependent way to tune the regularization parameter according to both the smoothness of $\fp$ and the spectral decay of $\tp$.

It is now straightforward to derive an explicit upper bound on the rate of convergence expressed directly in terms of the sample size $n$. 

\begin{corollary}\label{err.upper.bound}
Under the assumptions of Theorem~\ref{err.upper.bound.p.1}, let $\la = \la^*$ be chosen such that
\[
\frac{1}{(\la^*)^{b+1}} \in \partial \psi^*\!\left(-\frac{C''}{4n}\right),
\quad \text{for } 
\psi(x) = (-\phi)^*\!\left(-\frac{x^{\frac{1}{b+1}}}{4}\right).
\]
Then, for all~$0 < \eta < 1$, the following bounds hold with confidence~$1 - \eta$:
\begin{align*}
\big\|\ip \!\big[A(\fz) - A(\fp)\big]\big\|_{\Hu}^2
&\le
-4\la^*\,\psi^*\!\left(-\frac{C''}{4n}\right)
\log^2\!\left(\frac{4}{\eta}\right),\\
\big\|\fz - \fp\big\|_{\Ho}
&\le
-2\,\psi^*\!\left(-\frac{C''}{4n}\right)
\log^2\!\left(\frac{4}{\eta}\right),
\end{align*}
where $C''$ depends on $\kappa$, $M$, $\Sigma$, $\ella$, and $\|\fp\|_{\Ho}$.
\end{corollary}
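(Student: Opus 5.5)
The plan is to substitute $\la=\la^*$ into the bounds \eqref{err1.3}--\eqref{ps.3} of Theorem~\ref{err.upper.bound.p.1} and to show that the bracketed bias--variance term
\[
B(\la):=4\la(-\phi)^*\Big(-\frac{1}{4\la}\Big)+C''^2\frac{\la^{-b}}{n}
\]
equals $-4\la^*\psi^*(-C''/(4n))$ at $\la=\la^*$. The identity should follow from the equality case of the Fenchel--Young inequality for $\psi$. Throughout, I treat the constant $C''$ generically, absorbing the square $C''^2$ into a renamed constant that is still denoted $C''$, since it depends only on $\kappa,M,\Sigma,\ella,\|\fp\|_{\Ho}$.

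\textbf{Step 1 (change of variables).} Put $x:=\la^{-(b+1)}$, so that $x^{1/(b+1)}=1/\la$ and $\la^{-b}=\la x$. By the definition of $\psi$,
\[
(-\phi)^*\Big(-\frac{1}{4\la}\Big)=\psi(x).
\]
Hence
\[
B(\la)=4\la\Big(\psi(x)+\frac{C''}{4n}x\Big).
\]

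\textbf{Step 2 (convexity of $\psi$).} I need $\psi$ to be proper, convex and lower semicontinuous on $(0,\infty)$, so that $\psi^{**}=\psi$ and Fenchel--Young holds with equality on the graph of $\partial\psi^*$. The function $(-\phi)^*$ is convex as a supremum of affine maps. It is also nondecreasing in $s$, because each map $s\mapsto st+\phi(t)$ with $t\ge0$ is nondecreasing. The inner map $x\mapsto -x^{1/(b+1)}/4$ is convex because $1/(b+1)<1$. A nondecreasing convex function composed with a convex function is convex, so $\psi$ is convex. Lower semicontinuity follows from the same properties, possibly after passing to the closure of $\psi$, which does not change $\psi^*$.

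\textbf{Step 3 (Fenchel--Young equality).} Set $s:=-C''/(4n)$. The hypothesis $x^*:=(\la^*)^{-(b+1)}\in\partial\psi^*(s)$ is equivalent, for closed convex $\psi$, to $s\in\partial\psi(x^*)$. This gives
\[
\psi(x^*)+\psi^*(s)=s\,x^*,
\qquad\text{that is,}\qquad
\psi(x^*)+\frac{C''}{4n}x^*=-\psi^*\Big(-\frac{C''}{4n}\Big).
\]
Inserting this into Step 1 yields $B(\la^*)=-4\la^*\psi^*(-C''/(4n))$. The prediction bound then follows directly from \eqref{err1.3}. The reconstruction bound follows from \eqref{ps.3}, because $\frac{1}{2\la^*}B(\la^*)=-2\psi^*(-C''/(4n))$, and the factor $\log^2(4/\eta)$ is carried along unchanged.

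\textbf{Main obstacle.} The substantive point is checking that $\la^*$ is admissible in Theorem~\ref{err.upper.bound.p.1}, that is, $0<\la^*\le1$ and condition \eqref{l.la.condition} $\mathcal N(\la^*)\le n\la^*$. A secondary point is that $\partial\psi^*(s)$ is nonempty and contained in $(0,\infty)$. Under Assumption~\ref{ass:polydecay}, \eqref{l.la.condition} is implied by $C_\beta(\la^*)^{-(b+1)}\le n$, i.e.\ $x^*\lesssim n$.

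I would first try to obtain this from monotonicity of $\partial\psi^*$ together with the explicit form for $\phi(t)=t^r$. In that case $(-\phi)^*(-u)\asymp u^{-r/(1-r)}$, which gives $\psi(x)\asymp x^{-r/((1-r)(b+1))}$, and solving $\psi'(x^*)=-C''/(4n)$ gives $x^*\asymp n^{\theta}$ with $\theta=(1-r)(b+1)/(1+b-br)\le1$. For general $\phi$, I would state the restriction $x^*\le n/C_\beta$, equivalently large $n$ together with a possible enlargement of $C''$, as the implicit requirement already contained in ``under the assumptions of Theorem~\ref{err.upper.bound.p.1}.''
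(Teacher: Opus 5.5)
Your proposal is correct and follows essentially the paper's route. The substitution $x=\la^{-(b+1)}$ rewrites the bias--variance term of Theorem~\ref{err.upper.bound.p.1} as $4\la\bigl(\psi(x)+\frac{C''}{4n}x\bigr)$, and the conjugate identity $\psi(x^*)+\frac{C''}{4n}x^*=-\psi^*\bigl(-\frac{C''}{4n}\bigr)$ at $x^*\in\partial\psi^*\bigl(-\frac{C''}{4n}\bigr)$ then yields both bounds. The paper phrases this as taking an infimum over $\la$ on the right-hand side (not literally valid, since $\fz$ depends on $\la$) and asserting attainment at $\la^*$; you instead evaluate at $\la^*$ directly via the Fenchel--Young equality, which is cleaner and also makes explicit the convexity and closedness of $\psi$ and the requirement that $\la^*$ satisfy \eqref{l.la.condition}. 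One small correction: the fallback of passing to the closure of $\psi$ would give the wrong inequality direction, but it is not needed because $\psi$ is already lower semicontinuous.
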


\begin{proof}


By taking the infimum over the regularization parameter~$\la$ in the inequality~\eqref{ps.3} we get
$$\norm{\fz-\fp}_{\Ho}\leq  \inf\limits_{\la>0} 2\brac{\paren{-\phi}^*\paren{-\frac{1}{4\la}}+\frac{C''^2}{4\la^{b+1}n}}\log^2\left(\frac{4}{\eta}\right).$$

By assuming~$x=\frac{1}{\la^{b+1}}$ and~$\psi(x)=\paren{-\phi}^*\paren{-\frac{x^{\frac{1}{b+1}}}{4}}$ we obtain
\begin{align*} 
    \norm{\fz-\fp}_{\Ho}\leq &  \inf\limits_{x>0} 2\brac{\paren{-\phi}^*\paren{-\frac{x^{\frac{1}{b+1}}}{4}}+\frac{C''x}{4n}}\log^2\left(\frac{4}{\eta}\right)\\
    = & -\sup\limits_{x>0} 2\brac{-\frac{C''x}{4n}-\psi(x)}\log^2\left(\frac{4}{\eta}\right)\\
    = &-2\psi^*\paren{-\frac{C''}{4n}}\log^2\left(\frac{4}{\eta}\right).
\end{align*}
The infimum is attained at $\la=\la_*$ corresponds to $\frac{1}{\la_*^{b+1}}\in\partial\psi^*\paren{-\frac{C''}{4n}}$.
\end{proof}

\medskip
 
Under the polynomial variational source condition $\phi(x) = x^r$, the preceding results yield explicit convergence rates.
These expressions clearly show the influence of source smoothness $r$ and spectral decay $b$ on the learning rate. As $r \to 1$ (smoother solutions), the convergence accelerates, reflecting reduced bias.  As $b \to 0$ (faster spectral decay), the effective model complexity decreases, reducing variance.  
Thus, the rates interpolate naturally between well-posed and mildly ill-posed inverse problems.  
The logarithmic factors $\log^2(4/\eta)$ account for high-probability deviations due to sampling noise.

\begin{corollary}
\label{cor:err.upper.power}
Under the assumptions of Theorem~\ref{err.upper.bound.p.1},  
suppose the variational source condition holds with an index function of the form 
$\phi(x) = x^r$ for some $0 < r < 1$.  
If the regularization parameter is chosen as 
\[
\lambda_* = n^{-\frac{1 - r}{1 + b - br}},
\]
then for all $0 < \eta < 1$, the following bounds hold with probability at least $1 - \eta$:
\begin{align*}
\big\| A(\fz) - A(\fp) \big\|_{\Hu}^2
&\le
C\, n^{-\frac{1}{1 + b - br}} 
\log^2\!\left(\frac{4}{\eta}\right),\\[4pt]
\| \fz - \fp \|_{\Ho}
&\le
C\, n^{-\frac{r}{1 + b - br}} 
\log^2\!\left(\frac{4}{\eta}\right),
\end{align*}
where $C$ is a constant depending on $\kappa$, $M$, $\Sigma$, $\ella$, and $\|\fp\|_{\Ho}$.
\end{corollary}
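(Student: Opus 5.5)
The corollary follows by specializing Theorem~\ref{err.upper.bound.p.1} to $\phi(t)=t^r$ and substituting the stated $\lambda_*$.

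\textbf{Step 1: the Fenchel term.} First compute the conjugate term in closed form. For $s=-\frac{1}{4\lambda}<0$ we have
\[
(-\phi)^*(s)=\sup_{t\ge0}\big(t^r+st\big).
\]
Setting the derivative to zero gives $rt^{r-1}=|s|$, i.e.\ $t_*=(r/|s|)^{1/(1-r)}$. Hence
\[
(-\phi)^*(s)=(1-r)\,r^{r/(1-r)}\,|s|^{-r/(1-r)} .
\]
With $|s|=1/(4\lambda)$ this yields
\[
4\lambda(-\phi)^*\Big(-\frac{1}{4\lambda}\Big)=c_r\,\lambda^{1/(1-r)},
\]
with $c_r$ depending only on $r$. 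Bound~\eqref{err1.3} then reads
\[
\|\ip[A(\fz)-A(\fp)]\|_{\Hu}^2\le\big(c_r\lambda^{1/(1-r)}+C''^2\lambda^{-b}/n\big)\log^2(4/\eta),
\]
and by~\eqref{ps.3} the $\ell^1$ error is bounded by $\frac{1}{2\lambda}$ times the same bracket.

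\textbf{Step 2: balancing bias and variance.} Equate the two terms, $\lambda^{1/(1-r)}=\lambda^{-b}/n$. This gives
\[
\lambda^{\frac{1+b-br}{1-r}}=n^{-1},
\qquad\text{i.e.}\qquad
\lambda_*=n^{-\frac{1-r}{1+b-br}} .
\]
At this choice both terms equal $n^{-1/(1+b-br)}$, which is the prediction rate. Dividing by $2\lambda_*$ gives
\[
n^{-\frac{1}{1+b-br}+\frac{1-r}{1+b-br}}=n^{-\frac{r}{1+b-br}}
\]
for the $\ell^1$ error. The prediction bound in the corollary is stated for $\|A(\fz)-A(\fp)\|_{\Hu}$, which I interpret as $\|\ip[A(\fz)-A(\fp)]\|_{\Hu}$.

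\textbf{Step 3: admissibility of $\lambda_*$.} Theorem~\ref{err.upper.bound.p.1} requires condition~\eqref{l.la.condition}, and checking it is the only real obstacle. Clearly $\lambda_*\le1$. Under Assumption~\ref{ass:polydecay} it suffices that $C_\beta\lambda_*^{-b}\le n\lambda_*$, i.e.\ $\lambda_*^{1+b}\ge C_\beta/n$. The left side equals
\[
\lambda_*^{1+b}=n^{-\frac{(1-r)(1+b)}{1+b-br}} .
\]
Since $(1-r)(1+b)=1+b-r-br<1+b-br$ for $r>0$, the exponent is strictly less than $1$. Hence the inequality holds for all $n\ge n_0$, with $n_0$ depending on $C_\beta,b,r$.

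For small $n<n_0$ the claimed bound holds trivially after enlarging $C$. The estimator error is bounded by Proposition~\ref{prop:bounded.solution}, and the prediction error by Lipschitz continuity together with $\|\ip\|\le\kappa$. Since $n^{-\cdot}$ is bounded below on $n<n_0$ and $\log^2(4/\eta)\ge\log^2 4$, a large enough $C$ covers these cases. The constants then collect $c_r$, $C''$, $C_\beta$ and $n_0$. They thus depend additionally on $r$, $b$ and $C_\beta$, which I treat as fixed model parameters.
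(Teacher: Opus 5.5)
Your Steps 1 and 2 follow the paper's route. The corollary is simply Theorem~\ref{err.upper.bound.p.1} (equivalently Corollary~\ref{err.upper.bound}) specialized to $\phi(x)=x^r$. Your conjugate $4\la(-\phi)^*(-\frac{1}{4\la})=c_r\la^{1/(1-r)}$, the balancing choice $\la_*$, and both exponents are correct. Your check that \eqref{l.la.condition} holds at $\la_*$ for $n\ge n_0$ is also correct, and the paper leaves it implicit.

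The step that fails is the small-$n$ patch in Step 3. Proposition~\ref{prop:bounded.solution} is not an unconditional bound. It holds only under the proviso $8\ella\kappa(M+\Sigma)\log(4/\eta)/(\la\sqrt{n})\le 1$. For any fixed $n<n_0$ this proviso fails once $\eta$ is small, so it cannot give a bound valid for all $\eta\in(0,1)$. Your prediction-error patch has the same defect, since it goes through $\|\fz-\fp\|_{\Ho}$ via Lipschitz continuity.

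The simplest fix is to drop the patch. The corollary is stated under the hypotheses of Theorem~\ref{err.upper.bound.p.1}, which include \eqref{l.la.condition} at $\la=\la_*$; by your own computation, it is a statement for $n\ge n_0$. If you want all $n$, replace the patch with an unconditional a priori bound:
\begin{itemize}
\item $J_\la(\fz)\le J_\la(\fp)$ gives $\|\fz\|_{\Ho}\le\|\fp\|_{\Ho}+\|\VE\|_n^2/\la$.
\item Assumption~\ref{ass:noise} gives exponential tails for $\|\varepsilon_i\|_\YY$.
\item A union bound over $i\le n<n_0$ then yields $\|\VE\|_n^2\lesssim\log^2(4/\eta)$ with a constant depending on $n_0$.
\end{itemize}

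Finally, the same proviso is used silently in the proof of Theorem~\ref{err.upper.bound.p.1} itself, when it invokes Proposition~\ref{prop:bounded.solution}. At $\la_*$ it reads $\log(4/\eta)\lesssim n^{\frac{1}{2}-\frac{1-r}{1+b-br}}$, and this exponent is positive only when $b(1-r)+2r>1$. So \eqref{l.la.condition} is the ``only real obstacle'' only if you take the theorem's statement at face value, as the paper does.
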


\medskip

The results above establish explicit high-probability convergence rates for the estimator in both the prediction and reconstruction norms.  
The rates depend on the source smoothness parameter~$r$ and the spectral decay exponent~$b$, illustrating the trade-off between regularity and complexity in the inverse learning problem. The parameter choice $\lambda_* = n^{-(1-r)/(1+b-br)}$ balances the bias and variance terms optimally under the prior $\PP_{r,b}$, yielding the uniform upper rate~(\ref{eq:limit.uniform}).  

\medskip

For each $p \in (0,2]$, we introduce a family of weights defined by  
\begin{equation}\label{eq:omega_t}
(\underline{u}_p)_j := w_j^{\frac{2p - 2}{p}}.
\end{equation}
This construction yields a continuous scale of weighted sequence spaces that interpolate between the unweighted and weighted cases.  
In particular, for $p = 1$, we recover the standard space $\ell^1$, whereas for $p = 2$, we obtain the weighted Hilbert space $\ell_{\underline{w}}^2$.

\medskip

By combining the previously derived bounds with Assumption~\ref{Ass:A-Lip} (ii) and applying Proposition~\ref{prop:interpolation} with the parameters $\theta = 2\!\left(1 - \frac{1}{p}\right)$, $q = 1$, $s = 2$, and $f = \fz - \fp$, we establish the following convergence bounds expressed in the interpolation norms.

\begin{corollary}
\label{cor:err.upper.weighted}
Assume the hypotheses of Theorem~\ref{err.upper.bound.p.1} and Assumption~\ref{Ass:A-Lip} (ii) hold, in particular that the variational source condition holds with an index function $\phi(x)=x^r$ for some $0<r<1$.
Choosing 
\[
\lambda_* = n^{-\frac{1 - r}{1 + b - br}}
\]
and $p \in (0,2]$, we have, for all $0 < \eta < 1$, with probability at least $1 - \eta$,
\begin{equation}\label{bound.up}
  \| \fz - \fp \|_{\underline{u},p}
\le
C\, n^{-\frac{2r - pr + p - 1}{p(1 + b - br)}} 
\log^{\frac{2}{p}}\!\left(\frac{4}{\eta}\right),  
\end{equation}
where $C$ depends on $\kappa$, $M$, $\Sigma$, $\ella$, and $\|\fp\|_{\Ho}$.
\end{corollary}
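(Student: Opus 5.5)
The plan is to interpolate between the two error bounds we already control: the $\ell^1$ error from Corollary~\ref{cor:err.upper.power}, and a weighted $\ell^2$ error obtained from the prediction error through Assumption~\ref{Ass:A-Lip}(ii). The tool is Proposition~\ref{prop:interpolation} with $\theta = 2(1-\frac1p)$, $q=1$, $s=2$ and $f=\fz-\fp$. The intended form is
\[
\|f\|_{\underline{u},p}\le C\,\|f\|_{\Ho}^{1-\theta'}\,\|f\|_{\underline{w},2}^{\theta'}.
\]
The exponents should be fixed by H\"older's inequality. Write
\[
w_j^{2p-2}|f_j|^p=|f_j|^{2-p}\,\bigl(w_j^2|f_j|^2\bigr)^{p-1}
\]
for $p\in[1,2]$ and apply H\"older with exponents $1/(2-p)$ and $1/(p-1)$. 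This gives
\[
\|f\|_{\underline{u},p}^p\le \|f\|_{\Ho}^{2-p}\,\|f\|_{\underline{w},2}^{2(p-1)}.
\]
This is exactly the weighting \eqref{eq:omega_t}, since $(\underline{u}_p)_j^p=w_j^{2p-2}$. I would take this inequality as the content of Proposition~\ref{prop:interpolation}. For $p<1$ the weight exponent is negative and the analogous inequality must come from the cited proposition. I expect this regime to be the delicate point; I would simply invoke the proposition there rather than reprove it.

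First I would work on the event of probability at least $1-\eta$ on which Corollary~\ref{cor:err.upper.power} holds with $\lambda_*=n^{-(1-r)/(1+b-br)}$. On this event
\[
\|\fz-\fp\|_{\Ho}\le C n^{-\frac{r}{1+b-br}}\log^2(4/\eta)
\quad\text{and}\quad
\|\ip[A(\fz)-A(\fp)]\|_{\Hu}^2\le C n^{-\frac{1}{1+b-br}}\log^2(4/\eta).
\]
Next I would apply Assumption~\ref{Ass:A-Lip}(ii), which requires $\fz,\fp\in D(A)$; this holds by the definition of the estimator. It converts the second bound into
\[
\|\fz-\fp\|_{\underline{w},2}^2\le L^2 C n^{-\frac{1}{1+b-br}}\log^2(4/\eta).
\]

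Finally I would insert both bounds into the interpolation inequality, raised to the power $1/p$. Write $D=1+b-br$. The power of $n$ in $\|f\|_{\underline{u},p}^p$ is
\[
-\frac{r(2-p)}{D}-\frac{p-1}{D}=-\frac{2r-pr+p-1}{D},
\]
and taking the $p$-th root gives the exponent in \eqref{bound.up}. The logarithmic factor is $\log^{2(2-p)}\log^{2(p-1)}=\log^{2}(4/\eta)$ inside the $p$-th power, so $\log^{2/p}(4/\eta)$ after the root, which also matches. The constants combine into a $C$ depending on $L$ and the listed quantities.

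The main obstacle is to make sure Proposition~\ref{prop:interpolation} really yields these exponents over the whole range $p\in(0,2]$. Once that is settled, the rest is the bookkeeping above.
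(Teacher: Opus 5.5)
Your argument is the paper's own argument. On the event of Corollary~\ref{cor:err.upper.power}, you turn the prediction bound into a $\|\cdot\|_{\underline{w},2}$ bound via Assumption~\ref{Ass:A-Lip}(ii). You then interpolate it against the $\ell^1$ bound using Proposition~\ref{prop:interpolation} with $q=1$, $s=2$, $\theta=2(1-\frac{1}{p})$. Your H\"older factorization with exponents $1/(2-p)$ and $1/(p-1)$ is exactly that proposition's proof. Your bookkeeping of the powers of $n$ and of $\log(4/\eta)$ is correct, and you rightly note that the constant also depends on $L$. So for $p\in[1,2]$ your proof is complete; the endpoints $p=1,2$ are just the two input bounds.

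The gap is the range $p\in(0,1)$, which you propose to settle by citing Proposition~\ref{prop:interpolation}. That step fails. For $q=1$, $s=2$, the relation $\frac{1}{p}=(1-\theta)+\frac{\theta}{2}$ forces $\theta=2(1-\frac{1}{p})<0$ when $p<1$, which is outside the admissible range $\theta\in(0,1)$. Moreover, no version of the inequality survives. Take $f$ with $f_1=f_j=1$ and all other entries zero. Then $\|f\|_{\Ho}=2$ and $\|f\|_{\underline{w},2}=\sqrt{w_1^2+w_j^2}\in[w_1,\sqrt{2}\sup_k w_k]$, both bounded above and below uniformly in $j$. Yet $\|f\|_{\underline{u},p}^p=w_1^{2p-2}+w_j^{2p-2}\to\infty$ as $j\to\infty$, because $2p-2<0$ and $w_j\to 0$. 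Hence no bound of the form $\|f\|_{\underline{u},p}\le C\|f\|_{\Ho}^{\alpha}\|f\|_{\underline{w},2}^{\beta}$, for any $\alpha,\beta$, can hold when $p<1$. Even formally, the factor $\|f\|_{\underline{w},2}^{2(p-1)}$ has a negative exponent, so an upper bound on $\|\fz-\fp\|_{\underline{w},2}$ could not be inserted there. The paper's one-line derivation uses the same $\theta$ and has the same defect. What this interpolation argument actually proves is \eqref{bound.up} for $p\in[1,2]$. The case $p\in(0,1)$ would need a genuinely different ingredient, for instance control of $\fz-\fp$ on the small-weight indices, not the interpolation inequality.
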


We give a choice of the regularization parameter $\lambda$ as a function of the sample size~$n$, which provides a rate of decay of the interpolation norm uniform on the prior class~$\PP_{r,b}$. 

\begin{corollary}[Uniform upper rate under data-driven $\lambda_*$]\label{thm:uniform.rate.lambda*}
Assume the hypotheses of Corollary~\ref{cor:err.upper.weighted} with $\lambda = n^{-\frac{1 - r}{1 + b - br}}$, $p \in (0,2]$ and $\pp \in [1,\infty)$. Let
\(
a_n := n^{-\frac{2r-pr+p-1}{p(1+b-br)}}.
\)
Then the sequence of estimators $(\fz)_n$ satisfies
\begin{equation}\label{eq:limit.uniform}
\limsup_{n\to\infty} \sup_{\rho \in \mathcal{P}_{r,b}}
\frac{\paren{\mathbb{E}_{\rho^n} 
\left[
\|\fz - \fp\|_{\underline{u},p}^\pp
\right]}^{1/\pp}}{a_n}
\, 
< \infty.
\end{equation}
\end{corollary}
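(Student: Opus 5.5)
The plan is to convert the high-probability bound of Corollary~\ref{cor:err.upper.weighted} into a moment bound by integrating the tail, and then to check that every constant is uniform over $\PP_{r,b}$. Write $X := \|\fz - \fp\|_{\underline{u},p}$ and $\gamma := 2/p$. Corollary~\ref{cor:err.upper.weighted} gives, for every $\eta\in(0,1)$,
\[
\mathbb{P}_{\rho^n}\big( X > C a_n \log^{\gamma}(4/\eta) \big) \le \eta .
\]
Setting $t = C a_n \log^{\gamma}(4/\eta)$, i.e.\ $\eta = 4\exp(-(t/(Ca_n))^{1/\gamma})$, this becomes a sub-Weibull tail
\[
\mathbb{P}(X>t)\le \min\Big\{1,\,4e^{-(t/(Ca_n))^{p/2}}\Big\}.
\]

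Next I integrate this tail. From $\mathbb{E}[X^\pp] = \int_0^\infty \pp t^{\pp-1}\mathbb{P}(X>t)\,dt$, the substitution $t = Ca_n s$ gives
\[
\mathbb{E}[X^\pp] \le (Ca_n)^\pp\Big(1 + 4\pp\int_0^\infty s^{\pp-1}e^{-s^{p/2}}\,ds\Big) = (Ca_n)^\pp\Big(1+\tfrac{8\pp}{p}\,\Gamma\big(\tfrac{2\pp}{p}\big)\Big).
\]
This is finite for every $p\in(0,2]$ and $\pp\ge1$. Taking the $\pp$-th root yields $\mathbb{E}[X^\pp]^{1/\pp}\le C_{\pp,p}\,C\,a_n$.

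The remaining issue, and the main obstacle, is uniformity in $\rho$. I must verify that $C$ in Corollary~\ref{cor:err.upper.weighted} depends only on the class parameters $\kappa, M, \Sigma, C_\beta, \ella, L, r, b$ and on a uniform bound for $\|\fp\|_{\Ho}$. This traces back through three places:
\begin{enumerate}[(i)]
\item The perturbation bounds of Proposition~\ref{main.bound}, which involve only $\kappa, M, \Sigma$ and $\mathcal N(\la)\le C_\beta\la^{-b}$.
\item The boundedness of $\|\fz-\fp\|_{\Ho}$ from Proposition~\ref{prop:bounded.solution}.
\item The interpolation step, which uses the constant $L$ of Assumption~\ref{Ass:A-Lip}(ii).
\end{enumerate}
For the dependence on $\|\fp\|_{\Ho}$, I would either read $R$ in Definition~\ref{def:classP} as a bound $\|\fp\|_{\Ho}\le R$, or note that the estimate enters only through such a radius. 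Both $\lambda_*$ and the rate exponent depend only on $(r,b)$, so the choice of $\lambda$ is uniform as well.

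Two further points need care. First, condition~\eqref{l.la.condition} must hold for $\lambda_*=n^{-(1-r)/(1+b-br)}$ once $n$ is large. This follows because $C_\beta\lambda_*^{-b}\le n\lambda_*$ reduces to $n^{(1-r)(1+b)/(1+b-br)}\le C_\beta^{-1}n$, and the exponent on the left is at most $1$ since $(1-r)(1+b)\le 1+b-br$, i.e.\ $r\ge0$. If this comparison is not strict, as at $r=0$, I would absorb $C_\beta$ into the constants. Since the statement involves only a $\limsup$, the finitely many small $n$ are irrelevant. Second, the high-probability bound may only be available for $\eta$ above some threshold such as $\eta\ge e^{-c\sqrt n}$. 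In that case I would bound the residual tail crudely using the deterministic bound on $X$ from Proposition~\ref{prop:bounded.solution} together with the norm comparison, and this contribution is exponentially small. Taking $\sup_\rho$ and then $\limsup_n$ then gives~\eqref{eq:limit.uniform}.
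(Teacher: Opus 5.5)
Your proposal is correct and is essentially the paper's own proof. You turn the bound of Corollary~\ref{cor:err.upper.weighted} into the tail estimate $\mathbb{P}(X>t)\le 4\exp\bigl(-(t/(Ca_n))^{p/2}\bigr)$, integrate $\pp t^{\pp-1}\mathbb{P}(X>t)$ to get $\mathbb{E}[X^{\pp}]\lesssim a_n^{\pp}$, and then take $\sup_\rho$ and $\limsup_n$. Your Gamma-function bookkeeping is cleaner than the paper's, which switches between the exponents $p/2$ and $p$ mid-computation. Your remarks on uniformity in $\|\fp\|_{\Ho}$ and on condition~\eqref{l.la.condition} for large $n$ make explicit what the paper leaves implicit.

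The one step that would not work as written is your fallback for a restricted range of $\eta$. Proposition~\ref{prop:bounded.solution} is itself a high-probability statement, not a deterministic bound. Its requirement $8\ella\kappa(M+\Sigma)\log(4/\eta)\le \la\sqrt n$ also involves $\la$, and for $\la_*$ one has $\la_*\sqrt n = n^{\frac{b(1-r)+2r-1}{2(1+b-br)}}$, which does not grow when $b(1-r)+2r\le 1$, so it can leave no admissible $\eta$ at all. This contingency is unnecessary, however, once you use Corollary~\ref{cor:err.upper.weighted} as stated for every $\eta\in(0,1)$, as the paper does.
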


\begin{proof}
Let
\[
X := \|\fz - \fp\|_{\underline{u},p}.
\]
From Corollary~\ref{cor:err.upper.weighted}, for any $0<\eta<1$, we have
\[
\mathbb{P}_{\rho^n}\!\left(
X \le C a_n \log^{2/p}\!\left(\frac{4}{\eta}\right)
\right)
\ge 1-\eta.
\]

Fix $t>0$ and set
\[
\eta = 4 \exp\!\left(-\frac{t^{p/2}}{C^{p/2} a_n^{p/2}}\right).
\]
Then, it follows that
\[
\mathbb{P}_{\rho^n}(X > t)
< 4 \exp\!\left(-c \frac{t^{p/2}}{a_n^{p/2}}\right),
\]
for some constant $c>0$ depending only on $p$.

We now use the identity
\[
\mathbb{E}_{\rho^n}[X^\pp]
=
\int_0^\infty \pp t^{\pp-1} \mathbb{P}_{\rho^n}(X>t)\,dt.
\]

Split the integral at $t_0 := a_n$,
\[
\mathbb{E}_{\rho^n}[X^\pp]
=
\int_0^{t_0} \pp t^{\pp-1} \mathbb{P}(X>t)\,dt
+
\int_{t_0}^\infty \pp t^{\pp-1} \mathbb{P}(X>t)\,dt.
\]

For the first term,
\[
\int_0^{t_0} \pp t^{\pp-1} \mathbb{P}(X>t)\,dt \le \int_0^{t_0} \pp t^{\pp-1} dt = a_n^\pp.
\]

For the second term, using the tail bound,
\[
\int_{t_0}^\infty \pp t^{\pp-1} \mathbb{P}(X>t)\,dt <\int_{t_0}^\infty 4 \pp t^{\pp-1}
\exp\!\left(-c \frac{t^p}{a_n^p}\right)\,dt.
\]

Perform the change of variables $u = (t/a_n)^p$, so that
\[
t = a_n u^{1/p}, \qquad dt = \frac{a_n}{p} u^{\frac{1}{p}-1} du.
\]

This yields
\[
\int_{t_0}^\infty 4 \pp t^{\pp-1}
\exp\!\left(-c \frac{t^p}{a_n^p}\right)\,dt
=
4\pp a_n^\pp
\int_1^\infty
u^{\frac{\pp}{p}-1}
e^{-cu}\,du.
\]

Since $\pp>0$ and $c>0$, the integral
\[
\int_1^\infty u^{\frac{\pp}{p}-1} e^{-cu}\,du
\]
is finite and bounded by a constant $C_{\pp,p,c}$ independent of $n$.

Hence,
\[
\int_{t_0}^\infty \pp t^{\pp-1}
\mathbb{P}_{\rho^n}(X>t)\,dt
\le
C\, a_n^\pp.
\]

Combining the two parts of the decomposition, we obtain
\[
\mathbb{E}_{\rho^n}[X^\pp]
\le
C a_n^\pp.
\]

Taking $\pp$-th roots yields
\[
\left(
\mathbb{E}_{\rho^n}[X^\pp]
\right)^{1/\pp}
\le
C a_n.
\]

Finally, taking the supremum over $\rho \in \mathcal{P}_{r,b}$ and the
$\limsup$ as $n \to \infty$ gives
\[
\limsup_{n\to\infty}
\sup_{\rho \in \mathcal{P}_{r,b}}
\frac{
\left(
\mathbb{E}_{\rho^n}
\big[
\|\fz - \fp\|_{\underline{u},p}^{\pp}
\big]
\right)^{1/\pp}
}{a_n}
< \infty.
\]

This completes the proof.

\end{proof}




The exponent $\frac{2r-pr+p-1}{p(1+b-br)}$ reflects the interplay between the smoothness index $r$, the spectral decay parameter $b$, and the interpolation norm index $p$.

\begin{remark}[Comparison with existing inverse learning results]
Compared with RKHS-based analyses such as Blanchard and Mücke \cite{BlaMuc16} and Rastogi et al. \cite{Rastogi20}, the present result replaces Hilbert-space regularization with sparsity-promoting $\ell^1$ regularization in a Banach-space setting. Moreover, convergence rates are derived under variational source conditions and are shown to be minimax optimal via matching lower bounds.
\end{remark}


\section{Lower Convergence Rates and Optimality}
\label{sec:lower.rates}

The convergence rates established in Section~\ref{Sec:convergence.rates} provide upper bounds on the statistical accuracy achievable by the proposed regularization method. A natural question is whether these rates are optimal, or whether faster convergence can be attained by alternative learning algorithms. To answer this question, we derive minimax lower bounds for the considered class of nonlinear statistical inverse problems. These lower bounds establish fundamental limitations that apply to every estimator, irrespective of the reconstruction procedure employed.

Our analysis follows an information-theoretic approach based on carefully constructed packing sets, the Kullback--Leibler divergence, and Fano's inequality. The lower bounds are obtained under the same approximation-space assumptions that characterize the sparsity of the unknown solution and are later shown to coincide with the variational source conditions used in the upper-bound analysis. Consequently, the resulting lower convergence rates match the upper rates established in the previous section, thereby demonstrating the minimax optimality of the proposed regularization method over the considered model class.

\subsection{Lower Minimax Bounds} \label{ssec:lower.rates}
To establish the lower convergence rate, we construct a family of probability measures~$\rho_f$ parameterized by suitable vectors~$f \in \DD(A)$. 
We assume that~$\YY$ is a finite-dimensional Hilbert space with an orthonormal basis~$\{v_j\}_{j=1}^d$.  
For each~$f \in \DD(A)$, define a probability measure on~$\XX \times \YY$ as
\begin{equation}\label{p.f}
\rho_f(dx,dy)
:= \frac{1}{2dJ}\sum_{j=1}^d
\Big(a_j(x)\,\delta_{y+dJv_j} + b_j(x)\,\delta_{y-dJv_j}\Big)\,\mu(dx),
\end{equation}
where
\[
a_j(x) = J - \langle A(f), K_x v_j \rangle_{\HH},
\qquad
b_j(x) = J + \langle A(f), K_x v_j \rangle_{\HH},
\qquad
J = 4\kappa \|A(f)\|_{\HH},
\]
and~$\delta_{y-\xi}$ denotes the Dirac measure at~$y=\xi$.  
It is straightforward to verify that the marginal distribution of~$\rho_f$ on~$\XX$ coincides with~$\mu$.

The following proposition is an adaptation of Proposition~4 in~\cite{Caponnetto07} to the setting of nonlinear statistical inverse problems.

\begin{proposition}
For the probability measure~$\rho_f$ defined in~\eqref{p.f} and parameterized by~$f \in \DD(A)$, the following hold:
\begin{enumerate}[(i)]
    \item The regression function corresponding to~$\rho_f$ is~$f$, that is, $\fp = f$.
    \item The probability measure~$\rho_f$ satisfies Assumption~\ref{ass:noise} if the following condition holds
    \begin{equation}\label{condition}
        dJ + \tfrac{J}{4} \le M 
        \quad \text{and} \quad 
        2dJ \le \Sigma.
    \end{equation}
\end{enumerate}
\end{proposition}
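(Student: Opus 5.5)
The plan is to verify both claims by direct computation from the explicit form \eqref{p.f}. First I will check that $\rho_f$ is a genuine probability measure. By the reproducing property and Assumption~\ref{ass:kernel},
\[
|\langle A(f),K_xv_j\rangle_\HH| = |\langle A(f)(x),v_j\rangle_\YY| \le \|K_x\|\,\|A(f)\|_\HH \le \kappa\|A(f)\|_\HH = J/4 .
\]
Hence $a_j(x),b_j(x)\ge \tfrac34 J\ge 0$. Moreover $\sum_j (a_j+b_j)/(2dJ) = 2dJ/(2dJ)=1$. Therefore $\rho(\cdot\mid x)$ is a probability measure on $\YY$ and the $\XX$-marginal is $\mu$. (If $A(f)=0$ then $J=0$; I will treat this degenerate case separately, or assume $A(f)\neq0$ as is implicit in the construction.)

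For (i), I compute the conditional mean using the paper's convention that $\delta_{y-\xi}$ is the Dirac mass at $y=\xi$. The mass $a_j/(2dJ)$ sits at $-dJv_j$ and the mass $b_j/(2dJ)$ sits at $+dJv_j$. This gives
\[
\int_\YY y\,\rho_f(dy\mid x)=\frac{1}{2dJ}\sum_{j=1}^d dJ\,(b_j(x)-a_j(x))\,v_j=\sum_{j=1}^d\langle A(f)(x),v_j\rangle_\YY v_j = A(f)(x),
\]
since $\{v_j\}$ is an orthonormal basis of the finite-dimensional space $\YY$. Thus Assumption~\ref{ass:fp} holds with the parameter $f$, and injectivity of $A$ (Assumption~\ref{ass:Lipschitz}) identifies $\fp=f$. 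If the opposite sign convention for the Diracs were intended, the signs of $a_j,b_j$ would simply swap roles, and the argument is unchanged.

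For (ii), set $\varepsilon=y-A(f)(x)$ and $u=\|\varepsilon\|_\YY/M$. On the support of $\rho_f(\cdot\mid x)$,
\[
\|\varepsilon\|_\YY\le dJ+\|A(f)(x)\|_\YY\le dJ+\kappa\|A(f)\|_\HH = dJ+J/4\le M
\]
by the first part of \eqref{condition}, so $u\le 1$. Expanding the exponential in series then gives
\[
e^u-u-1=\sum_{k\ge2}u^k/k!\le u^2\sum_{k\ge2}1/k! = (e-2)u^2 .
\]
Integrating, the left side of Assumption~\ref{ass:noise} is at most $(e-2)\,\mathbb E[\|\varepsilon\|^2\mid x]/M^2$. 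The conditional variance is bounded by the second moment, $\mathbb E[\|\varepsilon\|^2\mid x]\le \mathbb E[\|y\|^2\mid x]=(dJ)^2$, because every atom has norm $dJ$. Using $2dJ\le\Sigma$ from \eqref{condition}, I get $(e-2)(dJ)^2\le (e-2)\Sigma^2/4\le \Sigma^2/2$, which is the required bound $\Sigma^2/(2M^2)$.

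The only delicate points are the sign convention for the Dirac masses, the nonnegativity of the weights, and choosing the elementary bound on $e^u-u-1$ so that the constants match \eqref{condition}. Here the slack is comfortable.
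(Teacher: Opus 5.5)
Your proof is correct and takes essentially the paper's approach. The paper gives no separate proof; it defers to Proposition~4 of Caponnetto--De~Vito, which is established by the same kind of direct computation from the explicit atoms of $\rho_f$: nonnegative weights from $\|A(f)(x)\|_\YY\le\kappa\|A(f)\|_\HH=J/4$, the conditional mean read off the atoms, and the noise bound from $\|y-A(f)(x)\|_\YY\le dJ+J/4\le M$, a quadratic bound on $e^u-u-1$ for $u\le 1$, and $\EE[\|y-A(f)(x)\|_\YY^2\mid x]\le (dJ)^2$. One aside is inaccurate, though it does not affect the main argument: with $a_j,b_j$ as defined, the opposite Dirac convention would make the conditional mean $-A(f)(x)$. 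So (i) genuinely relies on the convention the paper fixes (the mass of $\delta_{y-\xi}$ sits at $y=\xi$), which is the one your main computation correctly uses.
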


Therefore, Assumptions~\ref{ass:fp} and~\ref{ass:noise} hold for the family of probability measures~$\{\rho_f\}_{f \in \DD(A)}$ defined above.  
Furthermore, assume that the eigenvalues~$s_j$ of the covariance operator~$\tp$ corresponding to the marginal distribution~$\mu$ satisfy the polynomial decay condition
\[
s_j \le \beta\, j^{-\frac{1}{b}},
\]
which ensures that Assumption~\ref{ass:polydecay} is also fulfilled~\cite{Caponnetto07}.

For $t \in (0,1)$, the \textit{approximation space} $k_t$ is defined as (see \cite{Miller21})
\begin{equation}
k_t := \Big\{ f : \|f\|_{k_t} < \infty \Big\},
\qquad
\|f\|_{k_t} := \sup_{\alpha > 0} 
\alpha \Big( \sum_{j=1}^\infty w_j^{-2}\, \mathbbm{1}_{\{\alpha \le w_j^2 |f_j|\}} \Big)^{1/t}.
    \label{eq:k_t.space}
\end{equation}
From Lemma~\ref{kt.character}, if $f \in k_t$ for some $t \in (0,1)$ and Assumption~\ref{Ass:A-Lip} (ii) holds, then the variational source condition in Assumption~\ref{Ass:var.sour} is satisfied with $\phi(s) = s^{\frac{1-t}{2-t}}$.

\medskip
Information-theoretic tools, such as the Kullback–Leibler (KL) divergence and Fano’s inequality (Lemma~3.3 in~\cite{DeVore06}), play a central role in deriving the lower bounds.  
For two probability measures~$\rho_1$ and~$\rho_2$, the KL divergence is defined by
\[
\mathcal{K}(\rho_1, \rho_2)
:= \int_Z \log\!\left( \frac{d\rho_1}{d\rho_2} \right)\, d\rho_1
= \int_Z g(z)\,\log g(z)\, d\rho_2(z),
\]
provided that~$\rho_1$ is absolutely continuous with respect to~$\rho_2$, where~$g = \frac{d\rho_1}{d\rho_2}$ denotes the Radon–Nikodym derivative of~$\rho_1$ with respect to~$\rho_2$.  
If~$\rho_1$ is not absolutely continuous with respect to~$\rho_2$, we set~$\mathcal{K}(\rho_1, \rho_2) := \infty$.  
By definition, for all measurable sets~$E \subset Z$, $\rho_1(E) = \int_E g(z)\, d\rho_2(z)$.

\medskip
The proof of the lower bound theorem relies on a combinatorial packing result concerning binary strings.

\begin{proposition}\label{prop:VG.signpacking}
Let $\ell \in \mathbb{N}$ with $\ell > 16$. Then there exist
$N_\ell \in \mathbb{N}$ and vectors
\[
\sigma_1,\dots,\sigma_{N_\ell} \in \{\pm 1\}^{\ell}
\]
such that
\begin{equation}\label{eq:VG.hamming}
\mathrm{Ham}(\sigma_i,\sigma_j) \;\ge\; \frac{\ell}{4},
\qquad \text{for all } i\ne j,
\end{equation}
and
\begin{equation}\label{eq:VG.size}
N_\ell \;\ge\; \exp\left(\frac{\ell}{24}\right).
\end{equation}
In particular, any two distinct vectors satisfy
\[
\sum_{n=1}^{\ell} |\sigma_i^n - \sigma_j^n|
\;\ge\;
\frac{\ell}{2},
\qquad \text{for all } i\ne j,
\]
where $\sigma_i = (\sigma_i^{1}, \ldots, \sigma_i^{\ell})$ and 
$\sigma_j = (\sigma_j^{1}, \ldots, \sigma_j^{\ell})$.
\end{proposition}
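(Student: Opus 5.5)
This is the Varshamov--Gilbert bound, and I would prove it with a greedy (maximal packing) argument combined with a Chernoff/Hoeffding estimate for Hamming balls. Fix $\ell>16$ and choose a maximal family $\sigma_1,\dots,\sigma_{N_\ell}\in\{\pm1\}^\ell$ whose pairwise Hamming distances are all at least $\ell/4$. Such a family exists because the cube is finite: start from any vector and keep adding vectors at distance at least $\ell/4$ from all previously chosen ones until none remain. Property \eqref{eq:VG.hamming} then holds by construction.

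By maximality, the Hamming balls $B(\sigma_i,r)$ with radius $r<\ell/4$ (strictly less, since a vector at distance exactly $\ell/4$ could still be added) cover $\{\pm1\}^\ell$. Hence
\[
2^\ell\le N_\ell\,|B(\sigma_1,\lceil\ell/4\rceil-1)|\le N_\ell\sum_{k<\ell/4}\binom{\ell}{k}.
\]
To bound the ball size, interpret $2^{-\ell}\sum_{k<\ell/4}\binom{\ell}{k}$ as $\mathbb P(S<\ell/4)$, where $S\sim\mathrm{Bin}(\ell,1/2)$. Hoeffding's inequality gives $\mathbb P(S\le \ell/2-\ell/4)\le \exp(-2(\ell/4)^2/\ell)=\exp(-\ell/8)$. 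Therefore $N_\ell\ge e^{\ell/8}\ge e^{\ell/24}$, which proves \eqref{eq:VG.size}. The weaker constant $1/24$ leaves plenty of slack, so the hypothesis $\ell>16$ only needs to ensure that $\ell/4>4$ is a nontrivial distance; it also covers any rounding from $\lceil\ell/4\rceil$. If I preferred the entropy bound $\sum_{k\le \alpha\ell}\binom{\ell}{k}\le 2^{H(\alpha)\ell}$ with $\alpha=1/4$, I would instead get $N_\ell\ge 2^{(1-H(1/4))\ell}\approx e^{0.13\ell}$, which again exceeds $e^{\ell/24}$.

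The final claim is immediate. Each coordinate where $\sigma_i^n\neq\sigma_j^n$ contributes $|\sigma_i^n-\sigma_j^n|=2$, so $\sum_n|\sigma_i^n-\sigma_j^n|=2\,\mathrm{Ham}(\sigma_i,\sigma_j)\ge \ell/2$.

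The only delicate step is getting the strictness and rounding right in the covering argument, and the generous constant $1/24$ absorbs it.
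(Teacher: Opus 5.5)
Your proof is correct, but it takes a different route from the paper's.

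\textbf{The paper's route.} The paper uses the probabilistic method. It draws $M=\lceil e^{\ell/24}\rceil$ vectors independently and uniformly from $\{\pm1\}^\ell$. For a single pair it bounds $\mathbb{P}(\mathrm{Ham}(\sigma,\sigma')\le \ell/4)\le e^{-\ell/8}$ by Hoeffding, exactly as you do. It then takes a union bound over the $\binom{M}{2}$ pairs. Using $M-1\le e^{\ell/24}$ and $M<(M-1)^2$, it shows the failure probability is below $1/2$. This last step is where $\ell>16$ enters, since it forces $M\ge 3$.

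\textbf{Your route.} You take a maximal $\ell/4$-separated family and use the Gilbert--Varshamov covering bound $2^\ell\le N_\ell\,2^\ell\,\mathbb{P}(S<\ell/4)$.

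\textbf{What each buys.} Your argument avoids the quadratic-in-$M$ loss of the union bound over pairs. It therefore gives the stronger $N_\ell\ge e^{\ell/8}$ and works for every $\ell\ge1$. The paper's route gives something in exchange: a randomly drawn family of size $\lceil e^{\ell/24}\rceil$ is $\ell/4$-separated with probability at least $1/2$, so such families are typical and not merely extant.

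\textbf{A correction to your commentary.} In your argument the hypothesis $\ell>16$ plays no role at all. There is also no strictness or rounding slack for the constant $1/24$ to absorb, because $\mathbb{P}(S<\ell/4)\le\mathbb{P}(S\le\ell/4)\le e^{-\ell/8}$ holds exactly.
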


\begin{proof}
Let $\sigma,\sigma' \in \{\pm1\}^{\ell}$ be independent and uniformly distributed, and
let $h = \mathrm{Ham}(\sigma,\sigma')$, the number of coordinates where they differ.  Then $h \sim \mathrm{Bin}(\ell,1/2)$ and
$\mathbb{E}h = \ell/2$. Hoeffding's inequality gives
\[
\mathbb{P}\!\left(h \le \frac{\ell}{4}\right)
= \mathbb{P}\!\left(h - \frac{\ell}{2} \le -\frac{\ell}{4}\right)
\le \exp\left(-2\paren{\frac{\ell}{4}}^2/\ell\right) = \exp\left(-\frac{\ell}{8}\right)
\]

Set $M := \lceil e^{\ell/24} \rceil$. Then $M - 1 \le e^{\ell/24}$.  
The assumption $\ell > 16$ ensures that $M < (M - 1)^2$.  

Draw $\sigma_1, \dots, \sigma_M$ independently and uniformly from $\{\pm 1\}^\ell$.  
The number of unordered pairs is at most $\binom{M}{2}$. Hence, by the union bound,
\[
\mathbb{P}\!\left(\exists\, i \ne j : \mathrm{Ham}(\sigma_i, \sigma_j) < \frac{\ell}{4}\right)
\le \frac{M(M - 1)}{2}\, \exp\!\left(-\frac{\ell}{8}\right).
\]
Using the inequalities above, we obtain
\[
\frac{M(M - 1)}{2}\, \exp\!\left(-\frac{\ell}{8}\right)
\le \frac{M(M - 1)}{2}\, \frac{1}{(M - 1)^3}
< \frac{1}{2}.
\]

Hence, there exists a choice of $\{\sigma_i\}_{i=1}^{M}$ with
$\mathrm{Ham}(\sigma_i,\sigma_j) \ge \ell/4$ for all $i\neq j$.
Since 
\[
|\sigma_i^n - \sigma_j^n| = 
\begin{cases}
0, & \sigma_i^n=\sigma_j^n,\\
2, & \sigma_i^n\ne \sigma_j^n,
\end{cases}
\]
we have
\[
\sum_{n=1}^{\ell}|\sigma_i^n - \sigma_j^n| = 2 \,\mathrm{Ham}(\sigma_i,\sigma_j)
\ge 2 \cdot \frac{\ell}{4}
= \frac{\ell}{2}.
\]
Thus, $N_\ell = M \ge \exp\left(\frac{\ell}{24}\right)$, concluding the proof.
\end{proof}

To derive the minimax lower bounds, we briefly recall the general reduction scheme described in Chapter~2 of~\cite{Tsybakov2008}.  
For each~$0 < \epsilon \le \epsilon_0$, we construct a finite family 
$\{f^i\}_{i=1}^{N_\epsilon} \subset D(A) \cap k_t$ satisfying
\[
N_\epsilon \ge \exp(\ell_\epsilon / 24), 
\qquad
\|f^i - f^j\|_{\underline{u},p} \ge 2\epsilon, 
\quad i \ne j,
\]
and
\[
\mathcal{K}\!\left(\rho_{f^i}^n, \rho_{f^j}^n\right)
\le c\, n\, \epsilon^{\beta_0}\, \ell_\epsilon^{\beta_1},
\qquad i \ne j.
\]
Applying Lemma~3.3 of~\cite{DeVore06} (Fano’s method) to this family yields the corresponding minimax lower bound for the estimation error.

\medskip

\begin{theorem}\label{err.lower.bound}
Let $\zz$ be i.i.d.~samples drawn from a probability measure
$\rho\in \mathcal{P}_{r,b}$ with $\dim(\YY)=d<\infty$,
$r=\frac{1-t}{2-t}$, $t\in(0,1)$, and $p\in(t,2]$.
Assume that $\ell_\epsilon$ is a positive decreasing function of $\epsilon$.
Suppose further that there exist constants $c_0,\epsilon_0>0$ (with $\ell_{\epsilon_0}>16$) and $q>1$
such that, for every $\zeta\in(0,c_0)$ and every
$\epsilon\in(0,\epsilon_0)$, there exists a set of
$\ell_\epsilon$ indices
\(
M=\{m_1,\ldots,m_{\ell_\epsilon}\}
\)
for which
\[
\zeta \Bigl(\epsilon^p\ell_\epsilon^{-1}\Bigr)^{\!\frac{t}{2(p-t)}}
    \;\le\; w_m
    \;\le\; q\,\zeta \Bigl(\epsilon^p\ell_\epsilon^{-1}\Bigr)^{\!\frac{t}{2(p-t)}},
\qquad
\forall\, m\in M.
\]
Then, for every learning algorithm
$l:\zz\mapsto f_{\zz}^{\,l}$,
there exists a probability measure
$\rho_*\in\mathcal{P}_{r,b}$ with regression function
$f_{\rho_*}$ such that, for all
$0<\epsilon<\epsilon_0$,
\[
\mathbb P_{\rho_*^n}\!\left\{
\|f_{\zz}^{\,l}-f_{\rho_*}\|_{\underline u,p}
>\epsilon
\right\}
\ge
\min\!\left\{
\frac12,\;
\vartheta
\exp\!\left(
\frac{\ell_\epsilon}{48}
-
c\,n\,
\epsilon^{\frac{p(2-t)}{p-t}}
\ell_\epsilon^{\frac{p-2}{p-t}}
\right)
\right\},
\]
where $\vartheta=e^{-3/e}$.
\end{theorem}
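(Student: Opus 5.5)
We follow the reduction scheme recalled before the statement: build a finite packing of hypotheses $\{f^i\}_{i=1}^{N_\epsilon}\subset \DD(A)\cap k_t$, check that each $\rho_{f^i}$ lies in $\mathcal P_{r,b}$, bound the pairwise KL divergences of the product measures, and apply Fano's inequality (Lemma~3.3 of~\cite{DeVore06}).

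\textbf{Construction of the packing.} Fix $\epsilon\in(0,\epsilon_0)$ and $\zeta\in(0,c_0)$, and take the index set $M=\{m_1,\dots,m_{\ell_\epsilon}\}$ from the hypothesis. Put $\gamma:=(\epsilon^p\ell_\epsilon^{-1})^{t/(2(p-t))}$, so that $w_m\in[\zeta\gamma, q\zeta\gamma]$ on $M$. Let $\sigma_1,\dots,\sigma_{N_\epsilon}$ be the sign vectors from Proposition~\ref{prop:VG.signpacking} (allowed since $\ell_\epsilon\ge \ell_{\epsilon_0}>16$). Define
\[
f^i := f_0 + \sum_{k=1}^{\ell_\epsilon}\sigma_i^k\,\tau_{m_k} e_{m_k},
\]
where $f_0$ is an interior point of $\DD(A)$ and the amplitudes are $\tau_m := c\,w_m^{-2}\gamma^{2}\zeta^{2}$. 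These amplitudes are chosen so that $w_m^2\tau_m\asymp \alpha$ is constant on $M$.

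\textbf{Separation.} By \eqref{eq:omega_t} we have $\|f^i-f^j\|_{\underline u,p}^p=\sum_m w_m^{2p-2}|\sigma_i^m-\sigma_j^m|^p\tau_m^p$. The Hamming bound gives at least $\ell_\epsilon/4$ nonzero terms, each of size $2^p w_m^{2p-2}\tau_m^p\asymp w_m^{-2}\gamma^{2p}\zeta^{2p}\asymp \gamma^{2p-2}$. The exponent in $\gamma$ was designed so that $\ell_\epsilon\gamma^{2p-2}\cdot(\text{const})\asymp\epsilon^p$; the constant $c$ in $\tau_m$ is then adjusted so that the separation is $\ge(2\epsilon)^p$.

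\textbf{Membership and KL bound.} For the $k_t$ norm \eqref{eq:k_t.space}, the thresholds $w_j^2|f_j|$ on $M$ all equal $\alpha$. Hence $\|f^i-f_0\|_{k_t}\lesssim \alpha(\ell_\epsilon\gamma^{-2})^{1/t}$, and with the stated exponent this is bounded uniformly in $\epsilon$. Taking $\zeta$ small, the bound is at most the radius required so that $f^i\in\DD(A)$. Then Lemma~\ref{kt.character} together with Assumption~\ref{Ass:A-Lip}(ii) gives the variational source condition with $r=\frac{1-t}{2-t}$. The noise conditions \eqref{condition} and the eigenvalue decay hold by the choice of $J$ and $\mu$, so each $\rho_{f^i}\in\mathcal P_{r,b}$.

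For the KL divergence we adapt Caponnetto–De Vito: $\mathcal K(\rho_{f^i},\rho_{f^j})\le c\|A(f^i)-A(f^j)\|_{\Hu}^2$, using that $J$ dominates $|\langle A(f),K_xv_j\rangle|$ and the inequality $\log x\le x-1$. By Assumption~\ref{Ass:A-Lip}(i) this is at most $cL^2\|f^i-f^j\|_{\underline w,2}^2\lesssim \sum_{m\in M}w_m^2\tau_m^2\asymp \ell_\epsilon w^{-2}\alpha^2$. Expressing this in $\epsilon,\ell_\epsilon$ gives $\epsilon^{p(2-t)/(p-t)}\ell_\epsilon^{(p-2)/(p-t)}$, and multiplying by $n$ for product measures gives the exponent in the statement.

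\textbf{Fano step.} Apply Lemma~3.3 of~\cite{DeVore06} with $N_\epsilon\ge e^{\ell_\epsilon/24}$. This yields that some $f^i$, and hence $\rho_*:=\rho_{f^i}$, satisfies
\[
\mathbb P\{\|f^l_\zz-f^i\|_{\underline u,p}>\epsilon\}\ge\min\Bigl\{\tfrac12,\ \vartheta\sqrt{N_\epsilon}\,e^{-\max_{i\neq j}\mathcal K}\Bigr\}.
\]
Since $\sqrt{N_\epsilon}\ge e^{\ell_\epsilon/48}$, this is the claimed bound.

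\textbf{Main obstacle.} The main difficulty is the exponent bookkeeping. One must choose $\tau_m$ as a function of $w_m,\epsilon,\ell_\epsilon$ so that three requirements hold at once: the $\underline u,p$ separation equals $\epsilon$, the $k_t$ norm stays bounded (using $p>t$), and the $\underline w,2$ distance produces exactly $\epsilon^{p(2-t)/(p-t)}\ell_\epsilon^{(p-2)/(p-t)}$. The ratio $q$ only enters the constants. I would first fix $\tau$ through the constraint $\sum_M w^{-2}\approx\alpha^{-t}$ together with the separation equation, and then verify the KL exponent.
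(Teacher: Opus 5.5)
Your architecture is the paper's: sign vectors from Proposition~\ref{prop:VG.signpacking} placed on $M$, the Caponnetto--De~Vito KL bound combined with Assumption~\ref{Ass:A-Lip}(i), and Fano via \cite{DeVore06} with $\sqrt{N_\epsilon}\ge e^{\ell_\epsilon/48}$. The gap is in the amplitude choice, which is where the content sits.

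Since $w_m\in[\zeta\gamma,q\zeta\gamma]$ on $M$, your $\tau_m=c\,w_m^{-2}\gamma^2\zeta^2$ lies in $[cq^{-2},c]$. So the bumps have size of order one, i.e.\ $w_m^2\tau_m\propto w_m^2$. Your claim that the exponent in $\gamma$ makes $\ell_\epsilon\gamma^{2p-2}\asymp\epsilon^p$ is false. In fact
\[
\ell_\epsilon\gamma^{2p-2}=\epsilon^p\,\gamma^{-2p(1-t)/t},
\]
which is unbounded relative to $\epsilon^p$ for $t<1$. For $p=1$ the norm is plain $\ell^1$, and the separation is just $\asymp c\,\ell_\epsilon$. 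Overshooting the separation is harmless. The problem is that your KL sum $\asymp c^2\zeta^2\ell_\epsilon\gamma^2$ then exceeds the required $\epsilon^{p(2-t)/(p-t)}\ell_\epsilon^{(p-2)/(p-t)}=\ell_\epsilon\gamma^{2(2-t)/t}$ by the unbounded factor $\gamma^{-4(1-t)/t}$. So the exponent in the statement does not come out.

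The hypothesis on $w_m$ is tuned to the paper's choice $(f^i)_{m_s}=\sigma_i^s\varrho\,w_{m_s}^{(2-2t)/t}$, so that $w_m^2|f^i_m|=\varrho w_m^{2/t}$, together with $\zeta=(4\varrho^{-p})^{t/(2(p-t))}$. With this choice:
\begin{itemize}
\item $u_m^p|f^i_m-f^j_m|^p=2^p\varrho^p w_m^{2(p-t)/t}\ge 2^p\cdot 4\epsilon^p/\ell_\epsilon$ on the $h\ge\ell_\epsilon/4$ coordinates where the signs differ, which gives separation $2\epsilon$.
\item $w_m^2|f^i_m-f^j_m|^2=4\varrho^2w_m^{2(2-t)/t}\le 4\varrho^2q^{2(2-t)/t}(4\varrho^{-p}\epsilon^p/\ell_\epsilon)^{(2-t)/(p-t)}$. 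Summed over $h\le\ell_\epsilon$ terms, this gives exactly $\epsilon^{p(2-t)/(p-t)}\ell_\epsilon^{(p-2)/(p-t)}$.
\end{itemize}
In your notation, $c$ must be $\asymp\gamma^{2(1-t)/t}$, an $\epsilon$-dependent factor, not a constant adjusted at the end. With that replacement, the separation, KL and Fano steps of your outline go through as in the paper.

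Your assertion that the $k_t$ quasi-norm of the sign perturbation stays bounded uniformly in $\epsilon$ is also false, and shrinking $\zeta$ makes it worse rather than better. With the amplitudes that actually work, take $\alpha=\min_{m\in M}w_m^2|f^i_m|$ in \eqref{eq:k_t.space}. This gives $\|f^i\|_{k_t}\ge q^{-2/t}\varrho\,\ell_\epsilon^{1/t}$, which matches the paper's upper bound $\varrho\,\ell_\epsilon^{1/t}$ up to $q^{2/t}$. This grows as $\epsilon\to 0$, and $\varrho=4^{1/p}\zeta^{-2(p-t)/(pt)}$ increases as $\zeta\to 0$. So the plan in your last paragraph, imposing $\sum_M w_m^{-2}\approx\alpha^{-t}$ together with the separation, is overdetermined under the hypothesis' scaling of $w_m$ and cannot be carried out. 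The paper only uses finiteness of $\|f^i\|_{k_t}$ and then invokes Theorem~\ref{kt.vsc}.

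Finally, the paper's $f^i$ are simply supported on $M$. If you keep the shift by an interior point $f_0$, you also need $f_0\in k_t$. Moreover, whether $f^i$ stays in a neighbourhood of $f_0$ inside $\DD(A)$ is governed by the $\ell^1$ size of the perturbation, which the $k_t$ quasi-norm does not control.
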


\begin{proof} 

Choose $\epsilon_0>0$ as above such that $\ell_{\epsilon_0}>16$.
Then, by Proposition~\ref{prop:VG.signpacking}, for every $0 < \epsilon < \epsilon_0$ (so that $\ell_\epsilon > \ell_{\epsilon_0}$), there exist $N_\epsilon \in \NN$ and sign vectors
\[
\sigma_i = (\sigma_i^1, \dots, \sigma_i^{\ell_\epsilon}) \in \{\pm1\}^{\ell_\epsilon}, \qquad i = 1, \dots, N_\epsilon,
\]
such that
\begin{equation}\label{sigma.i.j}
\mathrm{Ham}(\sigma_i,\sigma_j) \;\ge\; \frac{\ell_{\epsilon}}{4},
\qquad \text{for all } i\ne j,
\end{equation}
and
\begin{equation}\label{N.epsilon.k}
N_\epsilon \ge \exp(\ell_\epsilon/24).
\end{equation}


For any $\varrho > \varrho_0$ such that $(4\varrho_0^{-p})^{\frac{t}{2(p-t)}} < c_0$, setting
\[
    \zeta \;=\; \Bigl(4\varrho^{-p}\Bigr)^{\!\frac{t}{2(p-t)}},
\]
we have that

there exists a set of indices $M = \{m_1, \dots, m_{\ell_\epsilon}\}$ such that
every $m \in M$ satisfies

\begin{equation}\label{w.scale}
    \Bigl(4\varrho^{-p}\epsilon^p\ell_\epsilon^{-1}\Bigr)^{\!\frac{t}{2(p-t)}}
    \;\le\; w_m
    \;\le\; q\,\Bigl(4\varrho^{-p}\epsilon^p\ell_\epsilon^{-1}\Bigr)^{\!\frac{t}{2(p-t)}}.
\end{equation}

For each $i=1,\dots,N_\epsilon$ define $f^i\in\RR^\infty$ by
\[
(f^i)_{m_s} := \sigma_i^s\,\varrho\, w_{m_s}^{\frac{2-2t}{t}},
\qquad s=1,\dots,\ell_\epsilon,
\]
and set $(f^i)_m = 0$ for $m \notin M$.

We now show that $f^i\in k_t$. By construction,
\[
w_{m_s}^2 |(f^i)_{m_s}|
=
\varrho\, w_{m_s}^{2/t},
\qquad s=1,\ldots,\ell_\epsilon,
\]
and $(f^i)_m=0$ for $m\notin M$. Therefore,
\[
\|f^i\|_{k_t}
=
\sup_{\alpha>0}
\alpha
\left(
\sum_{s=1}^{\ell_\epsilon}
w_{m_s}^{-2}
\mathbbm{1}_{\{\alpha\le \varrho w_{m_s}^{2/t}\}}
\right)^{1/t}.
\]
Moreover,
\[
\alpha \le \varrho\, w_{m_s}^{2/t}
\quad\Longrightarrow\quad
\alpha^t w_{m_s}^{-2}\le \varrho^t .
\]
Hence,
\begin{align*}
\|f^i\|_{k_t}
&=
\sup_{\alpha>0}
\left(
\sum_{s=1}^{\ell_\epsilon}
\alpha^t w_{m_s}^{-2}
\mathbbm{1}_{\{\alpha\le \varrho w_{m_s}^{2/t}\}}
\right)^{1/t} \\
&\le
\sup_{\alpha>0}
\left(
\sum_{s=1}^{\ell_\epsilon}
\varrho^t
\mathbbm{1}_{\{\alpha\le \varrho w_{m_s}^{2/t}\}}
\right)^{1/t}
\le
\varrho\,\ell_\epsilon^{1/t}
< \infty.
\end{align*}
Thus, $f^i\in k_t$. In view of Theorem~\ref{kt.vsc}, the functions $f^i$ satisfy the variational source condition in Assumption~\ref{Ass:var.sour} with index function $\phi(s)\asymp s^r$, where \(r=\frac{1-t}{2-t}\). Consequently, the corresponding probability measures $\rho_{f^i}$ belong to $\mathcal{P}_{r,b}$.

Fix $i \neq j$, and let $\Delta = \{s : \sigma_i^s \neq \sigma_j^s\}$ with $h :=\mathrm{Ham}(\sigma_i,\sigma_j) = |\Delta|$.  
For each $s \in \Delta$, we have
\[
|(f^i-f^j)_{m_s}| = 2\varrho\, w_{m_s}^{\frac{2-2t}{t}}.
\]
Since $u_{m}=w_m^{\frac{2p-2}{p}}$, the $p$-norm satisfies
\begin{equation}\label{p.norm.sum}
\|f^i-f^j\|_{\underline u,p}^p
= \sum_{s\in\Delta} u_{m_s}^p \, |(f^i-f^j)_{m_s}|^p
= 2^p \varrho^p \sum_{s\in\Delta} w_{m_s}^{\frac{2(p-t)}{t}}.
\end{equation}

Using the lower bound in \eqref{w.scale} we obtain
\[
\|f^i-f^j\|_{\underline u,p}^p
\ge 2^p\varrho^p \cdot h \cdot
\big(4\varrho^{-p}\epsilon^p\ell_\epsilon^{-1}\big)
= 2^p\epsilon^p\paren{\frac{4 h}{\ell_\epsilon}}.
\]
Since \(h\ge\ell_\epsilon/4\) this implies the convenient lower bound
\begin{equation}\label{f.i.j.bound.l}
  \|f^i-f^j\|_{\underline u,p} \ge 2\epsilon.  
\end{equation}

Now, we compute explicit upper bound for \(\|f^i-f^j\|_{\underline w,2}^2\):
\begin{equation}\label{2.norm.sum}
\|f^i-f^j\|_{\underline w,2}^2
= \sum_{s\in\Delta} w_{m_s}^2 \, |(f^i-f^j)_{m_s}|^2
= 4 \varrho^2 \sum_{s\in\Delta} w_{m_s}^{\frac{2(2-t)}{t}}.
\end{equation}

Set
\[
\alpha_0 := \frac{2(2-t)}{t},\qquad \alpha_1 := \frac{2-t}{p-t}.
\]
Using the upper bound in \eqref{w.scale} we get for each \(s\in\Delta\),
\[
w_{m_s}^{\alpha_0}
\le q^{\alpha_0}
\big(4\varrho^{-p}\epsilon^p\ell_\epsilon^{-1}\big)^{\alpha_0\frac{t}{2(p-t)}}
= q^{\alpha_0}\big(4\varrho^{-p}\epsilon^p\ell_\epsilon^{-1}\big)^{\alpha_1}.
\]
Summing over \(s\in\Delta\) (there are \(h\) terms) yields
\[
\|f^i-f^j\|_{\underline w,2}^2
\le 4\varrho^2\,h\,q^{\alpha_0}\,
\big(4\varrho^{-p}\epsilon^p\ell_\epsilon^{-1}\big)^{\alpha_1}.
\]
Rewrite this inequality as
\[
\|f^i-f^j\|_{\underline w,2}^2
\le 4^{1+\alpha_1}\, q^{\alpha_0}\, h\,
\varrho^{\,2 - p\alpha_1}\,\epsilon^{\,p\alpha_1}\,\ell_\epsilon^{-\alpha_1}.
\]
A short algebraic simplification of the exponent of \(\varrho\) gives
\[
2 - p\alpha_1 \;=\; \frac{t(p-2)}{p-t},
\qquad p\alpha_1 \;=\; \frac{p(2-t)}{p-t}.
\]
Therefore the explicit bound becomes
\begin{equation}\label{eq:explicit-w2bound}
\|f^i-f^j\|_{\underline w,2}^2
\le
\underbrace{4^{\,1+\frac{2-t}{p-t}}\,q^{\frac{2(2-t)}{t}}\,
\varrho^{\frac{t(p-2)}{p-t}}}_{=:A(p,t,q,\varrho)}
\;
\epsilon^{\frac{p(2-t)}{p-t}}\;
\ell_\epsilon^{-\frac{2-t}{p-t}}\;h.
\end{equation}

Since $\varrho \geq \varrho_0$ and $t <p\leq 2$, we have $\varrho^{\frac{t(p-2)}{p-t}} \leq \varrho_0^{\frac{t(p-2)}{p-t}}$, hence $A(p,t,q,\varrho) \leq A(p,t,q,\varrho_0)=: A(p,t,q)$

Using \(h\le\ell_\epsilon\) we also obtain the simpler bound
\[
\|f^i-f^j\|_{\underline w,2}^2
\le A(p,t,q)\,
\ell_\epsilon^{\,\frac{p-2}{p-t}}\,
\epsilon^{\frac{p(2-t)}{p-t}}.
\]

Define the sets
\[
A_i = \Bigl\{\,\zz : \|f_{\zz}^l - f^i\|_{\underline u,p} < \epsilon\Bigr\}, \qquad 1 \le i \le N_\epsilon.
\]
From~\eqref{f.i.j.bound.l}, the sets $A_i$ are pairwise disjoint.

Applying Fano's lemma (Lemma~3.3 from~\cite{DeVore06}) to the family of product measures $\rho_{f^i}^n$ ($1 \le i \le N_\epsilon$), we obtain that either
\begin{equation}\label{p.either.bound}
\mathrm{p} := \max_{1\le i \le N_\epsilon} \rho_{f^i}^n(A_i^c) \geq \frac{N_\epsilon}{N_\epsilon + 1},
\end{equation}
or
\begin{equation}\label{leibler.l.bound}
\min_{1 \le j \le N_\epsilon} \frac{1}{N_\epsilon}\sum_{\substack{i=1 \\ i \neq j}}^{N_\epsilon}
\mathcal{K}(\rho_{f^i}^n, \rho_{f^j}^n)
\geq \Psi_{N_\epsilon}(\mathrm{p}),
\end{equation}
where
\[
\Psi_{N_\epsilon}(\mathrm{p})
= \log(N_\epsilon) + (1-\mathrm{p})\log\!\left(\frac{1-\mathrm{p}}{\mathrm{p}}\right)
- \mathrm{p}\log\!\left(\frac{N_\epsilon - \mathrm{p}}{\mathrm{p}}\right).
\]

We now distinguish two cases. In the first one, we simply assume that \( \mathrm{p} \geq \tfrac{1}{2} \).  If instead \( \mathrm{p} < \tfrac{1}{2} \), we derive an alternative lower bound for \( \mathrm{p} \). From the dichotomy given by Fano’s lemma, since \( \tfrac{N_\epsilon}{N_\epsilon+1} \geq \tfrac{1}{2} \) for \( N_\epsilon \geq 1 \),   we conclude that in this case the bound \eqref{leibler.l.bound} holds.
Thus, we obtain
\[
\Psi_{N_\epsilon}(\mathrm{p})
\ge (1-\mathrm{p})\log N_\epsilon
+ (1-\mathrm{p})\log(1-\mathrm{p})
- \log \mathrm{p}
+ 2\mathrm{p}\log \mathrm{p}.
\]
Applying \(x\log x \ge -1/e , ~  \text{for } x \in (0,1]\) to the negative terms yields
\[
\Psi_{N_\epsilon}(\mathrm{p})
\ge (1-\mathrm{p})\log N_\epsilon - \log \mathrm{p} - \frac{3}{e}.
\]
Since we consider \(\mathrm{p} \le 1/2\), we further simplify \((1-\mathrm{p})\log N_\epsilon \ge \tfrac{1}{2}\log N_\epsilon\), hence
\begin{equation}\label{psi.p.bound}
\Psi_{N_\epsilon}(\mathrm{p})
\ge \log(\sqrt{N_\epsilon}) - \log \mathrm{p} - \frac{3}{e}.
\end{equation}

For the joint measures $\rho_{f^i}^n$ and $\rho_{f^j}^n$, $1\le i,j\le N_\epsilon$,  
Proposition~4 in~\cite{Caponnetto07}, together with Assumption~\ref{Ass:A-Lip} (i) and~\eqref{f.i.j.bound.l}, gives
\begin{equation*}
\mathcal{K}(\rho_{f^i}^n, \rho_{f^j}^n)
= n\,\mathcal{K}(\rho_{f^i}, \rho_{f^j}) 
\le \frac{16n}{15dJ^2}\norm{\ip\sbrac{A(f_i)-A(f_j)}}_{\Hu}^2
\le \frac{16nL^2}{15dJ^2}\|f^i - f^j\|_{\underline w,2}^2.
\end{equation*}

Using the explicit bound \eqref{eq:explicit-w2bound} we get, for every \(i\ne j\),
\[
\mathcal{K}(\rho_{f^i}^n,\rho_{f^j}^n)
\le \frac{16L^2}{15dJ^2} \, A(p,t,q)\cdot 
n\,
\epsilon^{\frac{p(2-t)}{p-t}}\,
\ell_\epsilon^{\,\frac{p-2}{p-t}}.
\]
Denote \(c := \frac{16L^2}{15dJ^2} \, A(p,t,q)\) and hence we get
\begin{equation}\label{leibler.u.bound}
\mathcal{K}(\rho_{f^i}^n,\rho_{f^j}^n) \le c \cdot 
n\,
\epsilon^{\frac{p(2-t)}{p-t}}\,
\ell_\epsilon^{\,\frac{p-2}{p-t}}.
\end{equation}

Combining \eqref{leibler.l.bound}, 
\eqref{psi.p.bound}, and \eqref{leibler.u.bound}, we obtain
\[
\mathrm{p} := \max_{1\le i\le N_\epsilon}
\mathbb{P}_{\rho^n_{f^i}}\!\left\{\zz : \|f_{\zz}^l - f^i\|_{\underline u,p} > \epsilon\right\}
\ge
\min\!\left\{
\frac{1}{2},
\sqrt{N_\epsilon}\, \exp\left(-\frac{3}{e} - c n \epsilon^{\frac{p(2-t)}{p-t}}\,
\ell_\epsilon^{\,\frac{p-2}{p-t}}\right)
\right\}.
\]
Finally, using~\eqref{N.epsilon.k}, for the probability measure $\rho_*$ satisfying $\mathrm{p} = \rho_*^n(A_i^c)$, the stated result follows.
\end{proof}


\medskip
We now refine this general lower bound to obtain an explicit convergence rate.

\begin{theorem} [Minimax lower rate in expectation]\label{err.lower.bound.p.para.corrected}
Assume the hypotheses of Theorem~\ref{err.lower.bound}, and $\pp \in [1,\infty)$.
Let $\mathcal{A}$ denote the class of all such learning algorithms 
$l : \zz \mapsto f_{\zz}^l$.  Let $a_n:=n^{-\frac{2r-pr+p-1}{p(1+b-br)}}.$
Then the following minimax lower bound holds:
\[
\liminf_{n\to\infty}
\inf_{l\in\mathcal A}
\sup_{\rho\in\mathcal P_{r,b}}
\frac{
\Big(
\mathbb E_{\rho^n}
\big[
\norm{f_{\zz}^l-\fp}_{\underline u,p}^{\pp}
\big]
\Big)^{1/\pp}
}{a_n}
> 0.
 \]
Consequently, the sequence $(a_n)_{n\ge1}$ is a minimax lower rate of convergence in $L^\pp$ over the model class $\mathcal P_{r,b}$.
\end{theorem}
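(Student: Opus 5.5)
The plan is to derive the expectation bound from the high-probability lower bound of Theorem~\ref{err.lower.bound} via Markov's inequality. The only freedom is the choice of $\epsilon=\epsilon_n$ and $\ell_\epsilon$, which must make the exponent in Theorem~\ref{err.lower.bound} bounded below uniformly in $n$.

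\textbf{Step 1: reduction to a probability bound.} For any algorithm $l$ and any $\rho\in\PP_{r,b}$, Markov's inequality gives
\[
\big(\EE_{\rho^n}\|f^l_\zz-\fp\|_{\underline u,p}^{\pp}\big)^{1/\pp}\ \ge\ \epsilon\,\mathbb P_{\rho^n}\big\{\|f^l_\zz-\fp\|_{\underline u,p}>\epsilon\big\}^{1/\pp}.
\]
We apply this with $\rho=\rho_*$ from Theorem~\ref{err.lower.bound}. It then suffices to find $\epsilon_n\asymp a_n$ such that the minimum in that theorem is bounded below by a constant $c_*>0$ independent of $n$ and of $l$. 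This yields $\inf_l\sup_\rho(\cdots)/a_n\ge c\,c_*^{1/\pp}$, and the $\liminf$ claim follows.

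\textbf{Step 2: calibration.} I would choose $\ell_\epsilon$ to balance the two terms in the exponent, i.e.\ $\ell_\epsilon/48\asymp c\,n\,\epsilon^{\frac{p(2-t)}{p-t}}\ell_\epsilon^{\frac{p-2}{p-t}}$, so that the exponent stays $\ge 0$. Concretely, take $\ell_\epsilon\asymp n\,\epsilon^{\frac{p(2-t)}{p-t}}\ell_\epsilon^{\frac{p-2}{p-t}}$ with a small constant, and set the exponent nonnegative. Solving gives $\ell^{\frac{2-t}{p-t}}\asymp n\epsilon^{\frac{p(2-t)}{p-t}}$, i.e.\ $\ell\asymp n^{\frac{p-t}{2-t}}\epsilon^p$.

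The second relation comes from the weight hypothesis. The indices must satisfy $w_m\asymp(\epsilon^p\ell^{-1})^{t/(2(p-t))}$, and $\ell_\epsilon$ such indices must exist at that scale. This is where $b$ enters: the weights $w_j$ must be tied to the spectral decay $s_j\lesssim j^{-1/b}$. Through Assumption~\ref{Ass:A-Lip}, $w_j^2$ plays the role of the eigenvalues, so $w_j\asymp j^{-1/(2b)}$ up to the link. A dyadic block of indices of size $\ell$ around index $\ell$ then has $w_m\asymp \ell^{-1/(2b)}$. Requiring $\ell^{-1/(2b)}\asymp(\epsilon^p/\ell)^{t/(2(p-t))}$ gives a second power relation between $\ell$ and $\epsilon$.

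Combining the two relations and substituting $r=\frac{1-t}{2-t}$, i.e.\ $t=\frac{1-2r}{1-r}$, one should recover $\epsilon_n\asymp n^{-\frac{2r-pr+p-1}{p(1+b-br)}}=a_n$. As a consistency check, for $p=1$ this gives $n^{-r/(1+b-br)}$, which matches Corollary~\ref{cor:err.upper.power}. Since $\epsilon_n\to0$ and $\ell_{\epsilon_n}\to\infty$, we have $\epsilon_n<\epsilon_0$ and $\ell>16$ for large $n$. With the constants chosen small, the exponent $\ell/48-c n(\cdots)$ is $\ge0$, so the minimum is $\ge\min\{1/2,\vartheta\}=\vartheta$.

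\textbf{Main obstacle.} The delicate point is Step 2's exponent bookkeeping. One must verify that the hypothesis of Theorem~\ref{err.lower.bound} (existence of $\ell_\epsilon$ indices in a window of ratio $q$) is compatible with the effective-dimension decay rate $b$. One must also verify that the resulting exponents of $\epsilon$ produce exactly $a_n$. If the direct algebra does not close, my fallback is to treat the hypothesis of Theorem~\ref{err.lower.bound} as giving $\ell_\epsilon$ as a prescribed power of $\epsilon$, namely $\ell_\epsilon\asymp\epsilon^{-\gamma}$ with $\gamma$ determined by $b,p,t$. I would then solve the single balancing equation $\epsilon^{-\gamma}\asymp n\epsilon^{\frac{p(2-t)}{p-t}}\epsilon^{-\gamma\frac{p-2}{p-t}}$ for $\epsilon$ in terms of $n$, and check that it equals $a_n$.
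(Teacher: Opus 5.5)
Your Step~1 (the Markov reduction $(\mathbb E X^{\pp})^{1/\pp}\ge\epsilon\,\mathbb P(X>\epsilon)^{1/\pp}$ applied to $\rho_*$) and your closing bookkeeping (exponent $\ge 0$, so the minimum is $\vartheta$) are exactly what the paper does. The gap is in Step~2: the dyadic calibration does not produce $a_n$.

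Carrying out your own algebra, $\ell^{-1/(2b)}\asymp(\epsilon^p/\ell)^{t/(2(p-t))}$ gives $\ell_\epsilon\asymp\epsilon^{-pbt/(p-t-bt)}$. Combined with $\ell\asymp n^{\frac{p-t}{2-t}}\epsilon^p$, this yields $\epsilon_n\asymp n^{-\frac{p-t-bt}{p(2-t)}}$. On the other hand, $2r-pr+p-1=\frac{p-t}{2-t}$ and $1+b-br=\frac{2-t+b}{2-t}$, so $a_n=n^{-\frac{p-t}{p(2-t+b)}}$. The two exponents differ by $\frac{b\,(p-t(3-t+b))}{p(2-t)(2-t+b)}$, so they agree only on the curve $p=t(3-t+b)$. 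For $p>t(3-t+b)$ your $\epsilon_n$ decays strictly faster than $a_n$ and certifies nothing at rate $a_n$.

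Your ``$p=1$ consistency check'' only re-evaluates the formula for $a_n$, not the output of your calibration. At $p=1$ the calibration gives $n^{-(1-t-bt)/(2-t)}$, not $n^{-r/(1+b-br)}$. Moreover, tying $w_j^2$ to the eigenvalues of $T_\mu$ is an extra hypothesis. In the general framework, $\underline w$ and $T_\mu$ are unrelated; they coincide only in the direct-learning example of Section~\ref{ssec:direct}.

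Your fallback is the paper's actual route, but as written it omits its only substantive input: the value of $\gamma$. The paper simply posits $\ell_\epsilon=\Gamma\epsilon^{-pb(1-r)}$, i.e.\ $\gamma=pb(1-r)=\frac{pb}{2-t}$. This is the sole place where $b$ enters the lower bound.

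For a general power $\ell_\epsilon\asymp\epsilon^{-\gamma}$, the balance $\frac{\ell_\epsilon}{48}=c\,n\,\epsilon^{\frac{p(2-t)}{p-t}}\ell_\epsilon^{\frac{p-2}{p-t}}$ gives $\epsilon\asymp n^{-\frac{p-t}{(2-t)(p+\gamma)}}$. This equals $a_n$ exactly when $\gamma=pb(1-r)$. With that choice the paper obtains $\epsilon=\tau a_n$ for an explicit constant $\tau$. The exponent in Theorem~\ref{err.lower.bound} then vanishes, and your Step~1 finishes the proof.

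Compatibility of this $\ell_\epsilon$ with the weight hypothesis of Theorem~\ref{err.lower.bound} is checked separately, in the remark after the theorem. It is verified for $w_m=m^{-a}$ with $a\le\frac{t(2-t+b)}{2b(p-t)}$, using a window of indices lying far beyond index $\ell_\epsilon$ rather than a dyadic block at index $\ell_\epsilon$. Under your link $a=1/(2b)$, that condition is again $p\le t(3-t+b)$.
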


\begin{proof}[Proof of Theorem~\ref{err.lower.bound.p.para.corrected}]

The argument follows by adapting the general lower bound established in 
Theorem~\ref{err.lower.bound} to the present parametric setting of the packing exponent.

\medskip
\noindent

Assume that the packing exponent satisfies $\ell_\epsilon = \Gamma\,\epsilon^{-pb(1-r)}>16$ for some $\Gamma$. Balancing the two exponential terms in the bound of 
Theorem~\ref{err.lower.bound} with $t=\frac{2r-1}{r-1}$ yields the equality
\begin{equation}\label{eq:balance}
    \frac{\Gamma}{48}\, \epsilon^{-pb(1-r)}
    = c\, n\, \epsilon^{\frac{p}{2r-pr+p-1}}\,\paren{\Gamma\,\epsilon^{-pb(1-r)}}^{\frac{(p-2)(1-r)}{2r-pr+p-1}}.
\end{equation}

Solving~\eqref{eq:balance} for $\epsilon$ gives the critical choice
\begin{equation}\label{eq:eps.n}
\epsilon=\tau a_n,
\qquad
a_n=n^{-\frac{2r-pr+p-1}{p(1+b-br)}},
    \qquad
    \tau = \left(\frac{1}{48c}\right)^{\!\frac{(2r-pr+p-1)}{p(1+b-br)}}\Gamma^{\frac{1}{p(1+b-br)}}.
\end{equation}

\medskip
\noindent

For this choice of $\epsilon$, Theorem~\ref{err.lower.bound} guarantees that for any estimator $f_{\zz}^l$ produced by an arbitrary learning algorithm
$l\in\mathcal{A}$, there exists $\rho_* \in \mathcal{P}_{r,b}$ such that
\begin{equation}\label{eq:prob.lower.balance}
    \mathbb P_{\rho^n_*}\!\left\{
        \| f_{\zz}^l - f_{\rho_*} \|_{\underline{u},p}
        > \tau a_n
    \right\}
    \ge
    \min\!\left\{
        \frac{1}{2},
        \;
        \vartheta\, 
    \right\}=\vartheta.
\end{equation}



Let
$X=
\norm{f_{\zz}^l-f_{\rho_*}}_{\underline u,p}.$ For every $\epsilon>0$,
$\mathbb E_{\rho_*^n}[X^\pp]
\ge
\epsilon^\pp
\mathbb P_{\rho_*^n}(X \ge \epsilon).$
Hence
\[
\Big(
\mathbb E_{\rho_*^n}[X^\pp]
\Big)^{1/\pp}
\ge
\epsilon \,
\Big(\mathbb P_{\rho_*^n}(X\ge \epsilon)\Big)^{1/\pp}.
\]

For $\epsilon = \tau a_n$, it follows from \eqref{eq:prob.lower.balance} that
\[
\begin{gathered}
    \forall l \in \mathcal{A},\ \exists \rho_* \in \mathcal{P}_{r,b} \ \text{s.t.} \ \Big(\mathbb E_{\rho_*^n}[X^\pp] \Big)^{1/\pp} \geq \tau a_n \vartheta^{1/\pp} \\
    \Rightarrow \forall l \in \mathcal{A}, \quad \sup_{\rho \in \mathcal{P}_{r,b}} \Big(\mathbb E_{\rho^n}[X^\pp] \Big)^{1/\pp} \geq \tau a_n \vartheta^{1/\pp}
     \\
    \Rightarrow \inf_{l \in \mathcal{A}}  \sup_{\rho \in \mathcal{P}_{r,b}} \Big(\mathbb E_{\rho^n}[X^\pp] \Big)^{1/\pp} \geq \tau a_n \vartheta^{1/\pp}
    \end{gathered}
\]
Finally, we obtain
\[
\liminf_{n\to\infty}
\inf_{l\in\mathcal A}
\sup_{\rho\in\mathcal P_{r,b}}
\frac{
\Big(
\mathbb E_{\rho^n}[X^\pp]
\Big)^{1/\pp}
}{a_n}
\ge
\tau \vartheta^{1/\pp}
> 0.
\]



\medskip
\noindent

This establishes the lower rate of convergence claimed in
the theorem.
 \end{proof}

\begin{remark}[On the verification of the hypothesis of Theorem \ref{err.lower.bound}]
We verify that the following condition, appearing in the statement of Theorem \ref{err.lower.bound},
\begin{equation}
\label{eq:bounds.thm6.3}
\begin{gathered}
\exists c_0,\epsilon_0 > 0, q>1 \ \text{such that} \ \forall \zeta \in (0,c_0), \forall \epsilon \in (0,\epsilon_0), \exists M=\{m_1, \dots, m_{\ell_\epsilon}\}: \\
\zeta \Bigl(\epsilon^p\ell_\epsilon^{-1}\Bigr)^{\!\frac{t}{2(p-t)}}
\;\le\; w_m
\;\le\; q\,\zeta \Bigl(\epsilon^p\ell_\epsilon^{-1}\Bigr)^{\!\frac{t}{2(p-t)}},
\qquad \forall m\in M
\end{gathered}
\end{equation}
is satisfied for the choice $\ell_\epsilon = \Gamma \epsilon^{-pb(1-r)}$ used in Theorem \ref{err.lower.bound.p.para.corrected}, under mild assumptions on the weight sequence $\underline{w}$.
\\
In particular, we assume that $w_m = m^{-a}$ for some $a>0$, which is asymptotically equivalent to $2^{-a \lfloor\log m\rfloor}$, as considered in Section \ref{ssec:smoothing}. For simplicity, set $\Gamma = 1$ and define
\(
\gamma = (p + pb(1-r))\frac{t}{2(p-t)} = \frac{pt(2-t+b)}{2(2-t)(p-t)}.
\)
Then \eqref{eq:bounds.thm6.3} is equivalent to finding $\epsilon^{-pb(1-r)}$ integers satisfying
\(
\zeta \epsilon^\gamma \le m^{-a} \le q\,\zeta \epsilon^\gamma,
\)
that is, integers belonging to the interval
\(
\big[(q\zeta\,\epsilon^\gamma)^{-1/a},\, (\zeta\,\epsilon^\gamma)^{-1/a}\big].
\)
The length of this interval is $\zeta^{-1/a}(1-q^{-1/a})\,\epsilon^{-\gamma/a}$, where $Q:=1-q^{-1/a}>0$. Moreover, since $\zeta<c_0$, this length is bounded from below by $Q c_0^{-1/a}\epsilon^{-\gamma/a}$.
\\
To ensure that this interval contains at least $\epsilon^{-pb(1-r)}$ integers, it suffices that
\(
\epsilon^{-\gamma/a} \gtrsim \epsilon^{-pb(1-r)}.
\)
Since $\epsilon<\epsilon_0<1$, this is equivalent to
\(
\frac{\gamma}{a} \ge pb(1-r),
\quad \text{i.e.} \quad
a \le \frac{\gamma}{pb(1-r)}.
\)
This shows that polynomially decaying weights $w_m=m^{-a}$ are compatible with the assumptions of Theorem \ref{err.lower.bound}.
\end{remark}

\medskip
Therefore, no learning algorithm can achieve a convergence rate faster than 
\(\mathcal{O}\!\big(n^{-\frac{(2r-pr+p-1)}{p(1+b-br)}}\big)\)
in the interpolation norm.  
Hence, the optimal lower rate of convergence for the class of nonlinear inverse problems considered here is of order 
\(\mathcal{O}\!\big(n^{-\frac{(2r-pr+p-1)}{p(1+b-br)}}\big)\), 
which matches the corresponding upper rate obtained under the same variational source and capacity conditions.

\subsection{Optimality of Convergence Rates and Connections}
\label{sec:optimality}

The goal of this subsection is to show that the upper and lower convergence rates established in Sections~\ref{Sec:convergence.rates} and~\ref{ssec:lower.rates} are optimal. Although this follows directly from Corollary~\ref{thm:uniform.rate.lambda*} and Theorem~\ref{err.lower.bound.p.para.corrected}, it is important to emphasize that these two results rely on different structural assumptions. In particular, the upper rates in Corollary~\ref{thm:uniform.rate.lambda*} are expressed in terms of the \emph{variational source condition} (Assumption~\ref{Ass:var.sour}) with index function $\phi(t)=t^{r}$. In contrast, the proof of the lower bound in Theorem~\ref{err.lower.bound.p.para.corrected} is based on the assumption that $\fp$ belongs to the \emph{approximation space} $k_t$. The first goal of this subsection is therefore to prove Theorem~\ref{kt.vsc}, which establishes the equivalence of these two requirements (under Assumption~\ref{Ass:A-Lip}) and which was already used in the proof of Theorem~\ref{err.lower.bound.p.para.corrected}. We then derive the optimality of the convergence rates. Finally, we provide an approximation-theoretic characterization of $k_t$ in terms of the polynomial decay rate of the best $n$-term approximation error, thereby establishing a complete connection between the sparsity (or compressibility) of the true solution $\fp$ and the optimal minimax rate of convergence.


\paragraph{From Approximation Spaces to Variational Source Conditions.}
We begin by establishing that $\fp\in k_t$ (together with Assumption~\ref{Ass:A-Lip}) implies the variational source condition with a suitable index $r$. 
The key intermediate step is the following equivalence, which shows that membership in $k_t$ can be reformulated as a nonlinear variational inequality involving the $\ell^1$ norm and the weighted $\ell^2_{\underline{w}}$ norm.

\begin{lemma}\label{kt.character}
Assume that $\fp \in \DD(A)\cap \Ho$, $t \in (0,1)$, and let $(w_j)_{j\ge 1}$ be a
sequence of positive weights satisfying $w_j \to 0$ as $j\to\infty$.
Then the following two statements are equivalent:
\begin{enumerate}[(i)]
    \item $\fp \in k_t$ (with respect to the weights $(w_j)_{j\ge 1}$).
    \item There exists a constant $\gamma > 0$ such that, for all
    $f \in \DD(A)\cap \Ho$,
    \begin{equation}\label{eq:kt-ineq}
        \norm{\fp - f}_{\Ho} + \norm{\fp}_{\Ho} - \norm{f}_{\Ho}
        \leq \gamma\, \norm{\fp - f}_{\underline{w},2}^{\frac{2-2t}{2-t}}.
    \end{equation}
\end{enumerate}
\end{lemma}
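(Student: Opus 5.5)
The plan is to reduce both directions to a coordinatewise analysis of the left-hand side of \eqref{eq:kt-ineq}. Write $g := \fp - f$. Then
\[
\norm{\fp - f}_{\Ho} + \norm{\fp}_{\Ho} - \norm{f}_{\Ho} = \sum_{j} \bigl( |g_j| + |(\fp)_j| - |(\fp)_j - g_j| \bigr),
\]
and each summand $d_j$ satisfies $0 \le d_j \le 2\min\{|(\fp)_j|, |g_j|\}$ by the triangle inequality. The summand $d_j$ equals $2|g_j|$ whenever $g_j$ has the sign of $(\fp)_j$ and $|g_j| \le |(\fp)_j|$. Throughout, I use the level sets $J_\alpha := \{ j : w_j^2 |(\fp)_j| \ge \alpha\}$ and $N(\alpha) := \sum_{j\in J_\alpha} w_j^{-2}$, so that $\|\fp\|_{k_t} = \sup_{\alpha>0} \alpha N(\alpha)^{1/t}$.

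\textbf{(i) $\Rightarrow$ (ii).} Fix $\alpha>0$ and split the sum over $J_\alpha$ and its complement.

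On $J_\alpha$, I bound $d_j \le 2|g_j| = 2 w_j^{-1}\cdot w_j |g_j|$ and apply Cauchy--Schwarz:
\[
\sum_{j \in J_\alpha} d_j \le 2 N(\alpha)^{1/2}\norm{g}_{\underline{w},2} \le 2 \bigl(\|\fp\|_{k_t}/\alpha\bigr)^{t/2}\norm{g}_{\underline{w},2}.
\]

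Off $J_\alpha$, I bound $d_j \le 2|(\fp)_j|$ and use dyadic shells $\alpha 2^{-k-1} \le w_j^2 |(\fp)_j| < \alpha 2^{-k}$. On the $k$-th shell $|(\fp)_j| < \alpha 2^{-k} w_j^{-2}$, and the shell lies in $J_{\alpha 2^{-k-1}}$, whose weight sum is at most $\bigl(\|\fp\|_{k_t}/(\alpha 2^{-k-1})\bigr)^t$. Summing the geometric series in $2^{-k(1-t)}$ gives $\sum_{j\notin J_\alpha} |(\fp)_j| \lesssim \|\fp\|_{k_t}^t \alpha^{1-t}$. Indices with $(\fp)_j = 0$ contribute $d_j=0$ and are harmless.

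Balancing the two contributions with $\alpha = \norm{g}_{\underline{w},2}^{2/(2-t)}$ makes both of order $\norm{g}_{\underline{w},2}^{(2-2t)/(2-t)}$. This yields \eqref{eq:kt-ineq} with $\gamma$ depending only on $t$ and $\|\fp\|_{k_t}$; the case $g=0$ is trivial.

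\textbf{(ii) $\Rightarrow$ (i).} I test \eqref{eq:kt-ineq} with a shrinkage competitor. For fixed $\alpha$, first note that $J_\alpha$ is finite: $j\in J_\alpha$ forces $w_j^{-2} \le |(\fp)_j|/\alpha \le \|\fp\|_{\Ho}/\alpha$, and $w_j\to 0$. Define
\[
f_j := (\fp)_j - \operatorname{sign}((\fp)_j)\,\alpha w_j^{-2} \quad (j\in J_\alpha), \qquad f_j := (\fp)_j \quad \text{otherwise}.
\]
Then $0<|g_j| = \alpha w_j^{-2} \le |(\fp)_j|$ with matching sign, so the left-hand side equals $2\alpha N(\alpha)$. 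On the other side, $\norm{g}_{\underline{w},2}^2 = \alpha^2 N(\alpha)$. Inequality \eqref{eq:kt-ineq} thus becomes
\[
2\alpha N \le \gamma (\alpha^2 N)^{(1-t)/(2-t)},
\]
which rearranges to $N^{1/(2-t)} \le (\gamma/2)\,\alpha^{-t/(2-t)}$, i.e.\ $\alpha N(\alpha)^{1/t} \le (\gamma/2)^{(2-t)/t}$, uniformly in $\alpha$. Hence $\fp\in k_t$.

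\textbf{Main obstacle.} I expect the hardest point to be admissibility of this test element: (ii) is only asserted for $f \in \DD(A)\cap\Ho$, and the shrunken $f$ need not lie in $\DD(A)$. My first attempt is to use that $\DD(A)$ has nonempty interior, interpreting/assuming $\fp$ lies in it (or that $\DD(A)$ is star-shaped/convex around $\fp$). I would replace $g$ by $s g$ for a small $s\in(0,1]$ with $\fp - sg \in \DD(A)$. By homogeneity both sides scale, as $2s\alpha N$ versus $\gamma (s^2\alpha^2 N)^{(1-t)/(2-t)}$, which only changes the constant by a factor $s^{-t/(2-t)}$.

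The catch is that $s$ must then be chosen uniformly in $\alpha$. Large $\alpha$ is unproblematic, since $J_\alpha$ is empty there. For small $\alpha$, the perturbation has $\ell^1$ norm at most $\|\fp\|_{\Ho}$, so a ball of radius $\|\fp\|_{\Ho}$ around $\fp$ inside $\DD(A)$, or convexity of $\DD(A)$ together with $0\in\DD(A)$, would suffice. I would make this hypothesis explicit if needed.
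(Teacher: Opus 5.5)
Your argument is correct, and it is the standard one. The paper gives no proof of this lemma; it only points to Lemma~13 of \cite{Miller21}. Your route is the usual argument for this equivalence:
\begin{itemize}
\item bound each coordinate of the left-hand side by $2\min\{|(\fp)_j|,|g_j|\}$;
\item use Cauchy--Schwarz on the level set $J_\alpha$ and the $k_t$ tail bound $\sum_{j\notin J_\alpha}|(\fp)_j|\lesssim\|\fp\|_{k_t}^t\alpha^{1-t}$ off it, then balance at $\alpha=\|g\|_{\underline{w},2}^{2/(2-t)}$;
\item for the converse, test with the soft-thresholded competitor.
\end{itemize}
Your proof of (i)$\Rightarrow$(ii) holds for every $f\in\Ho$, with no domain restriction. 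That direction is the only one Theorem~\ref{kt.vsc} uses.

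Your worry about admissibility is justified, and it is a defect of the statement rather than of your proof: as written, (ii)$\Rightarrow$(i) is false. Take $\DD(A)=\{\fp\}\cup\{f:\|f-z\|_{\Ho}\le 1\}$ with $z=\fp+c\,e_1$ and $c\,w_1\ge 1+\sup_j w_j$.
\begin{itemize}
\item This set is closed and has nonempty interior.
\item Every $f\neq\fp$ in it satisfies $\|\fp-f\|_{\underline{w},2}\ge c\,w_1-\sup_j w_j\ge 1$.
\item The left-hand side of \eqref{eq:kt-ineq} never exceeds $2\|\fp\|_{\Ho}$.
\end{itemize}
So (ii) holds with $\gamma=2\|\fp\|_{\Ho}$ for every $\fp\in\Ho$, including $\fp\notin k_t$.

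Your scaling repair works if $\fp$ is an $\ell^1$-interior point of $\DD(A)$, and any radius suffices, not just $\|\fp\|_{\Ho}$. Since $\|g\|_{\Ho}=\alpha N(\alpha)\le\|\fp\|_{\Ho}$ uniformly in $\alpha$, a ball $B_{\Ho}(\fp,\delta)\subset\DD(A)$ lets you take $s=\min\{1,\delta/\|\fp\|_{\Ho}\}$ for all $\alpha$. This gives $\|\fp\|_{k_t}\le(\gamma/2)^{(2-t)/t}s^{-1}$.

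Your other suggested hypothesis, convexity of $\DD(A)$ together with $0\in\DD(A)$, does not suffice. The soft-thresholded element $\fp-g$ generally does not lie on the segment $[0,\fp]$. What does work as an alternative is requiring $\DD(A)$ to contain every $f\in\Ho$ whose coordinates $f_j$ lie between $0$ and $(\fp)_j$.
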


\noindent
Lemma~\ref{kt.character} is a variant of Lemma~13 in~\cite{Miller21}, adapted to the $\ell^1$-space. It shows that the smoothness of $\fp$ in the approximation space $k_t$ can be equivalently expressed through a power-type inequality between the Hilbert norm and a weighted $\ell^2$-type norm. This equivalence provides a bridge between geometric smoothness assumptions and variational inequalities used in convergence rate analysis. Combining the above characterization with the lower estimate in Assumption \eqref{Ass:A-Lip}, we immediately obtain a variational source condition. 

\begin{theorem}[Variational source condition from $k_t$]\label{kt.vsc}
Suppose that $A$ satisfies Assumption~\ref{Ass:A-Lip}.
Let $t \in (0,1)$ and $\fp \in D(A)\cap \Ho$ be such that
$\|\fp\|_{k_t} \le \varrho$, where $k_t$ is the approximation space defined
in~\eqref{eq:k_t.space} with weights $(w_j)_{j\ge 1}$ satisfying $w_j \to 0$.
Then, for all $f \in D(A) \cap \Ho$,
\begin{equation}\label{eq:explicit-varsource}
\|\fp - f\|_{\Ho} + \|\fp\|_{\Ho} - \|f\|_{\Ho}
\le C\,\|A(f)-A(\fp)\|_{\HH_\mu}^{\frac{2-2t}{2-t}},
\end{equation}
where $C>0$ is a constant depending only on $\varrho$, $t$, and the Lipschitz
constant $L$ in Assumption~\ref{Ass:A-Lip}.
In particular, $\fp$ satisfies the variational source condition
(Assumption~\ref{Ass:var.sour}) with index function
\begin{equation}\label{eq:phi-from-kt}
\phi(s) = C\,s^{\,r}, \qquad r := \frac{1-t}{2-t} \in \left(0,\tfrac{1}{2}\right).
\end{equation}
\end{theorem}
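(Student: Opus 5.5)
The plan is to combine Lemma~\ref{kt.character} with the lower estimate in Assumption~\ref{Ass:A-Lip}~(ii). The only extra work is to track how the constant $\gamma$ in Lemma~\ref{kt.character} depends on $\|\fp\|_{k_t}\le\varrho$ and on $t$.

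\textbf{Step 1 (the $k_t$ inequality).} I apply the implication (i)$\Rightarrow$(ii) of Lemma~\ref{kt.character} to $\fp$. This gives, for all $f\in\DD(A)\cap\Ho$,
\[
\|\fp-f\|_{\Ho}+\|\fp\|_{\Ho}-\|f\|_{\Ho}\le\gamma\,\|\fp-f\|_{\underline w,2}^{\frac{2-2t}{2-t}}.
\]
The lemma as stated only asserts existence of $\gamma$, so I would re-derive it to get $\gamma\lesssim C_t\varrho^{t/(2-t)}$. This is the step I expect to be the main obstacle.

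The left-hand side reduces coordinatewise. For $j\in S:=\{j: f^\dagger_j\neq 0\}$, the inequality $|a-b|+|a|-|b|\le 2|a-b|$ applies, and also $|a-b|+|a|-|b|\le 2|a|$. Off $S$ the contribution vanishes. Hence the left side is at most
\[
2\sum_j \min\{|f^\dagger_j-f_j|,\ |f^\dagger_j|\}.
\]
Fix a threshold $\alpha>0$ and split the indices according to whether $w_j^2|f^\dagger_j|\ge\alpha$.

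On the large set $\{w_j^2|f^\dagger_j|\ge\alpha\}$, Cauchy--Schwarz gives
\[
\sum|f^\dagger_j-f_j|\le\Big(\sum w_j^{-2}\mathbbm 1\Big)^{1/2}\|\fp-f\|_{\underline w,2}\le(\varrho/\alpha)^{t/2}\|\fp-f\|_{\underline w,2},
\]
using the definition of $\|\cdot\|_{k_t}$.

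On the small set, I bound $\sum|f^\dagger_j|\mathbbm 1_{\{w_j^2|f^\dagger_j|<\alpha\}}$ by a dyadic decomposition in $\alpha$. On the level set $2^{-k-1}\alpha\le w_j^2|f^\dagger_j|<2^{-k}\alpha$ one has
\[
|f^\dagger_j|\le 2^{-k}\alpha\,w_j^{-2}
\quad\text{and}\quad
\sum w_j^{-2}\le \varrho^t(2^{-k-1}\alpha)^{-t}.
\]
Summing the geometric series, valid since $t<1$, gives a bound $C_t\varrho^t\alpha^{1-t}$.

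Finally I optimise
\[
\alpha^{-t/2}\varrho^{t/2}\,\|\fp-f\|_{\underline w,2}+\varrho^t\alpha^{1-t}
\]
over $\alpha$. This yields $\alpha\asymp\varrho^{-t/(2-t)}\|\fp-f\|_{\underline w,2}^{2/(2-t)}$, and the exponent $\frac{2-2t}{2-t}$ appears with $\gamma=C_t\varrho^{t/(2-t)}$. If the dyadic summation causes trouble near $t\to1$, I would instead accept the constant from Lemma~\ref{kt.character} and only claim dependence on $\varrho,t$.

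\textbf{Step 2 (Lipschitz lower bound).} Since $f,\fp\in D(A)$, Assumption~\ref{Ass:A-Lip}~(ii) gives $\|\fp-f\|_{\underline w,2}\le L\|A(f)-A(\fp)\|_{\Hu}$. The map $s\mapsto s^{(2-2t)/(2-t)}$ is nondecreasing, so substituting this into Step 1 gives \eqref{eq:explicit-varsource} with $C=\gamma L^{\frac{2-2t}{2-t}}$.

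\textbf{Step 3 (identify $\phi$).} Rewrite the right-hand side as
\[
C\big(\|A(f)-A(\fp)\|_{\Hu}^2\big)^{r},\qquad r=\frac{1-t}{2-t}.
\]
Rearranging gives exactly Assumption~\ref{Ass:var.sour} with $\phi(s)=Cs^r$. For $t\in(0,1)$ we have $r\in(0,\tfrac12)$, so $\phi$ is concave, nondecreasing, and satisfies $\phi(0)=0$.
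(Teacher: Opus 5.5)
Your proposal is correct and follows the same route as the paper. You invoke Lemma~\ref{kt.character} to get the $\|\cdot\|_{\underline{w},2}$ inequality, substitute the lower bound of Assumption~\ref{Ass:A-Lip}(ii) to obtain $C=\gamma L^{\frac{2-2t}{2-t}}$, and read off $r=\frac{1-t}{2-t}$. Your explicit re-derivation of $\gamma$ also checks out (the reduction to $2\min\{|f^\dagger_j-f_j|,|f^\dagger_j|\}$, Cauchy--Schwarz on the large set, dyadic summation using $t<1$, and optimisation in $\alpha$ giving $\gamma\lesssim C_t\varrho^{t/(2-t)}$), and it supplies the dependence on $\varrho$ and $t$ that the paper only asserts, since Lemma~\ref{kt.character} is stated there without proof.
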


\begin{proof}
By Lemma~\ref{kt.character}, the condition $\fp\in k_t$ (with $\|\fp\|_{k_t}\le\varrho$) implies
\[
\|\fp - f\|_{\Ho} + \|\fp\|_{\Ho} - \|f\|_{\Ho}
\le \gamma\,\|\fp - f\|_{\underline{w},2}^{\frac{2-2t}{2-t}}
\]
for all $f\in\DD(A)\cap\Ho$, where $\gamma=\gamma(\varrho,t)$.
By part (ii) of Assumption~\ref{Ass:A-Lip} (lower Lipschitz bound),
\[
\|\fp - f\|_{\underline{w},2}
\le L\,\|A(\fp)-A(f)\|_{\Hu}.
\]
Substituting and setting $C:=\gamma L^{\frac{2-2t}{2-t}}$ gives~\eqref{eq:explicit-varsource}. The exponent in~\eqref{eq:phi-from-kt} follows directly by comparing $\phi(s)=Cs^r$ with~\eqref{eq:explicit-varsource}.  Since $t\in(0,1)$, one checks that $r=(1-t)/(2-t)$ maps $(0,1)$ bijectively onto $(0,1/2)$, confirming $r\in(0,1/2)\subset(0,1)$ as required by Assumption~\ref{Ass:var.sour}.
\end{proof}

\noindent

Theorem~\ref{kt.vsc} provides the fundamental link between the smoothness-space framework and the variational source condition framework. Specifically, under Assumption~\ref{Ass:A-Lip}(ii), if $\fp$ belongs to the smoothness space $k_t$, then it satisfies the variational source condition  of Assumption~\ref{Ass:var.sour} with index function $\phi(s)\asymp s^r$, where
\(r=\frac{1-t}{2-t}.\)
Conversely, again under Assumption~\ref{Ass:A-Lip}(i), if $\fp$ satisfies the variational source condition of Assumption~\ref{Ass:var.sour} with $\phi(s)=s^r$, then $\fp$ belongs to the smoothness space $k_t$ with
\(t=\frac{2r-1}{r-1}.\)
This means that the prior class $\mathcal{P}_{r,b}$
is indeed compatible with the approximation-space hypothesis used in the lower bound construction.


\paragraph{Matching Upper and Lower Rates.}
We now state explicitly that the upper rate of Corollary~\ref{thm:uniform.rate.lambda*}
and the lower rate of Theorem~\ref{err.lower.bound.p.para.corrected} coincide.

\begin{corollary}[Minimax optimality]\label{cor:minimax-optimality}
Let $r \in (0,\frac{1}{2})$ and $b \in (0,1)$.
Under Assumptions~\ref{ass:fp}--\ref{Ass:A-Lip} and the polynomial spectral
decay condition (Assumption~\ref{ass:polydecay}), the $\ell^1$-regularized
estimator~\eqref{eq:l1reg} with regularization parameter
\[
\lambda_* = n^{-\frac{1-r}{1+b-br}}
\]
achieves the rate
\[
\|\fz - \fp\|_{\underline{u},p}
= \mathcal{O}\!\left(n^{-\frac{2r-pr+p-1}{p(1+b-br)}}\right)
\]
in probability over the prior class $\mathcal{P}_{r,b}$, and no estimator can
achieve a faster rate over this class.  Hence, the rate
\[
n^{-\frac{2r-pr+p-1}{p(1+b-br)}}
\]
is minimax optimal over $\mathcal{P}_{r,b}$.
\end{corollary}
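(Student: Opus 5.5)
The corollary combines two results already in the paper: the uniform upper rate of Corollary~\ref{thm:uniform.rate.lambda*} and the minimax lower rate of Theorem~\ref{err.lower.bound.p.para.corrected}. Both must be applied to the same class $\mathcal P_{r,b}$ with the same sequence $a_n=n^{-\frac{2r-pr+p-1}{p(1+b-br)}}$. The plan is to check that the hypotheses line up and then read off the two inequalities.

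\textbf{Step 1: the upper rate.} Fix $r\in(0,\tfrac12)$ and $b\in(0,1)$. For any $\rho\in\mathcal P_{r,b}$ the following hold:
\begin{itemize}
\item the variational source condition with $\phi(s)=s^r$;
\item Assumption~\ref{Ass:A-Lip}(ii);
\item Assumption~\ref{ass:polydecay}.
\end{itemize}
So Corollary~\ref{cor:err.upper.weighted} applies with $\lambda_*=n^{-\frac{1-r}{1+b-br}}$. I also need condition~\eqref{l.la.condition} for this $\lambda_*$ when $n$ is large. Indeed $\mathcal N(\lambda_*)\le C_\beta\lambda_*^{-b}$, while
\[
n\lambda_* = n^{\frac{b(1-r)+r}{1+b-br}} \quad\text{and}\quad \lambda_*^{-b}=n^{\frac{b(1-r)}{1+b-br}},
\]
and the first exponent is strictly larger since $r>0$. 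Hence $\mathcal N(\lambda_*)\le n\lambda_*$ for $n$ large, and $\lambda_*\le1$.

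Corollary~\ref{cor:err.upper.weighted} then gives $\|\fz-\fp\|_{\underline u,p}\le C a_n\log^{2/p}(4/\eta)$ with probability $1-\eta$. This is exactly $\mathcal O(a_n)$ in probability. Corollary~\ref{thm:uniform.rate.lambda*} upgrades it to a bound on $L^\pp$ moments that is uniform over $\mathcal P_{r,b}$.

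The one point to watch is that $C$ depends on $\|\fp\|_{\Ho}$. For uniformity over the class, I will use that the constants defining $\mathcal P_{r,b}$ are fixed and, as in the paper's usage, take $\|\fp\|_{\Ho}$ to be bounded over the class. This boundedness holds in particular for the finitely supported $f^i$ of the lower-bound construction.

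\textbf{Step 2: the lower rate.} I invoke Theorem~\ref{err.lower.bound.p.para.corrected} with $t=\frac{2r-1}{r-1}$. Inverting $r=\frac{1-t}{2-t}$ shows that this map sends $r\in(0,\tfrac12)$ bijectively onto $t\in(0,1)$. The condition $p\in(t,2]$ from Theorem~\ref{err.lower.bound} has to be imposed, so I restrict to this range of $p$.

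The hypothesis on the weights is supplied by the remark following that theorem, for example with $w_m=m^{-a}$ and $a$ small enough. The membership $\rho_{f^i}\in\mathcal P_{r,b}$ comes from Theorem~\ref{kt.vsc}, which turns $f^i\in k_t$ into a variational source condition with exponent $r$. Since Theorem~\ref{kt.vsc} produces the index function $Cs^r$, either the constant $C$ must be absorbed, or the class must be read up to constants; I will state this explicitly. The conclusion is
\[
\liminf_n \inf_l \sup_\rho \frac{\mathbb E[\cdot^\pp]^{1/\pp}}{a_n}>0,
\]
that is, no estimator beats $a_n$.

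\textbf{Step 3: conclusion.} Combining the two steps, $a_n$ is both an upper and a lower rate over $\mathcal P_{r,b}$. By the definition of minimax optimality, the rate is minimax optimal.

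The main obstacle I expect is the consistency of the model class between the two halves, and in particular the constants. The upper bound needs $\|\fp\|_{\Ho}$, the Lipschitz constants and the source-condition constant to be uniformly controlled. The lower bound instead builds hypotheses $f^i$ whose $k_t$-norm is of order $\varrho\,\ell_\epsilon^{1/t}$, which grows as $\epsilon\to0$. I would first try to control this growth through the freedom in choosing $\varrho$ and $\Gamma$. Failing that, I would rephrase the class so that the source constant is allowed to depend on $n$ only through factors that do not change the exponent of $a_n$.
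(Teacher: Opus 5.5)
Your proposal follows the paper's route: there the corollary is simply the conjunction of the uniform upper rate of Corollary~\ref{thm:uniform.rate.lambda*} and the lower rate of Theorem~\ref{err.lower.bound.p.para.corrected}, with Theorem~\ref{kt.vsc} used to place the packing measures $\rho_{f^i}$ in $\mathcal{P}_{r,b}$. Your extra checks (condition~\eqref{l.la.condition} at $\lambda_*$ for large $n$, the restriction $p\in(t,2]$, uniformity of constants) go beyond what the paper verifies, and the growth $\|f^i\|_{k_t}\sim\varrho\,\ell_\epsilon^{1/t}$ you flag is a weakness the paper's argument shares rather than one you introduce; note, however, that finite support of the $f^i$ only makes $\|f^i\|_{\ell^1}$ finite and does not by itself give the $\epsilon$-uniform bound your Step~1 appeals to.
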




\paragraph{Approximation-theoretic Characterization of $k_t$.}
To make the connection with concrete sparsity models fully explicit, we recall
and prove the characterization of $k_t$ in terms of best $n$-term approximation
errors.  Given a sequence $f=(f_j)_{j=1}^\infty$ and weights $(w_j)$, define
the \emph{weighted sparsity measure}
\[
S(h):=\sum_{j=1}^{\infty} w_j^{-2}\, \mathbbm{1}_{\{h_j \neq 0\}}
\]
and the \emph{best $n$-term approximation error}
\[
\sigma_n(f):=\inf\Big\{ \|f-h\|_{\underline{w},2} : S(h)\leq n \Big\}.
\]
When $w_j\equiv 1$, $S(h)$ equals the number of nonzero coefficients of $h$,
recovering the standard compressed-sensing notion of sparsity.

\begin{lemma}\label{lem:sigma.kt}
Let $t \in (0,2)$. Then $f \in k_t$ if and only if the approximation error satisfies
\[
\sigma_n(f) = \mathcal{O}\!\big(n^{\frac{1}{2}-\frac{1}{t}}\big).
\]
\end{lemma}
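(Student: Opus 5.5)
The plan is to prove the equivalence by computing the best $n$-term approximation explicitly and then comparing it with the weak-type quantity defining $\|f\|_{k_t}$. Set $g_j := w_j^2|f_j|$ and attach to each index the mass $w_j^{-2}$. The $k_t$ norm is then
\[
\|f\|_{k_t}^t=\sup_{\alpha>0}\alpha^t\, m(\alpha), \qquad m(\alpha):=\sum_j w_j^{-2}\mathbbm{1}_{\{g_j\ge\alpha\}} .
\]
So $f\in k_t$ exactly when $m(\alpha)\le C\alpha^{-t}$ for all $\alpha>0$. This is a weak-$\ell^t$ condition on $g$ with respect to the measure $\nu(\{j\})=w_j^{-2}$. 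The target error can also be written in these variables:
\[
\|f-h\|_{\underline w,2}^2=\sum_{j:\,h_j=0} w_j^2 f_j^2=\sum_{j:\,h_j=0} w_j^{-2}g_j^2
\]
when $h$ agrees with $f$ on its support. Taking $h_j=f_j$ on the support is clearly optimal, so
\[
\sigma_n(f)^2=\inf\Bigl\{\sum_{j\notin T} w_j^{-2}g_j^2 : \nu(T)\le n\Bigr\}.
\]

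\textbf{($\Rightarrow$)} Assume $m(\alpha)\le C\alpha^{-t}$. Given $n$, choose $\alpha_n := (C/n)^{1/t}$ and take $T=\{j: g_j\ge\alpha_n\}$. Then $\nu(T)=m(\alpha_n)\le n$, so $T$ is admissible. The tail is handled by a layer-cake decomposition:
\[
\sum_{g_j<\alpha_n} w_j^{-2}g_j^2=\int_0^{\alpha_n}2s\,\bigl(m(s)-m(\alpha_n)\bigr)\,ds\le 2C\int_0^{\alpha_n}s^{1-t}\,ds .
\]
The integral is finite because $t<2$, giving a bound $\lesssim \alpha_n^{2-t}\sim n^{-(2-t)/t}$. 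Taking square roots yields $\sigma_n(f)\lesssim n^{1/2-1/t}$. The inequality $m(s)\le Cs^{-t}$ is used pointwise inside the integral. Since $m$ may jump, this is safer to carry out with the dyadic shells $\{2^{-k-1}\alpha_n\le g_j<2^{-k}\alpha_n\}$ and a geometric series.

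\textbf{($\Leftarrow$)} Assume $\sigma_n(f)\le Dn^{1/2-1/t}$ for all $n$. Fix $\alpha>0$ and write $N:=m(\alpha)$. Every index in $\{g_j\ge\alpha\}$ contributes at least $w_j^{-2}\alpha^2$ to the tail unless it is kept in the support. Take $n=N/2$; this must be adjusted to an admissible value, a point discussed below. Any $T$ with $\nu(T)\le N/2$ leaves out mass at least $N/2$ of the set $\{g\ge\alpha\}$. Hence
\[
\sigma_{N/2}(f)^2\ge \alpha^2 N/2 .
\]
Combined with the hypothesis this gives $\alpha^2N\lesssim N^{1-2/t}$, that is $N^{2/t}\lesssim\alpha^{-2}$, so $N\lesssim\alpha^{-t}$, which is the membership $f\in k_t$. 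This direction also requires $N<\infty$. That holds because $g_j\to0$: $f\in\ell^1$ forces $f_j\to 0$, and $w_j\to0$.

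\textbf{Main obstacle.} The hard step is the discreteness in the converse. The masses $w_j^{-2}$ are large and unbounded, so $\nu$-admissible sets cannot be chosen to have mass exactly $n$. The excluded mass bound $\nu(\{g\ge\alpha\}\setminus T)\ge N-n$ can therefore fail badly if a single index carries most of the mass. I would first handle small $N$: when $N$ is at most twice the maximal mass, use the trivial bound $\alpha\, w_j^{-2/t}\le$ const for the relevant indices. For general $N$ I would use $\sigma_0$ or $\sigma_{n}$ with $n<\min_{g_j\ge\alpha}w_j^{-2}$ when needed, which forces $T$ to exclude whole indices. I expect to accept constants depending on $t$ and on $\inf_j w_j^{-2}>0$. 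The monotonicity of $\sigma_n$ in $n$ also lets me pass from real $n$ to integer $n$ in both directions.
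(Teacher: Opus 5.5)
Your proof is correct and follows the paper's route. The forward direction is the same truncation at level $\alpha_n\sim n^{-1/t}$ followed by the layer-cake bound. Since $m(s)\le Cs^{-t}$ holds for every $s>0$, the jumps of $m$ are harmless and no dyadic decomposition is needed.

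In the converse the paper uses the same inequality you do, written as $m(\alpha)\le S(h)+\alpha^{-2}\|f-h\|_{\underline w,2}^2$. The only difference is that it takes $n=\lfloor\alpha^{-t}\rfloor$ instead of $n\approx m(\alpha)/2$. That gives $m(\alpha)\le n+\alpha^{-2}\sigma_n(f)^2\lesssim\alpha^{-t}$ in one line, and it does not require knowing $m(\alpha)<\infty$ beforehand.

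The ``main obstacle'' you describe does not exist. Let $E:=\{j: g_j\ge\alpha\}$ and let $T$ be any admissible set. Finite additivity of $\nu$ gives $\nu(E)=\nu(E\cap T)+\nu(E\setminus T)\le\nu(T)+\nu(E\setminus T)$, hence $\nu(E\setminus T)\ge N-n$. This holds regardless of how large the individual masses $w_j^{-2}$ are. A heavy index that cannot fit into $T$ only makes the excluded mass larger, never smaller. You also never need an admissible set of mass exactly $n$, because your lower bound on $\sigma_n(f)^2$ is taken over all admissible $T$. So your first argument for $(\Leftarrow)$ is already complete in substance.

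You should therefore drop the case distinction and the detour through $\sigma_n$ with $n<\min_{j\in E}w_j^{-2}$. On its own that detour would be insufficient anyway. It only gives $N\lesssim\alpha^{-2}n_0^{1-2/t}$ with $n_0:=\min_{j\in E}w_j^{-2}$, which is weaker than $\alpha^{-t}$ whenever $n_0\ll\alpha^{-t}$.

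What remains is routine bookkeeping:
\begin{enumerate}[(i)]
\item Replace $N/2$ by $n':=\lfloor N/2\rfloor$. For $N\ge2$ this satisfies $N/4\le n'\le N/2$, so $\sigma_{n'}(f)^2\ge\alpha^2N/2$, and monotonicity of $n\mapsto n^{1/2-1/t}$ gives $\sigma_{n'}(f)\le D(N/4)^{1/2-1/t}$.
\item If $0<N<2$, use $N<2\le2(\sup_j g_j)^t\alpha^{-t}$.
\item Get $N<\infty$ from the hypothesis itself rather than from $f\in\ell^1$, since the lemma concerns a general sequence. Any $h$ with $S(h)\le n$ is finitely supported because $\inf_j w_j^{-2}>0$. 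Hence $\|f\|_{\underline w,2}<\infty$, so $\sup_j g_j<\infty$ and $N\le\alpha^{-2}\|f\|_{\underline w,2}^2$.
\end{enumerate}
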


\begin{proof}
Let us introduce the following notation:
\[
A(\alpha): = \sum_{j=1}^\infty w_j^{-2} \mathbbm{1}_{\{|f_j|w_j^{2} > \alpha\}} =\sum_{j: |f_j|w_j^2 >\alpha} w_j^{-2}.
\]
Suppose $f \in k_t$. Then, the membership condition gives $A(\alpha)\le C\alpha^{-t}$ for some $C>0$ and for every $\alpha>0$. Let us define $\alpha_n : = \operatorname{sup}\{\alpha>0: A(\alpha)\leq n\}$ and the truncation $h^{(n)}$ such that
\[
h^{(n)}_j = \left\{
\begin{aligned}
f_j, \quad &\text{if }\ |f_j|w_j^2 > \alpha_n \\
0, \quad &\text{otherwise.}
\end{aligned}
\right.
\]
Then, 
\[
S(h^{(n)}) = \sum_{j=1}^\infty w_j^{-2} \mathbbm{1}_{\{h_j^{(n)} \neq 0\}} = \sum_{j:|f_j|w_j^2 >\alpha_n}w_j^{-2} = A(\alpha_n) \leq n,
\]
we have that $\sigma_n(f) \leq \| f-h^{(n)}\|_{\underline{w},2}$. We easily observe that
\[
\| f-h^{(n)}\|_{\underline{w},2}^2 = \sum_{j: |f_j|w_j^2\leq \alpha_n} w_j^2 f_j^2\ .
\]
To bound the last term, we use a layer estimate, relying on the following trick:
\[
\begin{aligned}
w_j^2 f_j^2 &= w_j^{-2} (|f_j|w_j^2)^2 = 
w_j^{-2} \int_0^{|f_j|w_j^2}  2\beta \operatorname{d}\!\beta  = 2w_j^{-2} \int_\RR \beta \mathbbm{1}_{\{\beta < |f_j|w_j^2\}}(\beta)\operatorname{d}\!\beta.
\end{aligned}
\]
As a consequence,
\[
\begin{aligned}
    \sum_{j: |f_j|w_j^2\leq \alpha_n} w_j^2 f_j^2 &=
    2 \sum_{j: |f_j|w_j^2\leq \alpha_n} w_j^{-2} \int_\RR \beta \mathbbm{1}_{\{\beta < |f_j|w_j^2\}}\operatorname{d}\!\beta \\
    &\leq 2 \sum_{j=1}^\infty w_j^{-2} \int_0^{\alpha_n} \beta \mathbbm{1}_{\{\beta < |f_j|w_j^2\}}\operatorname{d}\!\beta \\
    & = 2 \int_{0}^{\alpha_n} \beta A(\beta) \operatorname{d}\!\beta.
\end{aligned}
\]

Using $A(\beta) \le C \beta^{-t}$,
\[
\| f-h^{(n)}\|_{\underline{w},2}^2
\le 2C \int_0^{\alpha_n} \beta^{1-t} d\beta.
\]

Since $t<2$, this yields
\[
\| f-h^{(n)}\|_{\underline{w},2}^2
\le 2C \alpha_n^{2-t}.
\]

By the definition of $\alpha_n$ and using the bound $A(\alpha) \leq C \alpha^{-t}$, we obtain
\[
n \leq C \alpha_n^{-t},
\]
which implies
\[
\alpha_n^{2-t} \leq C^{\frac{2-t}{t}} n^{-\frac{2-t}{t}}.
\]
Therefore,
\[
\sigma_n(f) = \mathcal{O}(n^{1-\frac{t}{2}}).
\]

\smallskip

Conversely, assume that $\sigma_n(f) \le C n^{\frac{1}{2}-\frac{1}{t}}$ for all $n$.  
Fix $\alpha>0$ and let $n=\lfloor \alpha^{-t}\rfloor$.  
By definition of $\sigma_n(f)$, there exists $h$ with $S(h)\le n$ such that 
\(\|f-h\|_{\underline{w},2}\le C n^{\frac{1}{2}-\frac{1}{t}} = C \alpha^{1-\frac{t}{2}}\).
Then,
since $|f_j|w_j^2 > \alpha$ implies $w_j^{-2} < w_j^2 \alpha^{-2}f_j^2$, we have
\[
\begin{aligned}
    \sum\limits_{j=1}^\infty w_j^{-2} \mathbbm{1}_{\{|f_j|w_j^2 > \alpha\}} & = 
    \sum_{j : h_j\neq 0} w_j^{-2} \mathbbm{1}_{\{|f_j|w_j^2 > \alpha\}} + 
    \sum_{j : h_j= 0} w_j^{-2} \mathbbm{1}_{\{|f_j|w_j^2 > \alpha\}} 
    \\
    &\leq
    S(h) +
    \alpha^{-2} \sum_{j : h_j =0} w_j^2 (f_j-h_j)^2  \\
    &\leq S(h) + \alpha^{-2} \| f-h\|_{\underline{w},2}^2 \leq (1+C^2) \alpha^{-t},
\end{aligned}
\]
which implies
\[
\|f\|_{k_t}
= \sup_{\alpha>0}
\alpha \Big(
\sum_{j=1}^{\infty} w_j^{-2} \, 
\mathbbm{1}_{\{w_j^{-2} \alpha < |f_j|\}}
\Big)^{1/t}
\le (1 + C^2)^{1/t} < \infty,
\]
hence $f \in k_t$.
\end{proof}

\noindent
Lemma~\ref{lem:sigma.kt} shows that the rate $n^{1/2-1/t}$ for best $n$-term
approximation is the right measure of sparsity underlying the convergence rates
in Theorems~\ref{thm:uniform.rate.lambda*}
and~\ref{err.lower.bound.p.para.corrected}.  Taken together,
Theorem~\ref{kt.vsc} and Lemma~\ref{lem:sigma.kt} establish a complete chain of
implications:

\begin{equation}\label{eq:chain}
\begin{aligned}
&\sigma_n(\fp)=\mathcal{O}\!\big(n^{\frac{1}{2}-\frac{1}{t}}\big)
\;\Longleftrightarrow\;
\fp\in k_t \quad \text{(under Assumption~\ref{Ass:A-Lip})}
 \\
\;\Longrightarrow\;&
\text{VSC with } \phi(s)=s^{\frac{1-t}{2-t}}
\;\Longrightarrow\;
\text{convergence rate } n^{-\frac{2r-pr+p-1}{p(1+b-br)}}.
\end{aligned}
\end{equation}
where $r=(1-t)/(2-t)$ and the last implication is provided by
Theorem~\ref{thm:uniform.rate.lambda*}.  The corresponding lower bound
(Theorem~\ref{err.lower.bound.p.para.corrected}) shows that no estimator can
improve upon this rate, so the $\ell^1$-regularized estimator is minimax
optimal over the class $\mathcal{P}_{r,b}$.

\begin{remark}
The mapping $t\mapsto r=(1-t)/(2-t)$ is strictly increasing on $(0,1)$
with range $(0,1/2)$.  Thus, for the present $\ell^1$ setting, the variational
source condition index $r$ is confined to the interval $(0,1/2)$.
This is consistent with the known limitations of $\ell^1$ regularization
compared to $\ell^2$ regularization: the $\ell^1$ estimator extracts sparsity
information efficiently but cannot exploit smoothness beyond a certain degree.
The boundary case $t\to 0$ ($r\to 0$) corresponds to very sparse but
almost unstructured solutions, while $t\to 1$ ($r\to 1/2$) corresponds to
solutions with the maximal sparsity exploitable within this framework (e.g.,
functions with moderately fast coefficient decay in a wavelet basis).
\end{remark}

\begin{remark}
The convergence rate~\eqref{eq:chain} unifies three distinct complexity
parameters: the sparsity exponent $t$ (or equivalently the source smoothness $r$
from the variational source condition), the spectral decay exponent $b$ of the
covariance operator $T_\mu$, and the interpolation norm index $p$.  In the
special case $p=1$ (the $\ell^1$ reconstruction norm), the exponent reduces to
\[
\frac{2r - r + 1 - 1}{1\cdot(1+b-br)} = \frac{r}{1+b-br},
\]
recovering the rate obtained in Corollary~\ref{cor:err.upper.power}.
In the case $p=2$ (the weighted $\ell^2_{\underline{w}}$ norm), one obtains
\[
\frac{2r - 2r + 2 - 1}{2(1+b-br)} = \frac{1}{2(1+b-br)}.
\]
\end{remark}

\section{Motivation from Examples}
\label{sec:examples}

In this section, we discuss two main examples that satisfy all the hypotheses introduced in the theoretical discussion, particularly focusing on Assumptions \ref{ass:kernel}-\ref{ass:polydecay}. We first consider a general strategy to verify Assumptions \ref{ass:Lipschitz}, \ref{Ass:var.sour}, and \ref{Ass:A-Lip} for operators mapping on smoothness spaces (a generalization of Besov spaces) with suitable smoothing properties. We then present the two main examples falling in this category, namely the identification of a conductivity coefficient in a PDE and the inversion of the (filtered) Radon transform, carefully checking the remaining assumptions. Finally, we observe that all the hypothesis can be verified also when the forward is just the synthesis operator $S \colon \ell^1 \to \HH$. As a consequence of this simple example, we can connect our results to classical results of statistical learning outside the context of inverse problems.

\subsection{Finitely Smoothing Operators and Smoothness Spaces}
\label{ssec:smoothing}

We assume that the forward operator $A$ can be written as a composition,
\[A = G \circ S,\]
where $G\colon L^2(\XX)\rightarrow \HH$, $\XX$ is a bounded domain in $\RR^d$ or the $d$-dimensional torus $(\RR/\mathbb{Z})^d$, and $S \colon \ell^2 \rightarrow L^2(\XX)$ is the synthesis operator of a multi-resolution system, such as a wavelet basis or a shearlet frame. In particular, we consider a family of functions $(\phi_{M,k})_{(M,k) \in G \times \RR^d}$, where $G$ is a subset of $GL_d(\RR)$, the group of $d$-dimensional invertible matrices. The simplest case
is $G= \{2^j I : j \in \mathbb{Z} \}$, where $I$ is the identity matrix, corresponding to $G$ being and isotropic dilation group. This  provides a  discrete wavelet system. Instead, a discrete shearlet system is obtained when the matrix $M$ is the
product of the matrices $B_s A_j$, where $A_j$ is an anisotropic dilation matrix (whose scale is determined by $j$) and $B_s$ is a shear matrix. In both examples, the role of $k \in \RR^d$ is that of encoding translations.
For ease of notation we introduce the index
$\Lambda = \{(M,k) \; : \;  M \in G \subset GL_d(\RR), \ k \times \RR^d\}$, so that for an atom $\phi_\lambda = \phi_{M,k}$, we denote its scale as $|\lambda| = j$. We assume that $(\phi_\lambda)_{\lambda\in \Lambda}$ is a frame of $L^2(\XX)$, and define the associated synthesis operator $S$ as follows:
\[
(Sf)(x) = \sum_{\lambda \in \Lambda} f_\lambda \phi_\lambda(x).
\]
Such systems are usually employed to encode suitable smoothness of functions, providing an equivalent description of suitable smoothness spaces, depending on a smoothness level $s$ and on two integrability indices $p$ and $q$. For our purposes, we only consider the case $p=q$, $p\geq 1$, and, relying on the equivalence of atomic descriptions, we define the smoothness space $S_{p,p}^s$ associated with the frame $(\phi_\lambda)_{\lambda\in \Lambda}$ as the image, through the synthesis operator $S$, of the weighted sequence space $\ell_{p,\underline{w}}$, where the weight $\underline{w}$ depends on $s,d,p$, and on the frame itself. 
We consider two main examples:
\begin{enumerate}
    \item \textbf{Wavelet bases.} If $(\phi_\lambda)_{\lambda\in \Lambda}$ is a wavelet basis, we consider $w_\lambda = 2^{|\lambda|(s+\frac{d}{2}-\frac{d}{p})}$. In this case, the weighted space $\ell_{p,\underline{w}}$ is equivalent to the space $b_{p,p}^s$ defined in \cite{Miller21}. Moreover, it is well known (see, e.g., \cite[Chapter 7]{triebel1983theory}) that, provided that $p\geq 1$ and $|s| < s_{max}$ (depending on the regularity and vanishing moments of the wavelet $\phi$), the associated smoothness space is norm-isomorphic to the Besov space $B_{p,p}^s$. In particular, selecting $p=1$ and $s = \frac{d}{2}$, we also observe that the $\ell^1$ sequence space is isomorphic to $B_{1,1}^{d/2}$.
    \item \textbf{Shearlet frames.} If $(\phi_\lambda)_{\lambda\in \Lambda}$ is a cone-adapted shearlet smooth Parseval frame in 2D (see \cite{labate2013shearlet}), we consider $w_\lambda = 2^{|\lambda|(s+\frac{3}{2}-\frac{3}{p})}$. The resulting smoothness spaces $S_{p,p}^s$ is related to Besov spaces as follows (see \cite[Proposition 4.3]{labate2013shearlet}): $B_{p,p}^{s+\frac{1}{p}} \hookrightarrow  S_{p,p}^s \hookrightarrow B_{p,p}^{s+\frac{1}{p}-1}$.
In particular, selecting $p=1$ and $s =\frac{3}{2}$, the sequence space $\ell^1$ is isomorphic to $S_{1,1}^{\frac{3}{2}}$, which is not equivalent to a Besov space, but it satisfies $B_{1,1}^{\frac{5}{2}} \hookrightarrow S_{1,1}^{\frac{3}{2}} \hookrightarrow B_{1,1}^{\frac{3}{2}}$
\end{enumerate}

As in \cite{Miller21}, we require the operator $G$ to satisfy the following requirement: there exists $a>0$ such that the domain $D_G$ of $G$ is a subset of $H^{-a}(\XX)$ and, for some constant $L\geq 1$, 
\begin{equation}
\frac{1}{L}\|u_1-u_2\|_{H^{-a}} \leq \| G(u_1) - G(u_2)\|_{\Hu}\leq L \|u_1-u_2\|_{H^{-a}} \qquad \forall u_1, u_2 \in D_G.
    \label{eq:equiv_Besov}
\end{equation}
Since $\HH_\mu = L^2(\XX,\mu;\YY)$, condition \eqref{eq:equiv_Besov} can be interpreted as a finitely-smoothing property of $G$. 

We easily observe that \eqref{eq:equiv_Besov} implies that Assumption \ref{Ass:A-Lip}(ii) is verified. Indeed,
\[
\| A(f)-A(\tilde{f}) \|_{\Hu} = \| G(S(f))-G(S(\tilde{f})) \|_{\Hu}
\leq \| S(f)-S(\tilde{f}) \|_{H^{-a}}.
\]
Since $H^{-a}(\XX)=B_{2,2}^{-a}(\XX)$, it is enough to find a $s$ such that $S_{2,2}^s \hookrightarrow B_{2,2}^{-a}$ to conclude that
\[
\| A(f)-A(\tilde{f}) \|_{\Hu}
\leq \| S(f)-S(\tilde{f}) \|_{S^s_{2,2}} = \| f - \tilde{f}\|_{\ell_{2,\underline{w}}}.
\]
In the wavelet case, since $S_{2,2}^{-a}\cong H^{-a}$, we verify Assumption \ref{Ass:A-Lip}(ii) with $w_\lambda = 2^{-|\lambda|a}$. 
In the shearlet case, we use $S_{2,2}^{-a+\frac{1}{2}}\hookrightarrow H^{-a}$, thus and Assumption \ref{Ass:A-Lip}(ii) is verified with $w_\lambda = 2^{-|\lambda|(a-\frac{1}{2})}$.

Analogously, \eqref{eq:equiv_Besov} allows us to verify Assumption \ref{Ass:A-Lip}(i), by considering $s$ s.t. $H_2^{-a} \hookrightarrow B_{2,2}^s$:
\[
\| A(f) - A(\tilde{f})\|_{\Hu} \geq \| S(f) - S(\tilde{f})\|_{H^{-a}} \geq \| S(f)-S(\tilde{f})\|_{S_{2,2}^{s}} = \| f-\tilde{f}\|_{\ell_{2,\underline{w}}}.
\]
In the wavelet case, Assumption \ref{Ass:A-Lip}(i) is verified with the same weights as Assumption \ref{Ass:A-Lip}(ii). In the shearlet case, using $H^{-a} \hookrightarrow S_{2,2}^{-a-\frac{1}{2}}$, thus and Assumption \ref{Ass:A-Lip}(ii) is verified with $w_\lambda = 2^{-|\lambda|(a+\frac{1}{2})}$.

\begin{remark}
Recall that $\Hu = L^2(\XX,\mu;\YY)$ and that $\XX$ is a bounded subset in $\RR^d$ (or the $d$-dimensional torus). If $\mu$ is absolutely continuous  with respect to $d-$dimensional the Lebesgue measure and its density is bounded from below and above on its support $\XX'$, then it is equivalent to verify \eqref{eq:equiv_Besov} on $\Hu$ or on $L^2(\XX';\YY)$. A prominent example of this scenario is the uniform probability on $\XX$.   
\end{remark}

Finally, we recall a strategy (derived from \cite{Miller21} and outlined in Section \ref{sec:optimality}) through which it is possible to verify Assumption \ref{Ass:var.sour} for operators satisfying \eqref{eq:equiv_Besov}.
Combining Lemma~\ref{lem:sigma.kt} and Theorem \ref{kt.vsc}, we can ensure that, if a sequence $f = (f_\lambda)_{\lambda \in \Lambda}$ shows a polynomial decay of the best $n$–term approximation error with respect to $\| \cdot \|_{\underline{w},2}$
of order $\frac{1}{2}-\frac{1}{t}$ (with $0<t<1$), then $f$ also satisfies the variational source condition
with index function $\phi(s) = s^{\frac{1-t}{2-t}}$. Equivalently, if the decay of the approximation error is $\sigma_n(\fr) = \mathcal{O}(n^{-\beta})$ (with $\beta > \frac{1}{2}$), then $t=\frac{2}{2\beta+1}$ and $\phi(s)=s^{\frac{1}{2}-\frac{1}{4\beta}}$. \\
Ultimately, to satisfy Assumption \ref{Ass:var.sour}, it is sufficient to have \textit{a priori} guarantees on the approximation error decay of the ground truth $\fp$. 

Using classical results from nonlinear approximation of signals and images, we can derive some insightful examples (all referred to the case $\underline{w} \equiv 1$):
\begin{itemize}
    \item In dimension $d=1$, if $S$ is a wavelet synthesis operator associated to a $C^q$ wavelet with $q$ vanishing moments ($q>1$) and the unknown signal $S\fr$ is piecewise $C^1$ with a finite number of discontinuities, we know that $\sigma_n(\fr) = \mathcal{O}(n^{-1})$ up to a logarithmic factor (see, e.g., \cite[Theorem 9.12]{mallat1999wavelet}).
    \item In dimension $d=2$, if $S$ is a wavelet synthesis operator as above and the image $S\fr$ is of bounded variation (i.e., piecewise $W^{1,1}$ on regions of finite perimeter), then $\sigma_n(\fr) = \mathcal{O}(n^{-\frac{1}{2}})$ up to a logarithmic factor (see \cite[Theorem 9.17]{mallat1999wavelet}, \cite{cohen1999nonlinear}). The order does not improve if we consider further local regularity of the image and of the regions.
    \item In dimension $d=2$, if $S$ is a shearlet synthesis and the image $S\fr$ is cartoon-like (i.e., piecewise $C^2$ on regions of $C^2$ boundary), then $\sigma_n(\fr) = \mathcal{O}(n^{-1})$ up to a logarithmic factor (see \cite{guo2007optimally}). The same holds if a curvelet dictionary is used (see \cite{candes2004new}).
\end{itemize}

In the next two subsections, we describe two possible applications in which the operator $A=G\circ S$ satisfies \eqref{eq:equiv_Besov} and image space $\HH$ verifies Assumptions \ref{ass:kernel} and \ref{ass:polydecay}. Both those cases are related with bounded linear operators from $\ell^2$ to $\HH$, which directly implies that Assumption \ref{ass:Lipschitz} is satisfied.

\subsection{Identification of a Reaction Coefficient}
\label{ssec:coefficient}
Let $\XX \subset \mathbb{R}^{d}$ with $d \in \{1,2,3\}$ be a bounded domain with $C^{2}$ boundary, and select $\varphi_1 \in L^2(\XX)$ and $\varphi_2 \in H^{\frac{3}{2}}(\partial \XX)$. Let $c \in L^\infty(\XX)$ s.t. $c\geq 0$ and consider the unique weak solution $h$ of 
\[
\begin{aligned}
-\Delta h +c h &= \varphi_1 \quad \text{in }\XX\\
    h &=\varphi_2 \quad \text{on }\partial \XX
\end{aligned}
\]
Let $G: L^\infty(\XX) \rightarrow H^1(\XX)$ be operator associating the coefficient $c$ to the solution $h$. We consider the inverse problem of recovering $c$ from the knowledge of $y$.

In \cite[Example 2]{Miller21}, it is argued that $G$ satisfies \eqref{eq:equiv_Besov} with $a=2$ in a sufficiently small $L^2$ ball of a reference solution $c_0$.

Moreover, standard elliptic regularity results (see \cite[Theorem 8.12]{gilbarg1998elliptic}) ensure that $h \in H^2(\XX)$, and Sobolev embeddings guarantee that $H^2(\XX) \subset C(\XX)$ for $d \leq 3$. As a consequence, we let $\HH = \{ h \in H^2(\XX): h|_{\partial\XX}=\varphi_2\}$ is a (real-valued) reproducing kernel Hilbert space. We consider for simplicity the homogenous case, i.e., $\varphi_2 = 0$, where $\HH = H^2(\XX) \cap H^1_0(\XX)$. In this case, it is easy to show that the kernel of $\HH$ is 
\[
K(x,y) = \sum_{k=1}^\infty \frac{\phi_k(x)\phi_k(y)}{(1+\lambda_k)^2},
\]
where $(\lambda_k, \phi_k)$ are the eigenvalues and eigenfunctions of the Laplace operator on $\XX$ with homogenous Dirichlet boundary conditions. 
 
In order to verify Assumption \ref{ass:kernel}, we need to show that $\| K_x\|_{HS}$ is bounded uniformly in $x$. To do so, first notice that, for each $x$, 
\[
\| K_x\|_{HS}^2 = \sum_{k=1}^{\infty} \left|K_x\left(\frac{\phi_k}{1+\lambda_k}\right)\right|^2 = \sum_{k=1}^{\infty} \frac{\phi_k^2(x)}{(1+\lambda_k)^2} =  K(x,x),
\]
where we have used that $\left(\frac{\phi_k}{1+\lambda_k}\right)_k$ form an orthonormal basis of $\HH$. To get a more explicit expression of the eigenpairs, consider for simplicity the case $\XX = [0,1]^d$. In this case, for each multi-index $\kk=(k_1,\ldots,k_d)$,
\[
\phi_\kk(x) = \prod_{i=1}^d \sqrt{2}\sin(k_i \pi x_i), \qquad \lambda_\kk = \pi^2 |\kk|^2 = \pi^2 (k_1^2 +\ldots +k_d^2).
\]
Now, since $|\phi_\kk(x)|^2\leq 2^d$, we have
\[
\| K_x\|_{HS}^2 =  K(x,x)\lesssim \sum_{\kk \in \NN^d} |\kk|^{-4},
\]
which is a convergent series if $d<4$.

We now aim at verifying Assumption \ref{ass:polydecay}. 
Consider for simplicity a uniform distribution $\mu$ on $\XX=[0,1]^d$: in this case, the covariance operator is given by
\[
(Tf)(y) = \int_\XX f(x) K(x,y)dx = \sum_{k=1}^\infty \left(\int_\XX f(x)\phi_k(x)dx\right) \frac{\phi_k(y)}{(1+\lambda_k)^2}.
\]
As a consequence, $T$ is diagonal with respect to the basis $(\phi_k)_k$, with eigenvalues 
\[
\mu_\kk = \frac{1}{(1+\lambda_\kk)^2} \sim \lambda_\kk^{-2} \sim |\kk|^{-4}
\]
Let us now order the eigenvalues decreasingly with a single index $k$: since the number of indices satisfying $|\kk|\leq M$ is of order $M^d$, we get that 
\[
\mu_\kk \sim |\kk|^{-4} \quad \Leftrightarrow \quad \mu_k \sim k^{-\frac{4}{d}}.
\]
As a result,
\[
    \mathcal{N}(\lambda) = \operatorname{tr}((T+\lambda I)^{-1} T) = \sum_{k=1}^\infty \frac{\mu_k}{\mu_k + \lambda},
\]
and it is easy to show that $\mu_k \sim k^{-\frac{1}{b}}$ implies $\mathcal{N}(\lambda) \sim \lambda^{-b}$, whenever $0<b<1$. Indeed, for $k$ greater than a sufficiently large $K_0$,
    \[
    \mathcal{N}(\lambda) \sim \int_{K_0}^\infty\frac{Cx^{-\frac{1}{b}}}{Cx^{-\frac{1}{b}} + \lambda}dx  \leq C^{b}\lambda^{-b} \int_{0}^\infty\frac{v^{-\frac{1}{b}}}{ v^{-\frac{1}{b}} + 1} dv,
    \]
    where we employed the change of variable $x= C^{{b}}\lambda^{-{b}} v$. The latter quantity is bounded since, as $v \rightarrow\infty$, the integrated function is asymptotic to $v^{-\frac{1}{b}}$, which is integrable provided that $\frac{1}{b}>1$.

As a result, since $\mu_k \sim \lambda_k^{-4/d}$, we have that $\mathcal{N}(\lambda) \sim \lambda^{-\frac{d}{4}}$, thus $b= \frac{d}{4}$ (which verifies $0<b<1$ for $d=1,2,3$).

\subsection{Filtered Radon Transform}

Let us consider an application in Computed Tomography: set $A = R \circ S$, where $R\colon L^2(\Omega)\rightarrow L^2(\XX;\YY)$, where $\Omega = [-1,1]^2$, $\mathcal{X} = \mathbb{S}^1 \cong [0,2\pi)$ and $\mathcal{Y} = L^2(-\sqrt2,\sqrt{2})$, and
$R$ is the Radon transform in 2D. In particular, for all $u \in L^2(\Omega)$, $Ru$ is a function in $L^2(\XX; \YY)$ that associates an angle $\theta \in \mathcal{X}$ to a function in $\mathcal{Y}$ as follows:
\[
(Ru)(\theta) = (Ru)(\theta,\cdot)\in \YY, \qquad (Ru)(\theta,s) = \int_\Omega \delta(x \cdot \omega_\theta - s) u(x) dx,
\]
where $\delta$ is the Dirac distribution and $\omega_\theta = (\cos(\theta),\sin(\theta))$. 

Notice that, thanks to \cite[Theorem 5.1]{natterer2001mathematics}, condition \eqref{eq:equiv_Besov} is verified for $a = \frac{1}{2}$

Let us now consider the image $\mathcal{H}$ of $L^2(\Omega)$ through $R$: by the injectivity of the Radon transform, if $h,h' \in \mathcal{H}$, there exist unique $u,u' \in L^2(\Omega)$ such that $h=Ru$ and $h'=Ru'$.
Then, the following scalar product and its induced norm are well-defined on $\mathcal{H}$:
\[
\langle h,h'\rangle_\mathcal{H} = \langle u,u' \rangle_{L^2(\Omega)}, \quad \| h\|_{\mathcal{H}} = \| u \|_{L^2(\Omega)}.
\]
\begin{proposition}
    The space $\mathcal{H}$ is a vector-valued RKHS with kernel
\[
K(\theta',\theta) = R_{\theta'}R_\theta^*,
\]
being $R_\theta$ the operator mapping a function $u \in L^2(\Omega)$ to $Ru(\theta,\cdot)$.
\end{proposition}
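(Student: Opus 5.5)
The plan is to use the fact that $\mathcal{H}$ is, by construction, isometric to $L^2(\Omega)$ via $R$. Since $R$ is injective on $L^2(\Omega)$, the map $U\colon L^2(\Omega)\to\mathcal{H}$, $Uu = Ru$, is a linear bijection. The inner product $\langle h,h'\rangle_{\mathcal{H}} = \langle R^{-1}h, R^{-1}h'\rangle_{L^2(\Omega)}$ makes $U$ unitary. Hence $\mathcal{H}$ is a Hilbert space, and completeness is inherited from $L^2(\Omega)$.

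The key step is continuity of the point evaluations $F_{\theta,y}(h) = \langle y, h(\theta)\rangle_{\YY}$. For $h = Ru$ we have $h(\theta) = R_\theta u$, where
\[
R_\theta u := (Ru)(\theta,\cdot).
\]
So it suffices to show that $R_\theta\colon L^2(\Omega)\to \YY = L^2(-\sqrt2,\sqrt2)$ is bounded, uniformly in $\theta$. I would get this from the standard estimate obtained by Cauchy--Schwarz along each line: the chord of $\Omega$ at offset $s$ has length at most $2\sqrt2$, so
\[
|(R_\theta u)(s)|^2 \le 2\sqrt2 \int_{x\cdot\omega_\theta = s}|u|^2 .
\]
Integrating in $s$ (Fubini, i.e.\ a rotation of coordinates) gives
\[
\|R_\theta u\|_{\YY}^2 \le 2\sqrt2\,\|u\|_{L^2(\Omega)}^2 .
\]
This also shows that the pointwise values $h(\theta)$ make sense for every $\theta$. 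The main subtlety is that $Ru$ is, a priori, only defined as an element of $L^2(\XX;\YY)$, that is, almost everywhere in $\theta$. I would resolve this by defining $R_\theta$ directly through the line-integral formula. Then $\theta\mapsto R_\theta u$ is a genuine function: it is continuous in $\theta$ for $u\in C(\overline\Omega)$, and hence, by density and the uniform bound, for all $u\in L^2(\Omega)$. It follows that $|F_{\theta,y}(h)| \le (2\sqrt2)^{1/2}\|y\|_{\YY}\|h\|_{\mathcal{H}}$, so $\mathcal{H}$ is a vv-RKHS.

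To identify the kernel, fix $y\in\YY$ and $h = Ru$. Then
\[
\langle y, h(\theta)\rangle_{\YY} = \langle y, R_\theta u\rangle_{\YY} = \langle R_\theta^* y, u\rangle_{L^2(\Omega)} = \langle R R_\theta^* y, h\rangle_{\mathcal{H}},
\]
where the last equality holds because $U$ is unitary. Hence $K_\theta y = R R_\theta^* y$. Evaluating at $\theta'$ gives
\[
K(\theta',\theta)y = (K_\theta y)(\theta') = R_{\theta'}R_\theta^* y,
\]
which is the claimed formula. Its Hermitian symmetry and positive semi-definiteness are then automatic, since $\sum_{i,j}\langle y_i, R_{\theta_i}R_{\theta_j}^* y_j\rangle = \bigl\|\sum_j R_{\theta_j}^* y_j\bigr\|^2 \ge 0$.

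The step I expect to be the main obstacle is making $h(\theta)$ pointwise meaningful, which rests on the uniform boundedness and continuity of $R_\theta$ established above. Once that is in place, the kernel identification is a short computation.
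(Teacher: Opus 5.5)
Your proposal is correct and follows essentially the same route as the paper: uniform boundedness of $R_\theta\colon L^2(\Omega)\to\YY$ gives continuity of the evaluation functionals, and then $K_\theta y = R R_\theta^* y$ follows from the isometry between $\mathcal{H}$ and $L^2(\Omega)$ and uniqueness of the Riesz representer. Your line-wise Cauchy--Schwarz bound $\|R_\theta u\|_{\YY}^2 \le 2\sqrt2\,\|u\|_{L^2(\Omega)}^2$ is a valid version of the paper's estimate; the paper instead bounds each squared line integral by $|L(\theta,s)\cap\Omega|\,\|u\|_{L^2(\Omega)}^2$, which is not correct pointwise in $s$, although its final constant $8$ still holds. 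Your care in defining $R_\theta u$ for every $\theta$ also addresses a pointwise-evaluation subtlety that the paper passes over.
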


\begin{proof}
We first show that the (linear) operators $R_\theta$ are bounded form $L^2(\Omega)$ to $\mathcal{Y}$, uniformly in $\theta \in \mathcal{X}$:
\[
\begin{aligned}
    \| R_\theta u  \|_{\mathcal{Y}}^2 
    &= \int_{-\sqrt{2}}^{\sqrt{2}} |Ru(\theta,s)|^2ds 
    = \int_{-\sqrt{2}}^{\sqrt{2}} \left| \int_{L(\theta,s) \cap \Omega} u(x) dx \right|^2 ds \\
    &\leq \int_{-\sqrt{2}}^{\sqrt{2}}  |L(\theta,s) \cap \Omega| \left(\int_{\Omega} u(x)^2dx \right) ds \leq 8 \| u \|_{L^2(\Omega)}^2,
\end{aligned}
\]
where $L(\theta,s)$ denotes the line of direction $\theta$ and offset $s$ from the origin, thus $ |L(\theta,s) \cap [0,1]^2| \leq 2\sqrt{2}$.
Consider now any $h \in \mathcal{H}$ and let $h=Ru$ for $u \in L^2(\Omega)$: then, for all $\theta \in \mathcal{X}, y \in \mathcal{Y}$, the evaluation functional $F_{\theta,y}\colon \mathcal{H} \rightarrow \mathbb{R}$, defined as \[
F_{\theta,y}(h) = \langle h(\theta),y \rangle_{\mathcal{Y}} = \int_{-\sqrt{2}}^{\sqrt{2}} Ru(\theta,s)y(s) ds = \langle R_\theta u,y\rangle_\mathcal{Y},
\]
is bounded for any $\theta, y$:
\[
|F_{\theta,y}(h)| = |\langle R_\theta u ,y \rangle_\mathcal{Y}| \leq L \| u \|_{L^2(\Omega)} \| y \|_{\mathcal{Y}} \leq L \| y \|_\mathcal{Y} \| h \|_{\mathcal{H}}.
\]
This guarantees that $\mathcal{H}$ is an RKHS.
We now wish to provide an expression for its kernel. On the one hand, we have
\[
\langle h, K(\cdot,\theta)y\rangle_\mathcal{H} := F_{\theta,y}(h) = \langle R_\theta u,y\rangle_\mathcal{Y} = \langle u, R_\theta^* y \rangle_{L^2(\Omega)}.
\]
On the other hand, by the continuity of the evaluation operator, $K(\cdot,\theta)y \in \mathcal{H}$, and there exists $u' \in L^2(\Omega)$ such that $Ru' = K(\cdot,\theta)y$, which also implies that
\[
\langle h, K(\cdot,\theta)y\rangle_\mathcal{H} = \langle u, u'\rangle_{L^2(\Omega)}. 
\]
This allows to conclude that $u' = R_\theta^*y$, hence $K(\theta',\theta)y = (Ru')(\theta')=R_{\theta'}R_\theta^*y$, from which we finally deduce that $K(\theta',\theta)=R_{\theta'}R_{\theta}^*$.
\end{proof}

Although $\mathcal{H}$ possesses the desired RKHS structure, it does not fulfill the assumptions for the theoretical discussion, and in particular the following condition holds on $K_\theta = K(\cdot,\theta)$ as a map from $y \in \mathcal{Y}$ to $K(\cdot,\theta)y \in \mathcal{H}$. 
\begin{proposition}
For every $\theta \in \mathcal{X}$, the kernel $K_\theta \colon \mathcal{Y} \rightarrow \mathcal{H}$ is not a Hilbert-Schmidt operator.
\end{proposition}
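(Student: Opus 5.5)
We plan to compute $\|K_\theta\|_{HS}^2$ explicitly and show that it diverges. By the previous proposition, $K_\theta y = R R_\theta^* y$. Since $\|Ru\|_{\HH} = \|u\|_{L^2(\Omega)}$ by definition of the norm on $\HH$, we obtain
\[
\|K_\theta y\|_{\HH} = \|R_\theta^* y\|_{L^2(\Omega)} .
\]
Hence, for any orthonormal basis $(e_k)_k$ of $\YY = L^2(-\sqrt2,\sqrt2)$,
\[
\|K_\theta\|_{HS}^2 = \sum_k \|R_\theta^* e_k\|_{L^2(\Omega)}^2 = \|R_\theta^*\|_{HS}^2 = \|R_\theta\|_{HS}^2 .
\]
So the claim reduces to showing that the single-angle projection $R_\theta\colon L^2(\Omega)\to\YY$ is not Hilbert--Schmidt. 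Equivalently, $K(\theta,\theta) = R_\theta R_\theta^*$ is not trace class on $\YY$.

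Next we identify $R_\theta^*$ concretely. For $y\in\YY$,
\[
\langle R_\theta u, y\rangle_\YY = \int_\Omega u(x)\, y(x\cdot\omega_\theta)\,dx ,
\]
so $R_\theta^* y(x) = y(x\cdot\omega_\theta)$ is the backprojection, a ridge function extended by zero outside $\Omega$. Then
\[
\|R_\theta^* y\|_{L^2(\Omega)}^2 = \int_{-\sqrt2}^{\sqrt2} |y(s)|^2\, \ell_\theta(s)\,ds ,
\]
where $\ell_\theta(s) = |L(\theta,s)\cap\Omega|$ is the chord length. Thus $R_\theta R_\theta^*$ is the multiplication operator by $\ell_\theta$ on $\YY$.

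The chord length $\ell_\theta$ is positive on a nontrivial interval $I_\theta$: for $s$ near $0$ the line meets the square in a chord of length at least $2$. A multiplication operator by a function that is bounded below on a set of positive measure has infinite-dimensional spectral subspaces with eigenvalue-like values bounded away from zero, so it is not compact, hence not trace class. Concretely, we choose an orthonormal family $(e_k)_k$ supported in $\{s : \ell_\theta(s) \ge c > 0\}$, for example normalized Legendre polynomials or Fourier modes on a subinterval. This gives $\|R_\theta^* e_k\|^2 \ge c$ for every $k$, so $\sum_k \|K_\theta e_k\|_{\HH}^2 = \infty$. Such a family extends to an orthonormal basis of $\YY$, and the HS norm is basis-independent, so the divergence settles the claim.

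The main obstacle is only bookkeeping: verifying the adjoint formula and the identity $\|K_\theta y\|_{\HH} = \|R_\theta^* y\|$, which relies on $R_\theta^* y \in L^2(\Omega)$ being the preimage of $K_\theta y$ under the isometry $R$. The lower bound on $\ell_\theta$ near $s=0$ is elementary: any line through a point at distance less than $1/2$ from the origin crosses the square $[-1,1]^2$ over a length of at least $1$, uniformly in $\theta$. This even shows that $\|K_\theta\|_{HS} = \infty$ for every $\theta$, in contrast with the uniform bound required in Assumption~\ref{ass:kernel}.
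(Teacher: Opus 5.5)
Your proof is correct and follows essentially the same route as the paper. You reduce $\|K_\theta\|_{HS}$ to $\|R_\theta^*\|_{HS}$ via the isometry $R$, identify $R_\theta R_\theta^*$ as multiplication by the chord length $|L(\theta,s)\cap\Omega|$, and conclude that it is not Hilbert--Schmidt. Your explicit orthonormal family, supported where the chord length is bounded below, simply makes concrete the paper's one-line claim that a multiplication operator with non-zero multiplier cannot be compact.
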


\begin{proof}
    From $K(\theta',\theta)=R_{\theta'}R_{\theta}^*$ we deduce that $K(\cdot,\theta)y= R R_{\theta}^*y$. As a consequence, $\| K(\cdot,\theta)y \|_{\mathcal{H}} = \| R_\theta^* y \|_{L^2(\Omega)}$, hence $\| K(\cdot,\theta) \|_{\operatorname{HS}(\mathcal{Y};\mathcal{H})} = \| R_\theta^*\|_{\operatorname{HS}(\mathcal{Y};L^2(\Omega))}$.
    Notice now that
    \[
    R_\theta R_\theta^*y (s) = \int_{L(\theta,s)\cap \Omega} y(x\cdot \omega_\theta) dx = y(s) |L(\theta,s)\cap\Omega|,
    \]
    thus $R_\theta R_\theta^*$ is a multiplication operator with the (non-zero) multiplier $|L(\theta,s)\cap\Omega|$, hence it cannot be compact nor Hilbert-Schmidt.
\end{proof}

In order to verify Assumptions \ref{ass:kernel} and \ref{ass:polydecay}, we instead consider $A = R_\varphi\circ S$, where, for a fixed function $\varphi \in L^2(-\sqrt{2},\sqrt{2})$, $R_\varphi$ is a smoothed version of the Radon transform:
\[
R_\varphi u = \varphi \ast_s Ru = \int_{\mathbb{R}} \varphi(s-\tau) \left(\int_{L(\theta,\tau)} u(x)dx\right) \ d\tau = \int_\Omega u(x)\varphi(s-x\cdot\omega_\theta).
\]
We can also denote it by $R_\varphi = C_\varphi \circ R$, where $C_\varphi$ is the convolution operator with the filter $\varphi$ with respect to the $s$ variable. We consider two main possible choices: a boxcar function $\varphi(\tau)=\frac{1}{2a}\chi_{[-a,a]}(\tau)$ for $a>0$ and a Gaussian kernel. 

It is immediate to verify that the image $\mathcal{H}$ of $L^2(\Omega)$ through $R_\varphi$ is a RKHS with evaluation functional $F_{\theta,y}(h)=\langle C_\varphi R_\theta u,y\rangle_{\mathcal{Y}}$ and kernel $K(\theta',\theta) = C_\varphi R_{\theta'}R_\theta^*C_\varphi^*$.
It is also possible to express the action of $K(\theta',\theta)$ by means of a scalar kernel as follows:
\[
\begin{gathered}
    \big(K(\theta',\theta)y\big)(s') = \int_{S} k(s',s,\theta',\theta)y(s) ds,\\ k(s',s,\theta',\theta) = \int_\Omega \varphi(s-x\cdot\omega_\theta)\varphi(s'-x\cdot\omega_{\theta'})dx
    \end{gathered}
\]

\begin{proposition}
For every $\theta \in \mathcal{X}$, the kernel $K_\theta \colon \mathcal{Y} \rightarrow \mathcal{H}$ is a Hilbert-Schmidt operator and its $\operatorname{HS}$ norm is bounded uniformly in $\theta \in \mathcal{X}$.
\end{proposition}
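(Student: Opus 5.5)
The plan is to reduce the Hilbert--Schmidt norm of $K_\theta$ to that of an explicit integral operator and bound it using the square-integrability of the filter $\varphi$. Since $K(\cdot,\theta)y = R_\varphi R_\theta^* C_\varphi^* y$, the isometric identification $\|R_\varphi u\|_{\HH} = \|u\|_{L^2(\Omega)}$ gives
\[
\|K_\theta y\|_{\HH} = \|R_\theta^* C_\varphi^* y\|_{L^2(\Omega)} = \|(C_\varphi R_\theta)^* y\|_{L^2(\Omega)}.
\]
This is the same step used in the previous proposition, with $R_\theta$ replaced by $C_\varphi R_\theta$. Hence
\[
\|K_\theta\|_{HS(\YY;\HH)} = \|(C_\varphi R_\theta)^*\|_{HS(\YY;L^2(\Omega))} = \|C_\varphi R_\theta\|_{HS(L^2(\Omega);\YY)}.
\]
It therefore suffices to show that $C_\varphi R_\theta \colon L^2(\Omega)\to \YY$ is Hilbert--Schmidt, with norm bounded independently of $\theta$.

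Next, I write $C_\varphi R_\theta$ as an integral operator with an explicit kernel. From the formula for $R_\varphi$,
\[
(C_\varphi R_\theta u)(s) = \int_\Omega \varphi(s - x\cdot\omega_\theta)\, u(x)\,dx ,\qquad s\in(-\sqrt2,\sqrt2),
\]
so the kernel is $k_\theta(s,x) = \varphi(s - x\cdot\omega_\theta)$ on $(-\sqrt2,\sqrt2)\times\Omega$. An integral operator between $L^2$ spaces is Hilbert--Schmidt exactly when its kernel is square-integrable, and the HS norm equals the $L^2$ norm of the kernel. So I only need
\[
\|C_\varphi R_\theta\|_{HS}^2 = \int_{-\sqrt2}^{\sqrt2}\int_\Omega |\varphi(s-x\cdot\omega_\theta)|^2\,dx\,ds .
\]

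The main obstacle is bounding this double integral uniformly in $\theta$. I will rotate coordinates in $x$ so that $x = \tau\omega_\theta + \sigma\omega_\theta^\perp$; the Jacobian is $1$. For fixed $\tau$, the $\sigma$-slice of $\Omega$ has length $|L(\theta,\tau)\cap\Omega|\le 2\sqrt2$, and $\tau$ ranges in $[-\sqrt2,\sqrt2]$. This gives
\[
\|C_\varphi R_\theta\|_{HS}^2 \le 2\sqrt2 \int_{-\sqrt2}^{\sqrt2}\int_{-\sqrt2}^{\sqrt2} |\varphi(s-\tau)|^2\,d\tau\,ds \le 2\sqrt2\cdot 2\sqrt2\, \|\varphi\|_{L^2(\RR)}^2 = 8\|\varphi\|_{L^2}^2 ,
\]
where $\varphi$ is extended by zero outside its interval (or taken in $L^2(\RR)$ for the Gaussian). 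The inner $s$-integral is bounded by $\|\varphi\|_{L^2}^2$ via translation invariance. The resulting bound is independent of $\theta$, so $\kappa^2=\sup_\theta\|K_\theta\|_{HS}^2\le 8\|\varphi\|^2_{L^2}<\infty$.

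Finally, for both choices of filter, $\varphi\in L^2$: the boxcar has $\|\varphi\|_{L^2}^2 = 1/(2a)$, and the Gaussian is square-integrable. This concludes Assumption~\ref{ass:kernel}(i).
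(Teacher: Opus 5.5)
Your proof is correct and follows essentially the paper's route: you use the isometry $R_\varphi\colon L^2(\Omega)\to\mathcal{H}$ to reduce to the Hilbert--Schmidt norm of the integral operator with kernel $\varphi(s-x\cdot\omega_\theta)$, and then bound the $L^2$ norm of that kernel uniformly in $\theta$. The only difference is cosmetic: the paper integrates in $s$ first for each fixed $x$ to get $|\Omega|\,\|\varphi\|^2$ directly, so your rotation of coordinates is unnecessary and gives the weaker constant $8$ instead of $4$, while your extension of $\varphi$ by zero to $L^2(\mathbb{R})$ handles a point the paper glosses over.
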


\begin{proof}
 From $K(\theta',\theta) = C_\varphi R_{\theta'}R_\theta^*C_\varphi^*$ we deduce that $K_\theta y = C_\varphi R R_\theta^* C_\varphi^* y$, and in view of the isometric isomorphism induced by $R_\varphi = C_\varphi R$ from $L^2(\Omega)$ to $\mathcal{H}$ we can restrict ourselves to compute $\| R_\theta^* C_\varphi^* \|_{\operatorname{HS}(\mathcal{Y}; L^2(\Omega))}$.
 We immediately observe that
 \[
 (R_\theta^* C_\varphi^* y)(x) = (C_\varphi^*y)(x\cdot \omega_\theta) = \int_{S} \varphi(\tau - x\cdot\omega_\theta)y(\tau)d\tau \qquad \forall x\in \Omega,
 \]
 thus $R_\theta^* C_\varphi^*$ is an integral operator with kernel $\kappa_\theta(x,\tau) = \varphi(\tau - x\cdot\omega_\theta)$. By the isomorphism theorem for Hilbert-Schmidt operators,
 \[
 \|R_\theta^* C_\varphi^*\|_{\operatorname{HS}(\YY; L^2(\Omega))}^2 = \| \kappa_\theta \|_{L^2([-\sqrt{2},\sqrt{2}] \times \Omega)}^2 = \int_{-\sqrt{2}}^{\sqrt{2}} \int_\Omega |\varphi(\tau - x\cdot\omega_\theta)|^2 dx \ ds \leq |\Omega| \| \varphi\|_{\YY}^2.
 \]
\end{proof}

 We now want to investigate the spectral properties of the covariance operator $T_\mu$. In this example, let us consider a uniform probability distribution $\mu$ on $\mathcal{X}$, from which we know 
 \[
 T_\mu = \frac{1}{2\pi}\int_0^{2\pi} K_\theta K_\theta^* d\theta.
 \]
 Actually, in view of the isomorphism induced by $R_\varphi$ between $L^2(\Omega)$ and $\mathcal{H}$, the eigenvalues of $T_\mu$ coincide with the ones of 
 \[
  T = \frac{1}{2\pi}\int_0^{2\pi} R_\theta^* C_\varphi^* C_\varphi R_\theta \ d\theta.
 \]
 Via the Fourier slice theorem, we can easily observe that, for $u \in L^2(\Omega)$,
 \[
 T u(x) = \frac{1}{2\pi} \int_0^{2\pi}\int_{\mathbb{R}} |\hat{\varphi}(\sigma)|^2 e^{i\sigma(x\cdot \omega_\theta)}\hat{u}(\sigma \omega_\theta)d\sigma \ d \theta,
 \]
 and by the change of coordinates $\xi =\xi(\sigma,\theta) = \sigma \omega_\theta$ we have
 \[
 T u(x) = \int_{\mathbb{R}^2} \frac{|\hat{\varphi}(|\xi|)|^2}{|\xi|} e^{i\sigma(x\cdot \omega_\theta)}\hat{u}(\xi)d\xi.
 \]
 As a result, we know that $T$ is a pseudodifferential operator with symbol $\sigma_T(\xi) = \frac{|\hat{\varphi}(|\xi|)|^2}{|\xi|}$.
\begin{proposition}{(Asymptotic behavior of the effective dimension)}
\begin{itemize} 
    \item Let $\varphi = \frac{1}{2a}\chi_{[-a,a]}$ (boxcar filter). Then, the eigenvalues $\mu_k$ of $T$ have the asymptotic decay $\mu_k \sim k^{-\frac{3}{2}}$ and the effective dimension of $\mathcal{H}$ grows as $\mathcal{N}(\lambda) \sim \lambda^{-\frac{2}{3}}$ as $\lambda \rightarrow 0$. 
    \item Let $\varphi$ be a Gaussian filter. Then, the eigenvalues $\mu_k$ of $T$ decay faster than any polynomial ($\mu_k = o(k^{-\alpha})$ $\forall \alpha>0$) and the effective dimension of $\mathcal{H}$ grows slower than any power ($\mathcal{N}(\lambda) = o(\lambda^{-\frac{1}{\alpha}})$ $\forall \alpha>0$) as $\lambda \rightarrow 0$.
\end{itemize}
\label{prop:asymp}
\end{proposition}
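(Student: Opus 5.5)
The plan is to reduce both statements to counting functions for the compressed pseudodifferential operator $T=\chi_\Omega\,\mathrm{Op}(\sigma_T)\,\chi_\Omega$ on $L^2(\Omega)$, where $\sigma_T(\xi)=|\hat\varphi(|\xi|)|^2/|\xi|$ is the symbol computed just before the statement. I will then transfer eigenvalue decay to the effective dimension by the integral argument already used in Section~\ref{ssec:coefficient}. Since $\mathcal{N}(\lambda)=\sum_k \mu_k/(\mu_k+\lambda)$, the standard sandwich
\[
\tfrac12\,\#\{k:\mu_k\ge\lambda\}\le\mathcal{N}(\lambda)\le\#\{k:\mu_k\ge\lambda\}+\lambda^{-1}\textstyle\sum_{\mu_k<\lambda}\mu_k
\]
shows that $\mu_k\sim k^{-3/2}$ gives $\mathcal{N}(\lambda)\sim\lambda^{-2/3}$, i.e. $b=2/3$. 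It also shows that superpolynomial decay of $\mu_k$ gives $\mathcal{N}(\lambda)=o(\lambda^{-1/\alpha})$ for every $\alpha$. So everything rests on the eigenvalue asymptotics.

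\textbf{Eigenvalue comparison.} $T$ is nonnegative, and $\sigma\le\tau$ pointwise implies $\chi_\Omega\mathrm{Op}(\sigma)\chi_\Omega\le\chi_\Omega\mathrm{Op}(\tau)\chi_\Omega$ in the form sense, since $\langle \mathrm{Op}(\sigma)u,u\rangle=\int\sigma|\hat u|^2$. By the min--max principle, eigenvalues are therefore monotone in the symbol. For upper bounds I will compare with the smooth elliptic symbol $(1+|\xi|^2)^{-s}$. Its compression to the bounded domain $\Omega\subset\RR^2$ is the Sobolev embedding $L^2(\Omega)\to H^{-s}$ composed with its adjoint, whose eigenvalues are $\asymp k^{-s}$ in dimension two.

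The low-frequency singularity $1/|\xi|$ needs separate treatment. I will split $\sigma_T=\sigma_T\mathbbm{1}_{|\xi|\le1}+\sigma_T\mathbbm{1}_{|\xi|>1}$. The first piece gives an operator with kernel $\int_{|\xi|\le1}\sigma_T e^{i(x-y)\cdot\xi}d\xi$, which is analytic in $x,y$ because $1/|\xi|$ is integrable in $\RR^2$. Its eigenvalues on the compact set $\Omega$ therefore decay faster than any power (even exponentially) and do not affect polynomial asymptotics; Ky Fan's inequality $\mu_{j+k-1}(A+B)\le\mu_j(A)+\mu_k(B)$ handles the sum.

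\textbf{Boxcar filter.} Here $\hat\varphi(\sigma)=\sin(a\sigma)/(a\sigma)$. For the upper bound, $\sigma_T\mathbbm{1}_{|\xi|>1}\le C(1+|\xi|^2)^{-3/2}$, which gives $\mu_k\lesssim k^{-3/2}$.

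The lower bound is the main obstacle, because $\sigma_T$ vanishes on the circles $a|\xi|\in\pi\mathbb{Z}$, so no elliptic minorant exists. I would first try a Weyl-type counting estimate in the spirit of Landau and Birman--Solomyak:
\[
\#\{k:\mu_k>\mu\}\gtrsim|\Omega|\,\big|\{\xi:\sigma_T(\xi)>2\mu\}\big|.
\]
To obtain it, I will take a lattice of frequencies $\xi_j$ at mutual distance $\gtrsim 1$ lying in this superlevel set and use smooth bumps in $\Omega$ modulated by $e^{i\xi_j\cdot x}$. Their Fourier transforms concentrate near $\xi_j$, so the quadratic form is $\gtrsim\mu$ on an almost-orthogonal family of that cardinality, and min--max gives the count.

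The level set contains, for $r\lesssim\mu^{-1/3}$, the fixed positive fraction of each annulus where $\sin^2(ar)\ge1/2$, so its area is $\asymp\mu^{-2/3}$. Hence $\#\{\mu_k>\mu\}\gtrsim\mu^{-2/3}$, i.e. $\mu_k\gtrsim k^{-3/2}$.

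\textbf{Gaussian filter.} Now $|\hat\varphi(|\xi|)|^2\le C_m(1+|\xi|^2)^{-m}$ for every $m$. The comparison above gives $\mu_k\lesssim_m k^{-m}$ for all $m$, hence $\mu_k=o(k^{-\alpha})$ for all $\alpha$, and the sandwich yields $\mathcal{N}(\lambda)=o(\lambda^{-1/\alpha})$.
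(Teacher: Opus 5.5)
Your argument is correct in outline, and it takes a genuinely different route from the paper.

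\textbf{The paper's route.} The paper treats $T$ as a pseudodifferential operator of order $-3$ with symbol ``$\sim|\xi|^{-3}$''. It invokes the Weyl law for operators of order $-m$ on bounded domains (H\"ormander, Vol.~IV, Ch.~29) to get $\mu_k\sim k^{-3/2}$, and repeats this for every order in the Gaussian case. It then transfers the eigenvalue decay to $\mathcal N(\lambda)$ through the integral computation of Section~\ref{ssec:coefficient}.

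\textbf{Your route.} You never apply a Weyl law to $\sigma_T$ itself. Instead you view $T$ as the compression $\chi_\Omega\,\mathrm{Op}(\sigma_T)\,\chi_\Omega$ and use form monotonicity and min--max to compare with Bessel potentials, which needs only the classical Sobolev-embedding asymptotics. You split off the $|\xi|^{-1}$ singularity at the origin as an analytic-kernel piece and recombine via Ky Fan, and you get the lower bound from a Landau-type family of modulated bumps.

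\textbf{What each approach buys.} Yours gives rigor exactly where the paper is heuristic. For the boxcar filter, $\sigma_T(\xi)=\sin^2(a|\xi|)/(a^2|\xi|^3)$ vanishes on the circles $a|\xi|\in\pi\mathbb{Z}$ and is singular at $\xi=0$. Its derivatives also gain no decay. So it is neither elliptic nor a classical symbol, and $\sigma_T\sim|\xi|^{-3}$ holds only as an upper bound; the cited Weyl law does not apply verbatim. Your sandwich inequality also yields $\mathcal N(\lambda)\gtrsim\lambda^{-2/3}$, whereas the computation in Section~\ref{ssec:coefficient} is written only in the upper direction. The paper's route buys brevity, and it hints at exponential rather than merely superpolynomial decay in the Gaussian case.

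\textbf{Points to tighten in the lower bound.}
\begin{itemize}
\item Diagonal dominance must hold for the matrix $\langle T\psi_j,\psi_k\rangle$, not just for the $L^2$ Gram matrix. Schwartz decay of $\hat\psi$ and Peetre's inequality give $|\langle T\psi_j,\psi_k\rangle|\lesssim(1+|\xi_j|)^{-3}(1+|\xi_j-\xi_k|)^{-N}$. The region $|\xi|\le1$ contributes only $O\big((1+|\xi_j|)^{-N}(1+|\xi_k|)^{-N}\big)$.
\item The diagonal bound $\langle T\psi_j,\psi_j\rangle\gtrsim(1+|\xi_j|)^{-3}$ should come from the uniformly positive average of $\sin^2(a|\xi|)$ over the $O(1)$ frequency spread of $\hat\psi(\cdot-\xi_j)$. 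It should not come from the pointwise condition $\sigma_T(\xi_j)>2\mu$, because for large $a$ that spread straddles several zero circles.
\item With this averaging argument, any lattice in the ball $|\xi|\le\rho$ works. The superlevel-set restriction, and with it the issue of placing lattice points inside thin annuli, then becomes unnecessary.
\end{itemize}
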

\begin{proof}
    In the first case, we have $\hat{\varphi}(\sigma) = \frac{\sin(\sigma)}{\sigma}$, which entails that the symbol $\sigma_T(\xi) \sim |\xi|^{-3}$ for $|\xi|\rightarrow \infty$. As a consequence of the Weyl law, it is well-known (see \cite[Volume IV, Chapter 29]{hormander1983analysis}) that the sorted eigenvalues $\mu_k$ of a positive, self-adjoint, compact pseudodifferential operator of order $-m$ on bounded domain in $\mathbb{R}^d$ have an asymptotic decay of order $k^{-m/d}$ as $k \rightarrow \infty$. As a consequence, the eigenvalues of $T$ (and of $T_\mu$) decay polynomially as $k^{-3/2}$. \\
    We can now employ this result to the analysis of the effective dimension $\mathcal{N}(\lambda)$: 
    indeed, as shown in Section \ref{ssec:coefficient}, if $\mu_k \sim k^{-\frac{1}{b}}$ with $0<b<1$, then $\mathcal{N}(\lambda) \sim \lambda^{-b}$. This allows to conclude that, in this case, $\mathcal{N}(\lambda) \sim \lambda^{-\frac{2}{3}}$.
    
    In the second case, we have $\hat{\varphi}(\sigma) = e^{-c\sigma^2}$, entailing an exponential decay of the symbol $\sigma_T(\xi)\sim e^{-c|\xi|^2}$. Thus, the Weyl law can be employed for all $\alpha>0$, entailing a decay faster that any power $k^{-\alpha}$ (it is actually possible to prove that the eigenvalues reduce exponentially). As a consequence, we conclude that $\mathcal{N}(\lambda)$ grows slower than any power $\lambda^{-1/\alpha}$.
\end{proof}
As a consequence of Proposition \ref{prop:asymp}, the operator $R_\varphi$ satisfies Assumption \ref{ass:polydecay} with $b = \frac{2}{3}$ in the boxcar case, and with any $0<b<1$ in the Gaussian case. We claim that, despite the unfiltered Radon filter is not compliant with all the requested assumptions, in the context of application we may always consider the presence of a small convolution along the $s$ variable. In Remark \ref{rem:ASTRA} we also show that this phenomenon also occurs in the numerical approximation of the Radon transform. 
We carefully point out that, whenever in applications we fix a finite resolution for objects in the output space $\HH$, the covariance $T_\mu$ of the kernel operator becomes a finite-rank operator (of rank $N$), thus
\[
\mathcal{N}(\lambda) =\operatorname{tr}((T+\lambda I)^{-1}T) = \sum_{k=1}^N \frac{\mu_k}{\mu_k+\lambda} \leq N, 
\]
thus Assumption \ref{ass:polydecay} is satisfied by any $b>0$. Nevertheless, according to the size of the discretization and to the level of noise on the measurement, it might be relevant to evaluate $\lambda$ only in a range sufficiently far away from $0$. As a consequence, the behaviour of $\mathcal{N}(\lambda)$ might be interesting also at a pre-asymptotic regime, which might be similar to the asymptotic regime of the non-discretized version of the forward operator. 

\begin{remark}
\label{rem:ASTRA}
To compute the Radon transform there is a vast variety of algorithms which use different approximations of the computation for the line integral. Among the many software available, ASTRA~\cite{astra} allows to simulate the parallel beam geometry (corresponding to the Radon transform) using a so called \textit{strip} model in place of the \textit{line} one. This means that the weight of a ray/pixel pair is given by the area of the intersection of the pixel and the ray, considered as a strip with the same width as a detector pixel, effectively averaging this value. This essentially amounts to convolving the Radon transform with a boxcar filter. 
To numerically inspect the asymptotic behaviour of $\mathcal{N}(\lambda)$, we compute the SVD of the matrix associated with the filtered Radon operator (generated with the ASTRA toolbox using parallel beam geometry and the \textit{strip} model) for a $128 \times 128$ target with angular views in $[0,\pi)$. Because in such a case $A$ is already a large scale object, the SVD is computed using sketching techniques, which uses a low-rank approximation and might be slightly inaccurate for smaller singular values.
We show the decay rate in Figure~\ref{fig:SVDdecay} (red curve).
\\
As expected, due to the finite resolution of the considered discrete setting, the singular values abruptly vanish as $k$ increases. In the pre-asymptotic regime, though, a polynomial decay $k^{\beta}$ can be observed. In the unfiltered case, we expect a decay $\sigma_k \sim k^{-0.25}$, associated with the asymptotics $\mu_k \sim k^{-0.5}$ for the eigenvalues of the covariance, which would entail $\mathcal{N}(\lambda) \sim \lambda^{-2}$, where $b=2$ is not compliant with Assumption \ref{ass:polydecay}. We can observe that the slope of the red curve is slightly steeper than $\beta=-0.25$, and such a behaviour is emphasized if the binning effect is increased, namely, if the discretization of the $s$ variable gets rougher, as reported in the blue curve.
\end{remark}

\begin{figure}\label{fig:SVDdecay}
\begin{center}
\includegraphics[width=0.65\textwidth]{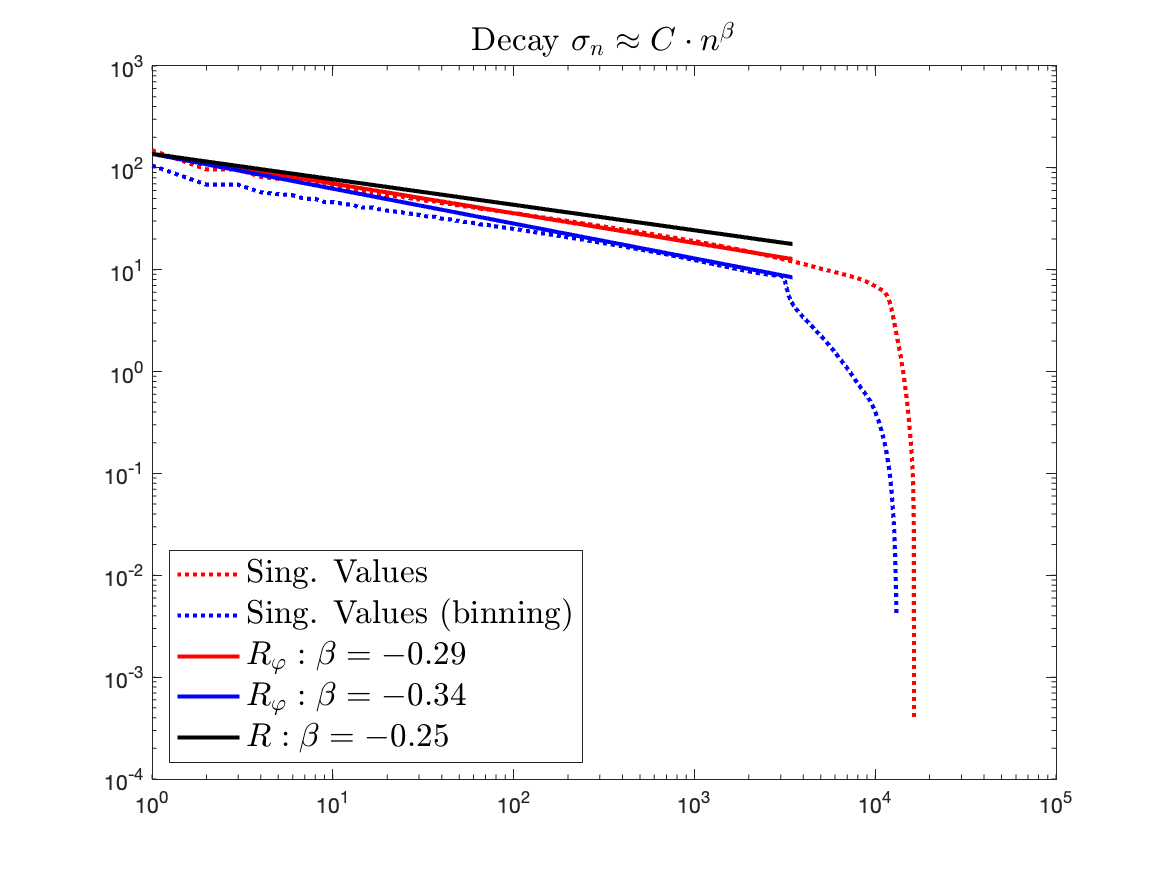} 
\end{center}
\caption{SVD decay for a $128 \times 128$ target with angular views in $[0,\pi)$ using the ASTRA toolbox using parallel beam geometry and the \textit{strip} model.}
\end{figure}

We can finally detail the decay obtained in Corollary \ref{cor:err.upper.power} for some specific examples, in which a specific decay of $\sigma_n(\fr)$ is expected. Recall that if $\sigma_n(\fr) = \mathcal{O}(n^{-\beta})$ with $\beta > \frac{1}{2}$, than, as described in \ref{ssec:smoothing}, Assumption \ref{Ass:var.sour} is satisfied with $\phi(s) = s^r$, where $r=\frac{1}{2}-\frac{1}{4\beta}$. Moreover, Corollary \ref{cor:err.upper.power} prescribes that, if $\lambda \sim n^{-\frac{1-r}{1+b-br}}$, then $\| f_{\mathbf{z},\lambda} -\fr\|_{\ell^1} \sim n^{-\frac{r}{1+b-br}}$ with high probability.
\begin{itemize}
    \item Let $S$ be a wavelet synthesis operator and assume the image $S\fr$ has bounded variation: in this case, $\beta = \frac{1}{2}$ (which implies $r=0$); thus, we do not get explicit convergence rates.
    \item Let $S$ be a shearlet synthesis operator and assume that $S\fr$ is a cartoon-like image: in this case, $\beta = 1$ (which implies $r=\frac{1}{4}$); then, if the boxcar filter is used on the Radon transform ($b = \frac{2}{3}$), choosing $\lambda \sim n^{-\frac{1}{2}}$ leads to $\| f_{\mathbf{z},\lambda} -\fr\|_{\ell^1} \sim n^{-\frac{1}{6}}$; if a Gaussian filter is used on the Radon transform ($b \approx 0$), choosing $\lambda \sim n^{-\frac{3}{4}}$ leads to $\| f_{\mathbf{z},\lambda} -\fr\|_{\ell^1} \sim n^{-\frac{1}{4}}$.
    \item Let $S$ be any synthesis operator such that image representation $\fr$ is sparse (namely, only finitely many coefficients of $\fr$ are different from $0$). In this case, $\sigma_n(\fr)$ decays faster than $n^{-\beta}$ for any $\beta>0$, which means that we can consider $r \approx \frac{1}{2}$. In the boxcar-filter case ($b = \frac{2}{3}$), choosing $\lambda \sim n^{-\frac{3}{8}}$, we get $\| f_{\mathbf{z},\lambda} -\fr\|_{\ell^1} \sim n^{-\frac{3}{8}}$; in the Gaussian filter case ($b \approx 0$), choosing $\lambda \sim n^{-\frac{1}{2}}$, we get $\| f_{\mathbf{z},\lambda} -\fr\|_{\ell^1} \sim n^{-\frac{1}{2}}$.
\end{itemize}

\subsection{Direct Learning Through a Synthesis Operator}
\label{ssec:direct}

We now examine the degenerate case $G = I$ of the operator
$A = G \circ S$ discussed in Section~\ref{ssec:smoothing}, in which $A$ reduces to the synthesis operator $S:\ell^1 \to \HH$. 
Although this setting involves no operator-induced ill-posedness, it is far from trivial. Rather, it isolates the precise role of the spectral decay of the embedding $\ip:\HH \hookrightarrow \Hu$ and shows that Assumption~\ref{Ass:A-Lip} reduces to an \emph{equality}. Consequently, it provides the cleanest possible verification of the abstract framework and serves as a direct point of comparison with classical (vector-valued) kernel ridge regression~\cite{Caponnetto07} and Lasso-type sparse learning~\cite{CandesTao07,BRT09}.

\paragraph{Diagonalizing the Embedding.}

Assume the canonical embedding $\ip:\HH\hookrightarrow\Hu$ is injective (no nonzero element of $\HH$ vanishes $\mu$-a.e.). By Assumption~\ref{ass:kernel}, $\ip$ is a Hilbert--Schmidt operator; in particular, $\ip$ is compact.  By the singular value decomposition of a compact injective operator, there exist an orthonormal basis $(\phi_{j})_{j\ge1}$ of $\HH$, an orthonormal system $(e_{j})_{j\ge1}$ in $\LL$, and numbers $w_{1}\ge w_{2}\ge\cdots>0$ with $w_{j}\to0$, such that
\begin{equation}\label{eq:svd}
   \ip\phi_{j}=w_{j}\,e_{j},
   \qquad
   \ip^{*}e_{j}=w_{j}\,\phi_{j}.
\end{equation}
Consequently, the uncentered covariance operator $\tp:=\ip^{*}\ip$ is diagonalized by the same basis,
\begin{equation}\label{eq:Tmu}
   \tp\phi_{j}=s_{j}\,\phi_{j},
   \qquad
   s_{j}:=w_{j}^{2},
\end{equation}
so that the eigenvalues of $\tp$ are precisely the squared singular values
$s_{j}=w_{j}^{2}$.

We define the
forward operator $A$ as the \emph{synthesis identity} $S$:
\begin{equation}\label{eq:syn.op}
    A = S :\ell^1 \to \HH, \qquad
    A(f) = \sum_j f_j\phi_j, \quad \text{with} \quad f=(f_j)_{j\ge1}\in\ell^1.   
\end{equation}

This is precisely the choice $G=I$ in the composite operator $A=G\;\circ \;S$
of Section~\ref{ssec:smoothing}, with $S$ the synthesis map associated with the eigenbasis $(\phi_j)$.

Using \eqref{eq:syn.op} we obtain
\begin{equation}\label{eq:recon.norm-direct}
\norm{A(f)}_{\HH}=\norm{\sum_j f_j\phi_j}_{\HH}=\paren{\sum_j f_j^2}^{\frac{1}{2}}=\norm{f}_{\ell^2} \le \norm{f}_{\ell^1}, 
\end{equation}
by orthonormality of $(\phi_j)$ in $\HH$. Hence, $A:\ell^1 \to \HH$ is bounded.

Using \eqref{eq:svd}, \eqref{eq:syn.op} and the linearity of $A$,
\begin{equation}\label{eq:prednorm-direct}
    \norm{A(f)}_{\Hu}^2
    = \norm{\ip A(f)}_{\Hu}^2
    = \Big\|\sum_j f_j\, w_j\, e_j\Big\|_{\Hu}^2
    = \sum_{j\ge1} w_j^2\, f_j^2
    = \norm{f}_{\underline w,2}^2,
\end{equation}
by orthonormality of $(e_j)$ in $\Hu$. Hence the prediction norm coincides
\emph{exactly} with the weighted norm of Section~\ref{sec:setting}, with weight sequence $w_j = \sqrt{s_j}$. 

\paragraph{Verification of the Structural Assumptions on $A$.}

Since $A$ is linear, \eqref{eq:recon.norm-direct} gives, for \emph{all}
$f,\tilde f\in\ell^1$,
\begin{equation*}
\norm{A(f)-A(\tilde f)}_{\HH}=\norm{A(f-\tilde f)}_{\HH}\le\norm{f-\tilde f}_{\ell^1}
\end{equation*}
The map $A=S$ is injective and Lipschitz, so Assumption~\ref{ass:Lipschitz} holds with  $\ella=1$. 

Since $A$ is linear, \eqref{eq:prednorm-direct} gives, for \emph{all}
$f,\tilde f\in\ell^1$,
\begin{equation}\label{eq:bilip-direct}
    \norm{A(f)-A(\tilde f)}_{\Hu} = \norm{f-\tilde f}_{\underline w,2}.
\end{equation}
Both parts of Assumption~\ref{Ass:A-Lip} therefore hold simultaneously, with the \emph{same} weight $\underline w$ and constant $L=1$. As discussed after Assumption~\ref{Ass:A-Lip}, this two-sided bound with a common weight is exactly what is needed to obtain matching upper and lower convergence rates via Corollary~\ref{cor:minimax-optimality}; here it is automatic. Further, Theorem~\ref{kt.character} implies that, whenever $\fp\in k_t$ for some $t\in(0,1)$, the variational source condition (Assumption~\ref{Ass:var.sour}) holds with
\begin{equation}\label{eq:r-direct}
    \phi(\theta)=\theta^r, \qquad r=\frac{1-t}{2-t}\in\Big(0,\tfrac12\Big).
\end{equation}

Hence, the structural assumptions on the operator A are verified. Consequently, the error bounds established in Corollary~\ref{cor:err.upper.power} remain applicable when A is the synthesis operator defined in \eqref{eq:syn.op}.

\section{Discussion}
\label{sec:discussion}

The results of this paper establish a complete statistical theory for 
$\ell^1$-regularized nonlinear inverse learning in vector-valued reproducing 
kernel Hilbert spaces. We discuss the significance of the main contributions, 
place them in the context of related literature, and outline directions for 
future work.

\paragraph{Optimality of the rates.}
The upper convergence rates of Corollary~\ref{cor:err.upper.power} and 
Theorem~\ref{thm:uniform.rate.lambda*} and the minimax lower bounds of 
Theorem~\ref{err.lower.bound.p.para.corrected} together show that the rate
\[
n^{-\frac{2r - pr + p - 1}{p(1 + b - br)}}
\]
in the interpolated norm $\|\cdot\|_{\underline{u},p}$ is minimax optimal 
over the prior class $\mathcal{P}_{r,b}$. This optimality is achieved by 
the $\ell^1$-regularized estimator~\eqref{eq:l1reg} with the explicit, 
closed-form regularization parameter $\lambda_* = n^{-(1-r)/(1+b-br)}$. 
The matching of upper and lower bounds confirms that neither the 
regularization scheme, nor the choice of $\lambda_*$ introduces unnecessary 
suboptimality: the rates reflect the fundamental information-theoretic 
difficulty of the problem as captured by $r$ and $b$.

\paragraph{Role of the smoothness index $r$.}
The source smoothness parameter $r \in (0,1)$ enters through the variational 
source condition (Assumption~\ref{Ass:var.sour}) with $\phi(t) = t^r$.
This condition is broadly verifiable: as Theorem~\ref{kt.vsc} 
shows, membership in the approximation space $k_t$ implies a variational 
source condition with $r = (1-t)/(2-t)$, and the space $k_t$ is itself 
characterized by a polynomial decay of the best $n$-term approximation errors 
(Lemma~\ref{lem:sigma.kt}). 


\paragraph{Role of the effective dimension $b$.}
The effective dimension exponent $b \in (0,1)$ encodes the polynomial spectral 
decay $\mathcal{N}(\lambda) \lesssim \lambda^{-b}$ of the covariance operator 
$T_\mu$ (Assumption~\ref{ass:polydecay}). A smaller value of $b$ indicates 
faster spectral decay, meaning that the learning problem has fewer effective 
degrees of freedom at each regularization scale. Consequently, smaller $b$ 
leads to faster convergence: the variance term $C''^2 \lambda^{-b}/n$ in the 
bounds decreases more rapidly as $\lambda \to 0$. In the CT example, the 
Gaussian filter yields super-polynomially decaying eigenvalues, so 
Assumption~\ref{ass:polydecay} holds for every $b \in (0,1)$, and in the limit 
$b \to 0$ the rate approaches $n^{-r}$. By contrast, the boxcar filter 
produces eigenvalue decay $\mu_k \sim k^{-3/2}$ (for $d=2$), leading to 
$b = 2/3$ and strictly slower rates. This illustrates a fundamental trade-off 
between spectral preservation (boxcar) and statistical efficiency (Gaussian).


\paragraph{Nonlinearity and ill-posedness.}
A key feature of the present framework is that it accommodates 
\emph{nonlinear} forward operators while retaining quantitative 
convergence guarantees. The Lipschitz condition 
(Assumption~\ref{ass:Lipschitz}) is the structural requirement on $A$; it ensures the local stability of the inverse map and enables the error-bounding arguments in Lemma~\ref{lemma:err}. 
The weighted Lipschitz property (Assumption~\ref{Ass:A-Lip}) 
further connects the sequence-space distance $\|\cdot\|_{\underline{w},2}$ 
to the prediction error $\|A(f) - A(\fp)\|_{\Hu}$, 
which is essential for the lower bound construction and the consistency results. The framework 
excludes infinitely smoothing operators (e.g., the backward heat equation 
or electrical impedance tomography), as the lower Lipschitz bound in 
Assumption~\ref{Ass:A-Lip} fails in those cases. Extending the theory 
to such severely ill-posed problems, possibly under logarithmic source 
conditions or Sobolev-type variational inequalities, is an important 
direction for future work.

\paragraph{Relation to compressed sensing and LASSO.}
In the finite-dimensional setting with $A$ linear and $\XX$ a fixed design, 
$\ell^1$-regularization reduces to the LASSO~\cite{Tibshirani96}, 
and well-known results give exact support recovery and $\ell^2$ estimation rates under restricted isometry or irrepresentability conditions. Recently, similar results have been derived also in an infinite-dimensional setting for ill-posed problems, assuming quasi-diagonalizability of the forward operator and suitable coherence bounds (\cite{alberti2025compressed,alberti2025compressed2}).
The present paper extends this program to the nonparametric, 
infinite-dimensional, nonlinear, and random-design setting, where the 
relevant complexity measure is the effective dimension rather than the 
sparsity level alone. In particular, our rates depend on both $r$ 
(solution smoothness) and $b$ (spectral complexity), and the analysis 
does not require incoherence or restricted isometry conditions, relying 
instead on the variational source condition and the probabilistic bounds 
of Proposition~\ref{main.bound}.

\paragraph{Comparison with RKHS regularization.}
The classical Tikhonov estimator with RKHS regularization, analyzed in the statistical inverse learning setting by~\cite{BlaMuc16,Rastogi20}, achieves convergence rates governed by Hilbert-space source conditions and spectral regularization theory. In contrast, the present work considers an $\ell^1$ penalty and variational source conditions formulated directly in a Banach-space setting. The resulting reconstruction rate
$n^{-r/(1+b-br)}$
depends explicitly on the sparsity parameter $r$ and the effective-dimension exponent $b$, and is minimax optimal over the model class $\mathcal P_{r,b}$. This reflects the fact that $\ell^1$ regularization exploits sparse structure in the unknown solution and therefore yields a substantially different statistical behavior than classical Hilbert-space regularization.

\paragraph{Parameter choice.}
The regularization parameter $\lambda_* = n^{-(1-r)/(1+b-br)}$ depends 
on the unknown parameters $r$ and $b$. In practice, one can estimate $b$ 
from data via spectral approximations of $T_\mu$, and $r$ can be 
calibrated using cross-validation or Lepskii-type balancing principles. 
Corollary~\ref{err.upper.bound} provides an alternative, 
dual-function-based characterization of the optimal $\lambda_*$ that 
is amenable to data-driven selection without explicit knowledge of $r$; 
this is analogous to the discrepancy principle in classical regularization 
theory. A rigorous adaptive procedure, along with finite-sample 
guarantees, would be a valuable contribution to the practical implementation 
of the method.

\paragraph{Extensions.}
Several natural extensions of the present framework merit investigation.
\begin{enumerate}[(i)]
  \item \textbf{Linearization and Fréchet derivatives.} 
    When $A$ is Fréchet differentiable, one may consider iteratively 
    reweighted $\ell^1$ schemes or proximal gradient methods, 
    and the present convergence analysis could serve as a reference 
    for analyzing each linearized step.
  \item \textbf{Online and streaming algorithms.} 
    The batch estimator~\eqref{eq:l1reg} requires access to all $n$ 
    observations simultaneously. Stochastic gradient or online 
    proximal algorithms adapted to the $\ell^1$ setting would be 
    practically important, and one would need to analyze whether the optimal rates remain achievable in an online fashion.
  \item \textbf{Stochastic noise models.} 
    The present analysis assumes sub-Gaussian noise. 
    Extending to heavier-tailed or correlated noise, or to Gaussian 
    white noise in infinite-dimensional output spaces (currently excluded), 
    would broaden the applicability of the theory.


\end{enumerate}



\section{Conclusion}

We developed a statistical learning theory for $\ell^1$-regularized nonlinear inverse problems in vector-valued reproducing kernel Hilbert spaces. Under variational source conditions and polynomial effective-dimension growth, we established almost-sure consistency, non-asymptotic high-probability convergence rates, and matching minimax lower bounds. The resulting rates are shown to be minimax optimal over a broad family of probability distributions $\PP_{r,b}$ characterized by the smoothness parameter $r$ and the effective-dimension exponent $b$.

A further contribution of the paper is the connection between sparse approximation theory and statistical inverse learning. Through the approximation spaces $k_t$, we showed that polynomial decay of best $n$-term approximation errors implies the variational source conditions required for the statistical analysis, thereby linking sparsity models directly to convergence-rate exponents. Applications to coefficient identification problems and sparse computed tomography demonstrate that the abstract assumptions can be verified in concrete inverse problems and lead to explicit convergence guarantees.

These results provide a rigorous bridge between deterministic sparsity regularization, statistical inverse learning, and vector-valued kernel methods, and establish a general framework for analyzing sparse nonlinear inverse problems under random sampling.

\appendix

\section{Probabilistic Upper Bound}

In this section, we present high-probability estimates that characterize the stochastic perturbation behavior of empirical quantities arising in the regularized learning framework. The results stated below are standard in the analysis of statistical inverse problems and are derived from \citep{Rastogi20}. These probabilistic upper bounds play a key role in controlling deviations between empirical and population-level operators under random sampling.

\begin{proposition}\label{main.bound}
Suppose Assumptions~\ref{ass:fp}--\ref{ass:kernel} hold. Then, for any~$n \in \NN$ and~$0 < \eta < 1$, each of the following inequalities holds with confidence at least~$1 - \eta$:
\begin{equation*}
\Xi_{\zz} := \norm{\sx^*\VE}_{\Ht} 
\leq 2\!\left(\frac{\kappa M}{n} + \sqrt{\frac{\kappa^2 \Sigma^2}{n}}\right)\!
\log\!\left(\frac{2}{\eta}\right),
\end{equation*}
\begin{equation*}
\Theta_{\zz} := \norm{(\tp + \la I)^{-1/2} \sx^* \VE}_{\Ht} 
\leq 2\!\left(\frac{\kappa M}{n\sqrt{\la}} + \sqrt{\frac{\Sigma^2 \mathcal{N}(\la)}{n}}\right)\!
\log\!\left(\frac{2}{\eta}\right),
\end{equation*}
and
\begin{equation*}
\Psi_{\xx} := \norm{(\tp + \la I)^{-1/2} (\tx - \tp)}_{\mathcal{L}(\Ht)} 
\leq 2\!\left(\frac{\kappa^2}{n\sqrt{\la}} + \sqrt{\frac{\kappa^2 \mathcal{N}(\la)}{n}}\right)\!
\log\!\left(\frac{2}{\eta}\right).
\end{equation*}
\end{proposition}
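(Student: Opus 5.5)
Each of the three bounds will follow from a Bernstein inequality for sums of i.i.d.\ centered random variables in a separable Hilbert space, in the form due to Pinelis and Sakhanenko as used in \cite{Caponnetto07,Rastogi20}. Suppose $\xi_1,\dots,\xi_n$ are i.i.d.\ in a Hilbert space $H$ with $\EE\xi_i=0$ and $\EE\|\xi_i\|^m\le \frac{1}{2}m!\,\Sigma_0^2 M_0^{m-2}$ for all $m\ge 2$. Then with probability at least $1-\eta$
\[
\Big\|\frac1n\sum_{i=1}^n\xi_i\Big\|_H\le 2\Big(\frac{M_0}{n}+\sqrt{\frac{\Sigma_0^2}{n}}\Big)\log\frac{2}{\eta}.
\]
So the whole task reduces to writing each quantity as such an empirical mean and checking the moment condition with the stated constants $M_0,\Sigma_0$.

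\textbf{First two bounds.} Take $\xi_i:=K_{x_i}\varepsilon_i$, where $\varepsilon_i=y_i-A(\fp)(x_i)$, so that $\sx^*\VE=\frac1n\sum_i K_{x_i}\varepsilon_i$. Centering follows from Assumption~\ref{ass:fp}, since conditionally on $x_i$ the noise has mean zero. For $\Xi_\zz$ we use $\|K_x\|_{\mathcal L(\YY,\HH)}\le\|K_x\|_{HS}\le\kappa$, which gives $\|\xi_i\|^m\le\kappa^m\|\varepsilon_i\|^m$. Assumption~\ref{ass:noise}, after expanding the exponential, gives $\int\|\varepsilon\|^m\,d\rho(\cdot|x)\le\frac12 m!\,\Sigma^2M^{m-2}$. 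Hence $M_0=\kappa M$ and $\Sigma_0^2=\kappa^2\Sigma^2$ (absorbing a harmless factor $\kappa^{m-2}$ vs.\ $\kappa^m$ into the constant convention as in \cite{Rastogi20}).

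For $\Theta_\zz$ apply the same argument to $\xi_i:=(\tp+\la I)^{-1/2}K_{x_i}\varepsilon_i$. The sup bound is $\|(\tp+\la I)^{-1/2}K_x\|\le\kappa/\sqrt\la$, which gives $M_0=\kappa M/\sqrt\la$. For the variance factor we bound, for $m\ge2$,
\[
\EE\|\xi\|^m\le\Big(\tfrac{\kappa}{\sqrt\la}\Big)^{m-2}\EE_x\Big[\|(\tp+\la I)^{-1/2}K_x\|_{HS}^2\,\EE[\|\varepsilon\|^m\mid x]\Big].
\]
The key trace identity is
\[
\EE_x\|(\tp+\la I)^{-1/2}K_x\|_{HS}^2=\tr\big((\tp+\la I)^{-1}\tp\big)=\mathcal N(\la),
\]
which uses $\tp=\int K_xK_x^*\,d\mu$. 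Inserting the noise moments then yields $\Sigma_0^2=\Sigma^2\mathcal N(\la)$.

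\textbf{Third bound.} Work in the Hilbert space of Hilbert--Schmidt operators on $\HH$, whose norm dominates the operator norm. Set $\xi_i:=(\tp+\la I)^{-1/2}(K_{x_i}K_{x_i}^*-\tp)$, so that $\frac1n\sum\xi_i=(\tp+\la I)^{-1/2}(\tx-\tp)$ and $\EE\xi_i=0$. Boundedness comes from $\|(\tp+\la I)^{-1/2}K_xK_x^*\|_{HS}\le\frac{\kappa}{\sqrt\la}\|K_x\|_{HS}\le\kappa^2/\sqrt\la$, with centering costing at most a factor $2$ that is absorbed in the constant. For the second moment, $\EE\|(\tp+\la I)^{-1/2}K_xK_x^*\|_{HS}^2\le\kappa^2\,\EE\|(\tp+\la I)^{-1/2}K_x\|_{HS}^2=\kappa^2\mathcal N(\la)$; centering only decreases this. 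A bounded variable satisfies the Bernstein moment condition with these $M_0,\Sigma_0$, and the stated bound follows.

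\textbf{Expected obstacle.} The delicate step is the moment bookkeeping in $\Theta_\zz$: interleaving the conditional noise moments with the $x$-dependent factor so that exactly $\mathcal N(\la)$ appears and not $\kappa^2/\la$. This requires taking the conditional expectation over $\varepsilon$ first, using the measurability in Assumption~\ref{ass:kernel}(ii) to justify integration, and then applying the trace identity. Everything else is direct substitution into the Pinelis inequality.
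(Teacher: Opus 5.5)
Your proof is correct and follows essentially the argument the paper relies on. The paper gives no proof of its own and defers to \cite{Rastogi20}, which applies the Pinelis--Sakhanenko Bernstein inequality to these same three Hilbert-space-valued variables, with the trace identity $\EE_x\|(\tp+\la I)^{-1/2}K_x\|_{HS}^2=\mathcal N(\la)$ as the variance proxy.

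Two small clarifications, neither of which is a gap. In the first bound nothing needs absorbing, since $\kappa^m=\kappa^2\kappa^{m-2}$ gives $M_0=\kappa M$ and $\Sigma_0=\kappa\Sigma$ exactly. In the third bound the centering factor is absorbed by combining $\|\xi\|\le 2\kappa^2/\sqrt{\la}$ with $2^{m-2}\le m!/2$, so the stated constants are recovered without loss.
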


\begin{corollary}\label{cor:psi.bound}
Suppose Assumptions~\ref{ass:fp}--\ref{ass:kernel} and condition~\eqref{l.la.condition} hold. Then, the following bound holds with confidence at least~$1 - \eta$:
\begin{equation}\label{Psi.bound}
\Psi_{\xx} + \Theta_{\zz} \;\lesssim\; \sqrt{\frac{\la^{-b}}{n}}\log\!\left(\frac{4}{\eta}\right).
\end{equation}
\end{corollary}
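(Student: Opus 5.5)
The plan is to apply the two bounds of Proposition~\ref{main.bound} for $\Theta_{\zz}$ and $\Psi_{\xx}$, each at confidence level $\eta/2$, and combine them by a union bound. This gives confidence $1-\eta$ and the logarithmic factor $\log(4/\eta)$. Adding the two estimates yields
\[
\Psi_{\xx}+\Theta_{\zz}\le 2\left(\frac{\kappa M+\kappa^2}{n\sqrt{\la}}+(\Sigma+\kappa)\sqrt{\frac{\mathcal N(\la)}{n}}\right)\log\!\left(\frac{4}{\eta}\right).
\]
It then remains to show that both terms in the bracket are dominated by a multiple of $\sqrt{\la^{-b}/n}$.

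\textbf{The second term.} This follows directly from Assumption~\ref{ass:polydecay}: since $\mathcal N(\la)\le C_\beta\la^{-b}$, we get $\sqrt{\mathcal N(\la)/n}\le \sqrt{C_\beta}\,\sqrt{\la^{-b}/n}$.

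\textbf{The first term.} This is where condition~\eqref{l.la.condition} enters, and it is the step that needs some care. I plan to write
\[
\frac{1}{n\sqrt{\la}}=\frac{1}{\sqrt{n}}\cdot\frac{1}{\sqrt{n\la}}.
\]
The goal is then to show that $1/\sqrt{n\la}\lesssim \la^{-b/2}$, i.e.\ that $n\la\gtrsim\la^{b}$. The natural route is to use~\eqref{l.la.condition}, $n\la\ge\mathcal N(\la)$, together with a lower bound $\mathcal N(\la)\gtrsim\la^{b}$. However, Assumption~\ref{ass:polydecay} only gives an upper bound on $\mathcal N(\la)$. So instead I would use two cruder facts:
\begin{enumerate}[(i)]
\item For $0<\la\le 1$, one has $\mathcal N(\la)\ge \frac{s_1}{s_1+\la}\ge \frac{s_1}{s_1+1}=:c_0>0$, where $s_1=\|\tp\|$ (assuming $\tp\neq0$). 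Hence $n\la\ge c_0$.
\item Since $\la\le1$ and $b>0$, we have $\la^{-b}\ge 1$.
\end{enumerate}
Combining these gives $1/\sqrt{n\la}\le c_0^{-1/2}\le c_0^{-1/2}\la^{-b/2}$, and therefore
\[
\frac{1}{n\sqrt{\la}}\le c_0^{-1/2}\sqrt{\frac{\la^{-b}}{n}}.
\]

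Collecting the constants, which depend on $\kappa$, $M$, $\Sigma$, $C_\beta$ and $\|\tp\|$, yields~\eqref{Psi.bound}. The only delicate point is the first term. The lower bound on $\mathcal N(\la)$ obtained from the top eigenvalue of $\tp$ handles it, and this is exactly why the restriction $\la\le1$ in~\eqref{l.la.condition} is needed.
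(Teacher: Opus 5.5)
Your proof is correct and follows essentially the paper's route: a union bound over the two estimates of Proposition~\ref{main.bound} at level $\eta/2$, then $n\la \ge \mathcal N(\la) \ge c>0$ for $\la\le 1$ to absorb the $1/(n\sqrt{\la})$ terms, and finally Assumption~\ref{ass:polydecay}. The differences are cosmetic. The paper uses the constant $\mathcal N(1)$, via monotonicity of $\mathcal N$, to bound $1/(n\sqrt{\la})$ by $\mathcal N(1)^{-1}\sqrt{\mathcal N(\la)/n}$, whereas you use $s_1/(s_1+1)\le\mathcal N(1)$ together with $\la^{-b}\ge 1$; and you are right to invoke Assumption~\ref{ass:polydecay} explicitly, which the paper's statement leaves implicit.
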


\begin{proof}
Since~$\mathcal{N}(\la)$ is a decreasing function of~$\la$ and~$\la \leq 1$, the condition~\eqref{l.la.condition} implies that
\begin{equation*}
\mathcal{N}(1) \leq \mathcal{N}(\la) \leq n\la.
\end{equation*}
Consequently,
\begin{equation*}
\frac{1}{n\sqrt{\la}} 
\leq \frac{1}{n\sqrt{\la}}\frac{\mathcal{N}(\la)}{\mathcal{N}(1)}
= \frac{1}{\mathcal{N}(1)} 
\sqrt{\frac{\mathcal{N}(\la)}{n\la}} 
\sqrt{\frac{\mathcal{N}(\la)}{n}} 
\leq \frac{1}{\mathcal{N}(1)} \sqrt{\frac{\mathcal{N}(\la)}{n}}.
\end{equation*}
Substituting this bound into the inequalities of Proposition~\ref{main.bound}, we obtain with probability at least~$1 - \eta$:
\begin{align*}
\Theta_{\zz} 
&\leq 2\!\left(\frac{\kappa M}{\mathcal{N}(1)} + \Sigma\right)\!
\sqrt{\frac{\mathcal{N}(\la)}{n}}\log\!\left(\frac{4}{\eta}\right),\\[4pt]
\Psi_{\xx} 
&\leq 2\!\left(\frac{\kappa^2}{\mathcal{N}(1)} + \kappa\right)\!
\sqrt{\frac{\mathcal{N}(\la)}{n}}\log\!\left(\frac{4}{\eta}\right).
\end{align*}
Combining these bounds yields inequality~\eqref{Psi.bound} with
\begin{equation*}
C' = 2\!\left[\!\left(\frac{\kappa^2}{\mathcal{N}(1)} + \kappa\right)
+ \left(\frac{\kappa M}{\mathcal{N}(1)} + \Sigma\right)\!\right].
\end{equation*}
\end{proof}

The above results establish uniform probabilistic control over the empirical quantities~$\Xi_{\zz}$, $\Theta_{\zz}$, and~$\Psi_{\xx}$. These estimates are instrumental in deriving high-probability error bounds for the regularized estimator and in analyzing the stability of the inverse learning scheme.

\section{Boundedness of the Regularized Solution}

In this section, we establish that the regularized solution~$\fz$ remains bounded in the hypothesis space~$\Ho$. Specifically, we show that the deviation~$\norm{\fz - \fp}_{\Ho}$ between the regularized estimator~$\fz$ and the true function~$\fp$ is upper bounded by a constant multiple of~$\norm{\fp}_{\Ho}$ under suitable conditions.  

\begin{proposition}\label{prop:bounded.solution}
Suppose Assumptions~\ref{ass:fp}--\ref{ass:Lipschitz} hold. Then, for any~$n \in \NN$ and~$0 < \eta < 1$, the following bound holds with confidence at least~$1 - \eta$:
\begin{equation*}
\norm{\fz - \fp}_{\Ho} \leq 4 \norm{\fp}_{\Ho},
\end{equation*}
provided that
\begin{equation*}
8 \ella \kappa (M + \Sigma) \frac{1}{\la \sqrt{n}} \log\!\left(\frac{4}{\eta}\right) \leq 1.
\end{equation*}
\end{proposition}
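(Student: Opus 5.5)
The plan is to follow the boundedness step already used in the consistency proof of Theorem~\ref{thm:asconsistency}, but to make it quantitative with confidence $1-\eta$.

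\textbf{Basic inequality.} Start from the minimality $J_\la(\fz)\le J_\la(\fp)$. Expanding the squares with $\yy=\sx A(\fp)+\VE$ gives \eqref{c1}. Dropping the nonnegative empirical residual term yields \eqref{ineq:fz-bound}:
\[
\norm{\fz}_{\Ho}\le \frac{2}{\la}\norm{\sx^*\VE}_{\Ht}\norm{A(\fz)-A(\fp)}_{\Ht}+\norm{\fp}_{\Ho}.
\]

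\textbf{Lipschitz step.} Apply Assumption~\ref{ass:Lipschitz} to get $\norm{A(\fz)-A(\fp)}_{\Ht}\le \ella\norm{\fz-\fp}_{\Ho}$. Then write $\norm{\fz}_{\Ho}\ge \norm{\fz-\fp}_{\Ho}-\norm{\fp}_{\Ho}$. It is better to work directly with $D:=\norm{\fz-\fp}_{\Ho}$ than with $\norm{\fz}_{\Ho}$, since this avoids a second triangle inequality. The result is
\[
D\le \frac{2\ella}{\la}\,\Xi_\zz\, D+2\norm{\fp}_{\Ho}.
\]

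\textbf{Probabilistic bound.} Invoke the first inequality of Proposition~\ref{main.bound}, applied with $\eta/2$ so that the stated $\log(4/\eta)$ appears. Using $\kappa M/n\le \kappa M/\sqrt n$ for $n\ge1$, and that $\Sigma^2$ under the square root gives $\kappa\Sigma/\sqrt n$, this yields with confidence $1-\eta$
\[
\Xi_\zz\le \frac{2\kappa(M+\Sigma)}{\sqrt n}\log\!\left(\frac4\eta\right).
\]
Therefore
\[
\frac{2\ella}{\la}\Xi_\zz\le \frac{4\ella\kappa(M+\Sigma)}{\la\sqrt n}\log\!\left(\frac4\eta\right)\le\frac12
\]
under the stated hypothesis $8\ella\kappa(M+\Sigma)\log(4/\eta)/(\la\sqrt n)\le1$.

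\textbf{Conclusion.} Absorbing the factor gives $D\le \tfrac12 D+2\norm{\fp}_{\Ho}$, that is, $D\le 4\norm{\fp}_{\Ho}$. The absorption requires $D<\infty$, which holds because $\fz\in\Ho$ by definition of the minimization problem \eqref{eq:l1reg}.

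\textbf{Expected obstacle.} The main point of care is only the constant bookkeeping: the factor $2$ from the cross term, the conversion $\log(2/\eta)\to\log(4/\eta)$, and the bound $1/n\le1/\sqrt n$ must combine to match exactly the threshold $8\ella\kappa(M+\Sigma)$. If the constants do not line up exactly, I would absorb the slack into the factor $\tfrac12$. Even a contraction factor $\le \tfrac12$ suffices for the bound $4\norm{\fp}_{\Ho}$, and there is room to spare if the bound $\norm{\fz}\ge D-\norm{\fp}$ is used as above.
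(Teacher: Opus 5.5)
Your proposal is correct and follows the paper's proof step for step. Both arguments use the basic inequality \eqref{ineq:fz-bound}, the Lipschitz bound, the single triangle inequality $\norm{\fz-\fp}_{\Ho}\le\norm{\fz}_{\Ho}+\norm{\fp}_{\Ho}$, the estimate $\norm{\sx^*\VE}_{\Ht}\le 2\kappa(M+\Sigma)n^{-1/2}\log(4/\eta)$ from Proposition~\ref{main.bound}, and absorption with contraction factor $\tfrac12$. The constants match exactly, and your remark that the absorption step needs $\norm{\fz-\fp}_{\Ho}<\infty$ (true because $\fz\in\Ho$) is a small point the paper leaves implicit.
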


\begin{proof}
By the Lipschitz continuity of the nonlinear operator~$A$ (Assumption~\ref{ass:Lipschitz}), we have for all~$f \in \DD(A)\cap \Ho$,
\begin{equation*}
\norm{A(f) - A(\fp)}_{\Ht} \leq \ella \norm{f - \fp}_{\Ho}.
\end{equation*}

Using this property in inequality~\eqref{ineq:fz-bound}, we obtain
\begin{equation}\label{eq:fl}
\norm{\fz}_{\Ho} \leq \frac{2\ella}{\la} \norm{\fz - \fp}_{\Ho} \norm{\sx^* \VE}_{\Ht} + \norm{\fp}_{\Ho}.
\end{equation}

Applying the triangle inequality yields
\begin{align}\label{fz.fl.bd}
\norm{\fz - \fp}_{\Ho}
&\leq \norm{\fz}_{\Ho} + \norm{\fp}_{\Ho} \nonumber \\
&\leq \frac{2\ella}{\la} \norm{\sx^* \VE}_{\Ht} \norm{\fz - \fp}_{\Ho} + 2 \norm{\fp}_{\Ho}.
\end{align}

From Proposition~\ref{main.bound}, we know that with confidence~$1 - \eta$,
\begin{equation*}
\norm{\sx^* \VE}_{\Ht} \leq 2 \kappa (M + \Sigma) \frac{1}{\sqrt{n}} \log\!\left(\frac{4}{\eta}\right).
\end{equation*}

Substituting this into the previous inequality gives
\begin{align*}
\norm{\fz - \fp}_{\Ho}
&\leq 4 \ella \kappa (M + \Sigma) \frac{1}{\la \sqrt{n}} \log\!\left(\frac{4}{\eta}\right) \norm{\fz - \fp}_{\Ho}
+ 2 \norm{\fp}_{\Ho}.
\end{align*}

If the condition
\begin{equation*}
8 \ella \kappa (M + \Sigma) \frac{1}{\la \sqrt{n}} \log\!\left(\frac{4}{\eta}\right) \leq 1
\end{equation*}
is satisfied, then the first term on the right-hand side can be absorbed into the left-hand side, yielding
\begin{equation*}
\frac{1}{2} \norm{\fz - \fp}_{\Ho} \leq 2 \norm{\fp}_{\Ho},
\end{equation*}
which simplifies to the desired bound
\begin{equation*}
\norm{\fz - \fp}_{\Ho} \leq 4 \norm{\fp}_{\Ho}.
\end{equation*}
\end{proof}

The result confirms that the regularized estimator~$\fz$ remains bounded in~$\Ho$, provided that the regularization parameter~$\la$ and sample size~$n$ satisfy a mild condition. This boundedness plays a key role in ensuring the stability and consistency of the regularized inverse learning algorithm.

\section{Interpolation Scale of Weighted Sequence Spaces}

We begin by introducing a scale of spaces that interpolates between the classical 
$\ell^1$ space and the weighted Hilbert space $\ell^2_{\underline{w}}$ appearing in our setting. 
For $p \in (0,2]$, we define the weights
\begin{equation*}
(\underline{u}_p)_j := w_j^{\frac{2p-2}{p}}.
\end{equation*}
The corresponding weighted sequence space $\ell^p_{\underline{u}_p}$ consists of all 
$f = (f_j)_{j \ge 1}$ such that
\[
\|f\|_{\underline{u}_p, p}
:= \left( \sum_{j=1}^\infty (\underline{u}_p)_j^{\,p} |f_j|^p \right)^{1/p}
< \infty.
\]

The next result provides an interpolation inequality that connects these spaces.

\begin{proposition}[Interpolation inequality]\label{prop:interpolation}
Let $p, q, s \in (0,2]$ and $\theta \in (0,1)$ satisfy
\[
\frac{1}{p} = \frac{1-\theta}{q} + \frac{\theta}{s}.
\]
Then, for all $f \in \ell^q_{\underline{u}_q} \cap \ell^s_{\underline{u}_s}$, the following inequality holds:
\begin{equation}\label{eq:interp.ineq}
\|f\|_{\underline{u}_p, p}
\le 
\|f\|_{\underline{u}_q, q}^{\,1-\theta}
\|f\|_{\underline{u}_s, s}^{\,\theta}.
\end{equation}
\end{proposition}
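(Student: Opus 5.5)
The plan is to write the weighted $p$-norm as a single sum and apply Hölder's inequality with exponents adapted to $\theta$. Set $a_j := w_j^{2}|f_j|$ and $b_j := w_j^{-2}$. Since $(\underline{u}_p)_j^{\,p} = w_j^{2p-2}$, we get
\[
\|f\|_{\underline{u}_p,p}^p=\sum_j w_j^{2p-2}|f_j|^p=\sum_j b_j\,a_j^{p}.
\]
The same identity holds with $p$ replaced by $q$ or $s$. Hence every norm in the family is an $L^r$-norm of the single sequence $(a_j)$ with respect to the common measure $\nu(\{j\}) = w_j^{-2}$. In that form, $\|f\|_{\underline{u}_r,r} = \|a\|_{L^r(\nu)}$. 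The claim then reduces to the classical log-convexity of $L^r$ norms on a fixed measure space (Riesz--Thorin/Lyapunov), applied to exponents $q,s$ with $1/p=(1-\theta)/q+\theta/s$.

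The proof of this log-convexity goes as follows. Write $a_j^p = a_j^{p(1-\theta)}\,a_j^{p\theta}$. Then apply Hölder's inequality in $L^1(\nu)$ with conjugate exponents $\alpha = \frac{q}{p(1-\theta)}$ and $\beta=\frac{s}{p\theta}$. These satisfy $1/\alpha+1/\beta = p\big(\tfrac{1-\theta}{q}+\tfrac{\theta}{s}\big)=1$ exactly by the hypothesis, and $\alpha,\beta\ge1$ follows from the same relation. This gives
\[
\sum_j b_j a_j^p \le \Big(\sum_j b_j a_j^{q}\Big)^{\frac{p(1-\theta)}{q}}\Big(\sum_j b_j a_j^{s}\Big)^{\frac{p\theta}{s}}.
\]
Taking $p$-th roots yields \eqref{eq:interp.ineq}.

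The work here is mostly bookkeeping. The one point needing care is the exponent check: the weight $(\underline{u}_r)_j^{\,r}|f_j|^r$ must factor as $w_j^{-2}(w_j^2|f_j|)^r$ for every $r$, which the definition \eqref{eq:omega_t} is designed to give. Exponents $r<1$ pose no problem, since Hölder is applied to nonnegative sums with $\alpha,\beta\ge1$ regardless of whether the individual quasi-norms are norms. Finiteness of the right-hand side follows from $f\in\ell^q_{\underline{u}_q}\cap\ell^s_{\underline{u}_s}$. The degenerate case where one factor vanishes forces $a\equiv0$, and then the inequality holds trivially.
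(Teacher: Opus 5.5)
Your proposal is correct and is essentially the paper's proof: Hölder's inequality with the conjugate exponents $\frac{q}{(1-\theta)p}$ and $\frac{s}{\theta p}$, followed by taking $p$-th roots. Your rewriting $w_j^{2r-2}|f_j|^r = w_j^{-2}(w_j^{2}|f_j|)^r$, viewed as an $L^r$ norm with respect to the common measure $w_j^{-2}$, simply makes explicit the splitting of the weights that the paper's one-line argument leaves to the reader.
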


\begin{proof}
We apply Hölder’s inequality with conjugate exponents 
$\frac{q}{(1-\theta)p}$ and $\frac{s}{\theta p}$.  
Indeed, we have
\[
\|f\|_{\underline{u}_p, p}^p
= \sum_{j=1}^\infty w_j^{\,2p-2}\,  |f_j|^p
\le 
\left( \sum_{j=1}^\infty w_j^{\,2q-2}\,  |f_j|^q \right)^{\frac{(1-\theta)p}{q}}
\left( \sum_{j=1}^\infty w_j^{\,2s-2}\,  |f_j|^s \right)^{\frac{\theta p}{s}}.
\]
Taking the $p$-th root on both sides yields the desired result~\eqref{eq:interp.ineq}.
\end{proof}

\bibliographystyle{plain}
\bibliography{bib_ell1}
\end{document}